\definecolor{SchoolColor}{rgb}{0.6471, 0.1098, 0.1882} 
\definecolor{forestgreen}{rgb}{0.0, 0.27, 0.13}
\title{Polynomial-Time Solutions for ReLU Network Training: A Complexity Classification via Max-Cut and Zonotopes}
\author{
Yifei Wang \\
  Department of Electrical Engineering\\
  Stanford University\\
  Stanford, CA 94305, USA\\
  \texttt{wangyf18@stanford.edu} \\
  \and
  Mert Pilanci \\
  Department of Electrical Engineering\\
  Stanford University\\
  Stanford, CA 94305, USA\\
  \texttt{pilanci@stanford.edu} \\
}
\begin{document}
\maketitle

\begin{abstract}
We investigate the complexity of training a two-layer ReLU neural network with weight decay regularization. Previous research has shown that the optimal solution of this problem can be found by solving a standard cone-constrained convex program. Using this convex formulation, we prove that the hardness of approximation of ReLU networks not only mirrors the complexity of the Max-Cut problem but also, in certain special cases, exactly corresponds to it. In particular, when $\epsilon\leq\sqrt{84/83}-1\approx 0.006$, we show that it is NP-hard to find an approximate global optimizer of the ReLU network objective with relative error $\epsilon$ with respect to the objective value. Moreover, we develop a randomized algorithm which mirrors the Goemans-Williamson rounding of semidefinite Max-Cut relaxations. To provide polynomial-time approximations, we classify training datasets into three categories: (i) For orthogonal separable datasets, a precise solution can be obtained in polynomial-time. (ii) When there is a negative correlation between samples of different classes, we give a polynomial-time approximation with relative error $\sqrt{\pi/2}-1\approx 0.253$. (iii) For general datasets, the degree to which the problem can be approximated in polynomial-time is governed by a geometric factor that controls the diameter of two zonotopes intrinsic to the dataset. To our knowledge, these results present the first polynomial-time approximation guarantees along with first hardness of approximation results for regularized ReLU networks. 
\end{abstract}



\noindent \textbf{Keywords:} Neural networks, convex optimization, Max-Cut, NP-hardness

\section{Introduction}

The meteoric rise of deep learning has confirmed the optimization and generalization abilities of neural networks trained using simple gradient-based heuristics. This success, however, presents a stark contrast to our theoretical understanding of these non-convex optimization problems, which suggests that neural network training is NP-hard. This discrepancy raises important questions about why these hard problems are routinely solved in practice with relatively straightforward optimization techniques. In this work, we aim to provide a nuanced characterization of the hardness of neural network training by establishing an unexplored connection to the Max-Cut problem.

In recent work \cite{nnacr,tcrnn}, the training of 2-layer ReLU neural networks with weight-decay regularization was reformulated as a cone-constrained convex program. Moreover, the set of globally optimal solutions can be characterized by the optimal set of the convex program \cite{wang2020hidden}. However, these approaches have focused on the global optimization of the objective  and have not considered the complexity of approximating the optimal value of the non-convex training problem. 

In this paper, we focus on determining the weights of a two-layer ReLU neural network which minimize an empirical loss. We show that approximating the optimal value of this optimization program is NP-hard for a worst-case instance of the dataset. The difficulty of the problem is mainly due to the dual constraints in the optimization program, which involves a subproblem with the same formulation of the well-known NP-hard Max-Cut problem. However, we present polynomial-time approximation schemes under certain assumptions on the training dataset. Our contributions can be summarized as follows:

\begin{itemize}
\item For structured datasets including orthogonal separable datasets, we can find the exact solution in polynomial-time by solving a convex program.
\item For datasets which have negative correlation between samples from different classes, we can find an $1-\sqrt{2/\pi}$ approximation of the problem in polynomial-time.
\item For general datasets, there is a polynomial-time algorithm whose approximation factor depends on a geometric ratio which is determined by the dataset. 
\end{itemize}

We summarize our results in Figure \ref{fig:tikz}.

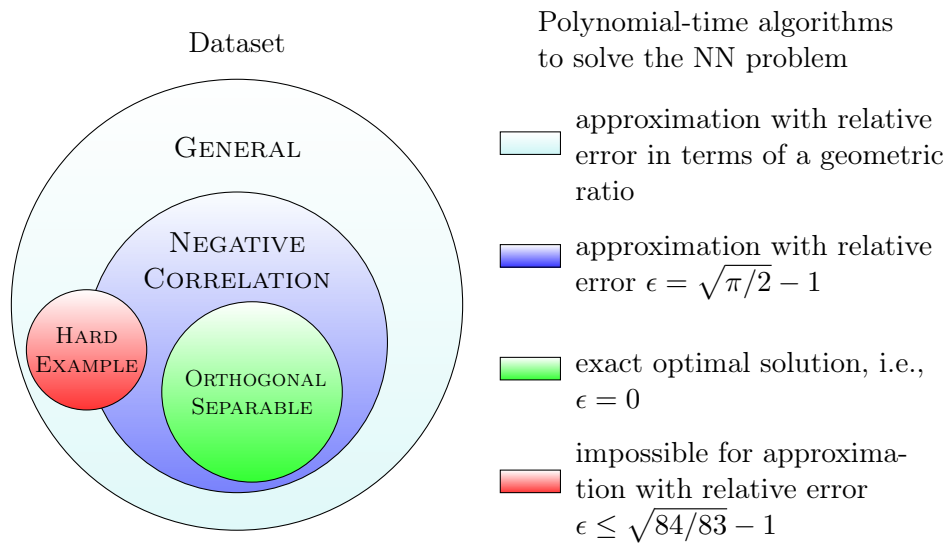
\begin{figure}[htbp]
\centering
\def\firstcircle{(0,0) circle (3.0cm)}
\def\secondcircle{(360:3.9cm) circle (3.0cm)}
\begin{tikzpicture}
\begin{scope}[fill opacity=0.60,text opacity=1,white,align=center,text width=2.5cm]
\draw[style=shade, top color=white, bottom color=cyan!80!black!20 ][black] \firstcircle 
node[shift={(0,2.1)},text=black]{\textsc{General}};
\begin{scope}[fill opacity=0.60,text opacity=1,white,align=center,text width=2.5cm]
\draw[style=shade, top color=white, bottom color=blue!80!][black] {(0,-0.5) circle (2.0cm)}
node[shift={(0,1.1)},text=black]{\textsc{Negative Correlation}};
\begin{scope}[scale=0.8,transform shape,align=center,white,fill opacity=1]
\draw[style=shade, top color=white, bottom color=green!80][black](0.25,-1.45)circle(1.50cm) 
 node[text width=2.25cm,text=black] {\textsc{Orthogonal\\ Separable}};
 \draw[style=shade, top color=white, bottom color=red!80][black](-2.5,-0.75)circle(1cm) 
 node[text width=2.25cm,text=black] {\textsc{Hard\\ Example}};
\end{scope}
\end{scope}
\end{scope}
\node[] at (0,3.5) {Dataset};
\draw[style=shade, top color=white, bottom color=cyan!80!black!20 ][black] (3.5,2)  {rectangle ++ (0.8,0.3)};
\draw[style=shade, top color=white, bottom color=blue!80!][black] (3.5,0.5)  {rectangle ++ (0.8,0.3)};
\draw[style=shade, top color=white, bottom color=green!80!][black] (3.5,-1)  {rectangle ++ (0.8,0.3)};
\draw[style=shade, top color=white, bottom color=red!80!][black] (3.5,-2.5)  {rectangle ++ (0.8,0.3)};
\begin{scope}[fill opacity=0.60,text opacity=1,white,align=left,text width=5cm]
\node[] at (7,2) [text=black]{approximation with relative error in terms of a geometric ratio};
\node[] at (7,0.5) [text=black]{approximation with relative error $\epsilon=\sqrt{\pi/2}-1$};
\node[] at (7,-1) [text=black]{exact optimal solution, i.e., $\epsilon=0$};
\node[] at (7,-2.5) [text=black]{impossible for approximation with relative error $\epsilon\leq \sqrt{84/83}-1$};
\node[] at (6.5,3.5) [text=black]{Polynomial-time algorithms to solve the NN problem};
\end{scope}
\end{tikzpicture}
\caption{Difficulty of approximation of the neural network max-margin problem using a polynomial-time algorithm. }\label{fig:tikz}
\end{figure}

\subsection{Related work}

Previous works including \cite{froese2023training,arora2016understanding} mainly studied NP hardness on the empirical minimization problem of two-layer ReLU neural networks without regularization. \cite{dey2020approximation,goel2020tight} studies the hardness of training 2-layer ReLU neural networks with single neuron. \cite{boob2022complexity} studies neural networks with output ReLU neurons and prove the NP hardness with two-neurons. \cite{bertschinger2022training} shows that training 2-layer ReLU networks with two output and two input neurons is NP hard. For the squared loss, assuming the existence of local pseudorandom generators, \cite{daniely2021local} proves the hardness of learning 2-layer ReLU networks with $w(1)$ neurons and the lower bound of $\Omega(n^{\beta k})$ to learn a 2-layer ReLU network with $k$ neurons where $n$ is the number of data and $\beta>0$ is an absolute constant. In the same setting, \cite{daniely2023computational} shows the hardness of learning 2-layer ReLU networks where random noises are added to the network's parameter and the input distribution. In the same setting, \cite{chen2022hardness} provides a statistical query lower bounds for learning 2-layer ReLU networks with respect to Gaussian inputs assuming the hardness of Learning with Rounding. Under specific assumptions on the input distribution and network's weight, polynomial-time algorithms to approximate 2-layer ReLU networks can be obtained \cite{awasthi2021efficient,bakshi2019learning,ge2018learning,ge2019learning,janzamin2015beating}. In particular, \cite{brutzkus2018sgd} show that linear separability of the dataset is sufficient. In comparison, we study the training problem of 2-layer ReLU neural networks with weight decay regularization and consider a generic convex loss including hinge loss and the max-margin problem. Because of the existence of regularization, we can analyze the problem by its convex dual problem, which is equivalent to the primal problem with a sufficiently large number of neurons.

\subsection{Overview of our results}

Let $X\in\mbR^{n\times d}$ and $y\in\{-1,1\}^n$ be the training data and training label. We focus on the following two-layer neural network architectures:

\begin{itemize}
\item ReLU activation: 
$$
f^\text{ReLU}(X;\Theta)=\sum_{i=1}^m(Xw_{1,i})_+w_{2,i},
$$
where $\Theta=(W_1,w_2)$, $W_1\in\mbR^{d\times m}$ and $w_2\in\mbR^m$. The weight decay regularization is given by $R(\Theta)=\frac{1}{2}(\|W_1\|_F^2+\|w_2\|_2^2)$.
\item Gated ReLU (gReLU) activation: 
$$
f^\text{gReLU}(X;\Theta)=\sum_{i=1}^m\mbI(Xh_i\geq0)Xw_{1,i}w_{2,i},
$$
where $\Theta=(H,W_1,w_2)$, $H,W_1\in\mbR^{d\times m}$ and $w_2\in\mbR^m$. Here $\mbI(s)=1$ if the statement $s$ holds and $\mbI(s)=0$ otherwise. The weight decay regularization is given by $R(\Theta)=\frac{1}{2}(\|W_1\|_F^2+\|w_2\|_2^2)$. 
\end{itemize}

We first consider the following problem for binary classification, referred to as max-margin problem
\begin{equation}\label{maxmargin:primal}
\begin{aligned}
   P_\text{non-cvx}= \min_{\Theta}&\; R(\Theta)\\
   \text{ s.t. }& y_if^\text{ReLU}(x_i;\Theta)\geq 1, \forall i\in[n].
\end{aligned}
\end{equation}
Here $f^\text{ReLU}(X;\Theta)$ is the output of a ReLU network. We formulate the neural network training problem as the following decision problem.

\texttt{Approx-Primal}
\begin{itemize}
\item Input: data pair $X\in\mbR^{n\times d}$ and $y\in\{-1,1\}^n$, relative error $\epsilon\geq 0$.
\item Goal: find a value $p$ such that $P_\text{non-cvx}\leq p\leq (1+\epsilon)P_\text{non-cvx}$ and find a neural network with parameter $\Theta$ such that $p=R(\Theta)$ and $y_if(X;\Theta)\geq 1$ for all $i\in[n]$. 
\end{itemize}

We next consider the hinge loss minimization:
\begin{equation}\label{maxmargin:primal_hinge}
\begin{aligned}
   P_\text{non-cvx}^\text{hinge}= \min_{\Theta}&\; \sum_{i=1}^n\max\{0,1-y_if^\text{ReLU}(x_i;\Theta)\} +\beta R(\Theta),
\end{aligned}
\end{equation}
where $\beta>0$ is a regularization parameter. The associated decision problem is presented as follows.

\texttt{Approx-Primal-Hinge}
\begin{itemize}
\item Input: data pair $X\in\mbR^{n\times d}$ and $y\in\{-1,1\}^n$, $\beta>0$, relative error $\epsilon\geq 0$.
\item Goal: find a value $p$ such that $P_\text{non-cvx}^\text{hinge}\leq p\leq (1+\epsilon)P_\text{non-cvx}^\text{hinge}$ and find a neural network with parameter $\Theta$ such that $p=\sum_{i=1}^n\max\{0,1-y_if^\text{ReLU}(x_i;\Theta)\} +\beta R(\Theta)$. 
\end{itemize}

We can also extend the negative result to training problems with general loss function: \begin{equation}\label{maxmargin:primal_general}
\begin{aligned}
   P_\text{non-cvx}^\text{gen}= \min_{\Theta}&\; \sum_{i=1}^n\ell\pp{y_if^\text{ReLU}(x_i;\Theta)} +\beta R(\Theta),
\end{aligned}
\end{equation}
where $\beta>0$ is a regularization parameter. Here $\ell(z)$ takes the form $\ell(z)=\max_{\lambda:\lambda\geq 0}-\lambda^Tz+g(\lambda)$. Here, $g(\lambda):=-\ell^*(-\lambda)$ where $\ell^*$ is the convex conjugate of $\ell$.

\texttt{Approx-Primal-General}
\begin{itemize}
\item Input: data pair $X\in\mbR^{n\times d}$ and $y\in\{-1,1\}^n$, $\beta>0$, relative error $\epsilon\geq 0$.
\item Goal: find a value $p$ such that $P_\text{non-cvx}^\text{gen}\leq p\leq (1+\epsilon)P_\text{non-cvx}^\text{gen}$ and find a neural network with parameter $\Theta$ such that $p=\sum_{i=1}^n l\pp{y_if^\text{ReLU}(x_i;\Theta)} +\beta R(\Theta)$. 
\end{itemize}

We summarize our results for the difficulties of solving \texttt{Approx-Primal} and \texttt{Approx-Primal-Hinge} as follows. 

\begin{theorem}[main results]\label{thm:informal_primal}
\begin{itemize}
\item Suppose that $P\neq NP$ and set a relative error $\epsilon\leq \sqrt{84/83}-1$. Then, there does not exist a polynomial-time algorithm to solve \texttt{Approx-Primal} or \texttt{Approx-Primal-Hinge}. For a generic loss $\ell$ for which the conjugate $\ell^*(\lambda)=-g(-\lambda)$ satisfies $g(a\lambda)\geq a g(\lambda),\forall a>1$, there does not exist a polynomial-time algorithm to solve \texttt{Approx-Primal-General}.
\item Suppose that $(X,y)$ is orthogonal separable, i.e., 
\begin{equation*}
x_i^Tx_j\geq 0, \text{ if }y_i=y_j,\quad x_i^Tx_j\leq 0,  \text{ if } y_i\neq y_j.
\end{equation*}
Then, for arbitrary $\epsilon>0$, \texttt{Approx-Primal}, \texttt{Approx-Primal-Hinge} and \texttt{Approx-Primal-General} can be solved in polynomial-time.
\item Suppose that $(X,y)$ has negative correlation, i.e., $x_ix_j\leq 0$ for $y_i\neq y_j$. Let $\delta>0$. For $\epsilon=\sqrt{\pi/2}-1$, there exists a polynomial-time algorithm to solve the \texttt{Approx-Primal} problem or the \texttt{Approx-Primal-Hinge} problem with probability at least $1-\delta$. For generic loss functions whose conjugate $-\ell^*(-\lambda)=g(\lambda)$ satisfies $g(a\lambda)\geq C a g(\lambda),\forall a\in(0,1]$ with parameter $C\in(0,1]$, with $\epsilon=C^{-1}\sqrt{\pi/2}-1$, there exists a polynomial-time algorithm to solve the \texttt{Approx-Primal-General} problem with probability at least $1-\delta$.
\item Let $c\in(0,1)$. Suppose that $c\leq \min\{c^*,(c^*)^{-1}\}$, where $c^*$ is a geometric ratio defined in Definition \ref{def:ratio}. We can find a value $p$ to solve \texttt{Approx-Primal} and \texttt{Approx-Primal-Hinge} in polynomial-time with relative error $\epsilon=(1-c)^{-1}\sqrt{\pi/2} -1$. For generic loss functions whose conjugate $-\ell^*(-\lambda)=g(\lambda)$ satisfies  $g(a\lambda)\geq aC g(\lambda),\forall a\in(0,1]$ with parameter $C\in(0,1]$, we can find a value $p$ to solve \texttt{Approx-Primal-General} in polynomial-time with relative error $\epsilon=C^{-1}(1-c)^{-1}\sqrt{\pi/2} -1$.
\end{itemize}
\end{theorem}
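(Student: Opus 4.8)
The unifying tool is convex duality, so I would first invoke the cone-constrained convex reformulation of \eqref{maxmargin:primal} from the cited works and write out its Lagrangian dual. Because $R$ is quadratic and any single neuron can be rebalanced so that $\|w_{1,j}\|_2=|w_{2,j}|$, the primal value equals the optimum of a convex program whose dual reads $\max_{\lambda\geq0}\mathbf{1}^T\lambda$ subject to $\max_{\|u\|_2\le1}\bigl|\sum_i\lambda_i y_i (x_i^Tu)_+\bigr|\le1$. The inner maximization is the crux: writing $v=\lambda\odot y$ and fixing the ReLU sign pattern $s\in\{0,1\}^n$, the objective becomes $\langle u, X^T\mathrm{diag}(v)s\rangle$, whose maximum over $\|u\|_2\le1$ is $\|X^T\mathrm{diag}(v)s\|_2=\sqrt{s^TMs}$ with $M=\mathrm{diag}(v)XX^T\mathrm{diag}(v)$ — a square root of a Boolean quadratic of exactly the Max-Cut type. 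Establishing strong duality (primal $=$ dual) together with this inner equivalence is the common backbone for all four parts.

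To prove the hardness bullet I would reduce Max-Cut to \texttt{Approx-Primal}: given a graph $G=(V,E)$, construct a gadget dataset $(X,y)$ so that $\max_s s^TMs$ reproduces the Max-Cut objective of $G$. Since the inner dual value is the \emph{square root} $\sqrt{\max_s s^TMs}$ and $P_\text{non-cvx}$ equals the reciprocal of the extremal such value, one gets $P_\text{non-cvx}^2\propto 1/\mathrm{MaxCut}(G)$; consequently a $(1+\epsilon)$-approximation of $P_\text{non-cvx}$ yields only a $(1+\epsilon)^2$-approximation of the cut, which is the origin of the square root in $\sqrt{84/83}-1$. Feeding in H{\aa}stad's $16/17$ inapproximability and carefully accounting for the cut-independent mass of the gadget — which dilutes the $1/17$ Max-Cut gap down to the $1/83$ scale — pins the threshold at $(1+\epsilon)^2=84/83$. \texttt{Approx-Primal-Hinge} uses the same gadget with $\beta$ tuned so the hinge optimum inherits the gap, and for \texttt{Approx-Primal-General} the hypothesis $g(a\lambda)\ge a\,g(\lambda)$ for $a>1$ guarantees the dual objective cannot contract the gap under rescaling, so the reduction transfers.

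For the orthogonal-separable case the sign conditions $x_i^Tx_j\ge0$ (same class) and $\le0$ (different class) force the maximizing $u$ in the inner problem to activate on exactly one class, so $\max_s s^TMs$ is solved in closed form, the dual constraint set is polynomially described, and solving the resulting convex program recovers $P_\text{non-cvx}$ exactly ($\epsilon=0$), with the matching primal network read off from complementary slackness. For the negative-correlation case I would mirror Goemans-Williamson: relax the inner Max-Cut form to its semidefinite/convex surrogate, solve the tractable relaxed dual, then sample $u\sim\mathcal{N}(0,\Sigma)$ and invoke the identity $\mathbb{E}|x^Tu|=\sqrt{2/\pi}\,\|x\|_\Sigma$ to build a network whose regularizer is within a factor $\sqrt{\pi/2}$ of optimal, i.e. relative error $\sqrt{\pi/2}-1$; negative correlation is precisely the condition under which the rounded activations keep the correct sign and the rounded network stays primal-feasible. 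Boosting success to probability $1-\delta$ is a matter of independent repetition with a concentration/union bound, and the generic-loss version picks up the extra $C^{-1}$ factor by propagating $g(a\lambda)\ge Ca\,g(\lambda)$ through the dual.

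For general data I would quantify the relaxation loss by the geometric ratio $c^*$, which compares the diameters of the two zonotopes $\{Xu:\|u\|_2\le1\}$ cut out by the positive- and negative-class activation patterns; when $c\le\min\{c^*,1/c^*\}$ the relaxation gap is controlled by $(1-c)^{-1}$, so composing this with the $\sqrt{\pi/2}$ rounding factor gives relative error $(1-c)^{-1}\sqrt{\pi/2}-1$, and the conjugate condition again supplies the $C^{-1}$. The main obstacle throughout is the hardness reduction: one must engineer a single gadget whose convex-dual inner problem is genuinely a Max-Cut instance while simultaneously controlling the cut-independent mass tightly enough to transport H{\aa}stad's constant through the quadratic regularizer without slack — everything else (strong duality, the Gaussian-rounding identity, and the zonotope-diameter bookkeeping) is comparatively mechanical once the dual is in hand.
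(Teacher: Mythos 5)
Your overall architecture coincides with the paper's: pass to the convex dual, observe that the dual constraint $\max_{\|u\|_2\le 1}|\lambda^T(Xu)_+|$ is a square root of a Boolean quadratic $\max_{s}s^T\mathrm{diag}(v)XX^T\mathrm{diag}(v)s$ of Max-Cut type, solve it in closed form under orthogonal separability (yielding an exact two-neuron network via complementary slackness), replace it by its SDP relaxation with the $2/\pi$ Gaussian-rounding guarantee under negative correlation, and control the general case by a zonotope ratio plus the triangle inequality on Hausdorff distances. Those three positive bullets track the paper's Theorems on orthogonal separability, negative correlation, and the geometric bound essentially step for step (one cosmetic correction: the relevant zonotopes are generated by the rows of $X_\pm$ scaled by the optimal dual $\lambda^*$, i.e.\ images of the hypercube $[0,1]^{n_\pm}$, not the ellipsoid $\{Xu:\|u\|_2\le 1\}$; and the paper's rounded primal network is a \emph{gated} ReLU network built from sampled sign patterns $b^i=\mathbb{I}(Xh_i\ge 0)$, which requires a separate lemma certifying that each sampled $b^i$ is a realizable hyperplane arrangement).

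The genuine gap is in the hardness bullet. You propose an explicit gadget reduction mapping a graph $G$ to a dataset $(X,y)$ so that the inner quadratic reproduces the cut objective, and you derive the constant $84/83$ by "diluting" H{\aa}stad's $16/17$ through the gadget's cut-independent mass. Neither step is carried out: the gadget is never constructed, and constructing one is delicate because the matrix appearing in the dual constraint is $\mathrm{diag}(\lambda^*)XX^T\mathrm{diag}(\lambda^*)$ evaluated at the \emph{optimal} dual variable, which you do not control a priori; moreover $XX^T$ is forced to be positive semidefinite, so you cannot encode an arbitrary Max-Cut weight matrix. The paper avoids this entirely: it assumes a polynomial-time $(1+\epsilon)$-approximation of the primal exists, shows that the activation patterns of the returned network define a restricted convex program whose dual value sandwiches $D$ within $(1+\epsilon)$, and concludes that $\max_i\max_{u:\|u\|_2\le 1}|(\lambda^*)^TM_iXu|^2$ — computable in polynomial time from those patterns — approximates the NP-hard quantity $\max_{u:\|u\|_2\le1}|(\lambda^*)^T(Xu)_+|^2$ within $(1+\epsilon)^2\le 1+1/83$, contradicting a \emph{cited} $1/83$ inapproximability result for $\max_{z\in\{-1,1\}^n}z^TWz$; the square root enters exactly as you intuit, via the degree-one homogeneity of the dual objective against the degree-two homogeneity of the constraint, but your intermediate claim that $P_{\text{non-cvx}}$ "equals the reciprocal of the extremal value" is not correct as stated ($P_{\text{non-cvx}}=D=\max\lambda^Ty$ under the constraint, and only the rescaling $\lambda\mapsto\lambda/\sqrt{q_2}$ relates the two). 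As written, your hardness argument would not compile into a proof without supplying the gadget, controlling its optimal $\lambda^*$, and verifying the constant — the three things that are precisely the hard part.
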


The above result is presented in further detail and proved via Theorems \ref{thm:primal_neg}, \ref{thm:ortho_primal}, \ref{thm:neg_cor_primal} and \ref{thm:geo_bnd}.

\section{Convex duality of ReLU networks and connections to zonotopes}\label{sec:dual}

The convex duality plays an important role in analyzing the optimal layer weight of two-layer neural networks with ReLU activations \cite{nnacr, ergen2020aistats, ergen2020jmlr,ergen2021revealing}. To investigate on the complexity of approximating the optimal value of \eqref{maxmargin:primal}, we first study the convex dual problem of \eqref{maxmargin:primal}. According to \cite{wang2021parallel}, the dual problem of \eqref{maxmargin:primal} is given by
\begin{equation}\label{maxmargin:dual}
    D=\max \lambda^Ty \text{ s.t. } \diag(y)\lambda \geq 0, \max_{u:\|u\|_2\leq 1} |\lambda^T(Xu)_+|\leq 1.
\end{equation}
The dual problem of \eqref{maxmargin:primal_hinge} and \eqref{maxmargin:primal_general} is given by
\begin{equation}\label{maxmargin:dual_general}
    D^\text{gen}=\max g(\diag(y)\lambda) \text{ s.t. } \diag(y)\lambda \geq 0, \max_{u:\|u\|_2\leq 1} |\lambda^T(Xu)_+|\leq \beta.
\end{equation}
Here we write $g(\lambda)=\sum_{i=1}^ng(\lambda_i)$ for simplicity. From \cite{nnacr}, for $m\geq m^*$, where $m^*\leq n+1$ is a critical value, for ReLU networks, there is no duality gap. In other words, we have $P_\text{non-cvx}=D$, $P_\text{non-cvx}^\text{hinge}=D^\text{hinge}$ and $P_\text{non-cvx}^\text{gen}=D^\text{gen}$
Then, we also formulate the dual problems as decision problems. 

 \texttt{Approx-Dual}
\begin{itemize}
\item Input: data pair $X\in\mbR^{n\times d}$ and $y\in\{-1,1\}^n$, relative error $\epsilon\geq 0$.
\item Goal: find $\lambda\in\mbR^n$ such that $\diag(y)\lambda \geq 0, \max_{u:\|u\|_2\leq 1} |\lambda^T(Xu)_+|\leq 1$ and $\lambda^Ty\geq (1-\epsilon)D$?
\end{itemize}

 \texttt{Approx-Dual-Hinge}
\begin{itemize}
\item Input: data pair $X\in\mbR^{n\times d}$ and $y\in\{-1,1\}^n$, $\beta>0$, relative error $\epsilon\geq 0$.
\item Goal: find $\lambda\in\mbR^n$ such that $0\leq \diag(y)\lambda \leq 1, \max_{u:\|u\|_2\leq 1} |\lambda^T(Xu)_+|\leq \beta $ and $\lambda^Ty\geq (1-\epsilon)D^\text{hinge}$?
\end{itemize}

 \texttt{Approx-Dual-General}
\begin{itemize}
\item Input: data pair $X\in\mbR^{n\times d}$ and $y\in\{-1,1\}^n$, $\beta>0$, relative error $\epsilon\geq 0$.
\item Goal: find $\lambda\in\mbR^n$ such that $\diag(y)\lambda \geq 0, \max_{u:\|u\|_2\leq 1} |\lambda^T(Xu)_+|\leq \beta $ and $g(\diag(y)\lambda)\geq (1-\epsilon)D^\text{gen}$?
\end{itemize}

We summarize our main theoretical results for the difficulties of solving \texttt{Approx-Dual}, \texttt{Approx-Dual-Hinge} and \texttt{Approx-Dual-General} with different assumptions on the datasets as follows.

\begin{theorem}\label{thm:informal_dual}
\begin{itemize}
\item Suppose that $P\neq NP$. Set a relative error $\epsilon\leq 1-\sqrt{83/84}$. Then, there does not exist a polynomial-time algorithm to solve \texttt{Approx-Dual} or \texttt{Approx-Dual-Hinge}. For $g(\lambda)$ satisfies $g(a\lambda)\geq a g(\lambda),\forall a>1$, there does not exist a polynomial-time algorithm to solve \texttt{Approx-Dual-General}.
\item Suppose that $(X,y)$ is orthogonal separable. For arbitrary $\epsilon>0$, there exist a polynomial-time algorithm to solve \texttt{Approx-Dual}, \texttt{Approx-Dual-Hinge} and and \texttt{Approx-Dual-General}.
\item Suppose that $(X,y)$ has negative correlation. For $\epsilon=1-\sqrt{2/\pi}$, there exist a polynomial-time algorithm to solve \texttt{Approx-Dual} and \texttt{Approx-Dual-Hinge}. For $g(\lambda)$ satisfying $g(a\lambda)\geq aC g(\lambda),\forall a\in(0,1]$ with parameter $C\in(0,1]$, with $\epsilon=1-C^{-1}\sqrt{2/\pi}$, there exist a polynomial-time algorithm to solve \texttt{Approx-Dual-General}.
\item Let $c\in(0,1)$. Suppose that $c\leq \min\{c^*,(c^*)^{-1}\}$, where $c^*$ is a geometric ratio defined in Definition \ref{def:ratio}. Then, with relative error $\epsilon=1-(1-c)\sqrt{\pi/2}$, there exists a polynomial-time algorithm to solve \texttt{Approx-Dual} and \texttt{Approx-Dual-Hinge}.  For $g(\lambda)$ satisfying $g(a\lambda)\geq aC g(\lambda),\forall a\in(0,1]$ with parameter $C\in(0,1]$, with $\epsilon=1-(1-c)\sqrt{\pi/2}$, there exist a polynomial-time algorithm to solve \texttt{Approx-Dual-General}.
\end{itemize}
\end{theorem}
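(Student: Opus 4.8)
The plan is to funnel all four parts through a single object: the nonconvex quantity $M(\lambda):=\max_{u:\|u\|_2\leq 1}|\lambda^T(Xu)_+|$ appearing in the dual feasibility constraint. For a fixed activation pattern $D=\diag(\mbI(Xu\geq 0))$ one has $(Xu)_+=DXu$, so $\lambda^T(Xu)_+=(X^TD\lambda)^Tu$ and the inner maximization over the unit ball collapses to $\|X^TD\lambda\|_2$. Relaxing $D$ to an arbitrary Boolean diagonal therefore upper bounds $M(\lambda)$ by $\max_D \lambda^TDXX^TD\lambda$ under the square root, a Boolean quadratic (Max-Cut-type) program, which in turn admits a polynomial-time semidefinite relaxation $U(\lambda)\geq M(\lambda)$. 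Every part of the theorem reduces to controlling the integrality gap $U(\lambda)/M(\lambda)$ on the relevant dataset class, and I would organize the proof around this gap.

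First I would prove the hardness in part one by encoding an arbitrary Max-Cut instance into a pair $(X,y)$ so that resolving \texttt{Approx-Dual} with relative error below $1-\sqrt{83/84}$ forces one to separate the two sides of the Max-Cut inapproximability gap; the constant $83/84$ is the ratio produced by the gadget of the underlying NP-hardness reduction for Max-Cut. The box constraint $0\leq\diag(y)\lambda\leq 1$ handles \texttt{Approx-Dual-Hinge} with the same construction. For the general-loss variant, the hypothesis $g(a\lambda)\geq a\,g(\lambda)$ for $a>1$ is exactly what prevents a cheap rescaling of $\lambda$ from inflating the objective and washing out the gap, so the reduction survives.

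For the algorithmic parts I would instead solve the SDP-tightened dual
\[
\max\ \lambda^Ty\quad\text{s.t.}\quad\diag(y)\lambda\geq 0,\ \ U(\lambda)\leq 1,
\]
whose optimizer is automatically dual-feasible since $M(\lambda)\leq U(\lambda)\leq 1$; the only loss relative to $D$ is the integrality gap. For orthogonal separable data the realizable activation patterns decouple, $M(\lambda)$ can be evaluated exactly, and the tightened dual is a genuine convex program, giving part two for every $\epsilon>0$. For negatively correlated data I would run a Goemans--Williamson-style random Gaussian rounding of the relaxation, where the constant $\sqrt{2/\pi}=\mathbb{E}_{g\sim N(0,1)}|g|$ enters through the Gaussian absolute moment, to certify $M(\lambda)\geq\sqrt{2/\pi}\,U(\lambda)$, i.e. a gap of at most $\sqrt{\pi/2}$. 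Rescaling a true optimizer $\lambda^*$ (with $M(\lambda^*)\leq 1$, $\lambda^{*T}y=D$) by $\sqrt{2/\pi}$ then lands in the tightened feasible set with objective $\sqrt{2/\pi}\,D$, yielding $\epsilon=1-\sqrt{2/\pi}$; independent repetitions of the rounding push the success probability past $1-\delta$. The general-loss case replaces $\sqrt{2/\pi}$ by $C\sqrt{2/\pi}$ using the one-sided homogeneity $g(a\lambda)\geq aC\,g(\lambda)$ on $a\in(0,1]$.

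Part four interpolates between these regimes through the geometric ratio $c^*$ of Definition \ref{def:ratio}, which measures via the diameters of the two dataset zonotopes how far the realizable sign-pattern polytope departs from the idealized negatively correlated configuration; whenever $c\leq\min\{c^*,(c^*)^{-1}\}$ the rounding guarantee degrades to $M(\lambda)\geq(1-c)\sqrt{2/\pi}\,U(\lambda)$, delivering the stated $(1-c)$-deflated factor. The main obstacle is precisely this rounding/integrality-gap analysis: unlike textbook Max-Cut, the admissible diagonals $D$ must be realizable ReLU activation patterns, so the crux is to show that the Gaussian-rounded direction lands in a consistent pattern and that the resulting expected ratio can be read off from the zonotope diameters defining $c^*$.
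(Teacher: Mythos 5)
Your overall architecture matches the paper's: reduce everything to the dual constraint, upper-bound it by a Boolean quadratic program and its SDP relaxation, use Nesterov's $2/\pi$ theorem to rescale the true optimizer into the tightened feasible set, and invoke Max-Cut inapproximability for the negative result. But there is a genuine gap in how you handle the dual constraint $M(\lambda)=\max_{\|u\|_2\le 1}|\lambda^T(Xu)_+|$. You only establish the one-sided bound $M(\lambda)\le\sqrt{B(\lambda)}$, where $B(\lambda)=\max_{b\in\{0,1\}^n}\|X^T\diag(\lambda)b\|_2^2$ is the Boolean relaxation over \emph{all} diagonals. That direction makes the optimizer of your SDP-tightened dual feasible for the true dual, which is fine. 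It does \emph{not}, however, support the other half of your argument: to conclude that $\sqrt{2/\pi}\,\lambda^*$ is feasible for the tightened program you need $U(\lambda^*)\le\pi/2$, which via Nesterov requires $B(\lambda^*)\le 1$; but the true feasibility of $\lambda^*$ only gives $M(\lambda^*)\le 1$, and $M\le\sqrt{B}$ goes the wrong way. The same issue breaks your hardness reduction, which needs the dual constraint to \emph{equal} (not merely be dominated by) the Max-Cut value on the hard instances. The paper closes this by first proving an exact characterization of $M(\lambda)$ as a pair of maximin problems (equivalently, a Hausdorff distance between two zonotopes $K_+,K_-$; Lemma~\ref{lem:maximin} and Proposition~\ref{prop:hauss_dist}), and then showing that on the relevant dataset classes the maximin \emph{collapses to the pure Boolean maximum}: for $y=\bone$ one has $K_-=\{0\}$ so $M^2=B$ exactly (this is what makes the hardness instance work), and under negative correlation Proposition~\ref{prop:neg} gives $M^2=\max\{B_+,B_-\}$ after decoupling the two classes. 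Your proposal never derives this characterization, and without it the central rescaling step is unjustified.

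Two smaller discrepancies. First, for the dual theorem the Gaussian rounding is unnecessary: the paper's algorithm is a deterministic ellipsoid method applied to the SDP-constrained convex program (with a separation oracle for the set $\{\lambda: c_2(\lambda)\le 1\}$), and the $1-\delta$ success probability you invoke belongs to the \emph{primal} theorem, where rounding is used to generate activation patterns; the statement you are proving has no failure probability. Second, for part four the paper does not degrade the rounding constant: it keeps the full $\sqrt{2/\pi}$ and pays the factor $(1-c)$ separately, via the triangle inequality $H(K_+,K_-)\ge H(K_+,0)-H(K_-,0)$ and the surrogate problems $D_c^1,D_c^2$ of Proposition~\ref{prop:general}, which replace the maximin constraint by two decoupled pure-max constraints at radii $1$ and $c$. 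You correctly identify this step as the crux but leave it unproved, and the mechanism you sketch (a dataset-dependent loss inside the rounding guarantee) is not the one that works here.
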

The full version of Theorem \ref{thm:informal_dual} is stated and proved in Theorems \ref{thm:dual_neg}, \ref{thm:ortho_dual}, \ref{thm:neg_cor_dual} and \ref{thm:geo_bnd}.

For simplicity, we write $X_+=\{x_i\}_{i:y_i=1}\in\mbR^{n_+\times d}$ and $X_-=\{x_i\}_{i:y_i=-1}\in\mbR^{n_-\times d}$, where $n_+=\#\{i:y_i=1\}$ and $n_-=\#\{i:y_i=-1\}$. We also write $\lambda_+=\{\lambda_i\}_{i:y_i=1}\in\mbR^{n_+}$ and $\lambda_-=\{-\lambda_i\}_{i:y_i=-1}\in\mbR^{n_-}$. The following lemma characterizes the dual constraint $\max_{u:\|u\|_2\leq 1} |\lambda^T(Xu)_+|\leq 1$ by two maximin problems. 

\begin{lemma}\label{lem:maximin}
For fixed $\lambda\in\mbR^d$, the dual constraint $\max_{u:\|u\|_2\leq 1} |\lambda^T(Xu)_+|\leq 1$ is equivalent to 
\begin{equation}
\begin{aligned}
    \max_{b_+\in[0,1]^{n_+}}\min_{b_-\in[0,1]^{n_-}}\|X_+^T\diag(\lambda_+)b_+-X_-^T\diag(\lambda_-)b_-\|_2\leq 1,\\
    \max_{b_-\in[0,1]^{n_-}}\min_{b_+\in[0,1]^{n_+}}\|X_+^T\diag(\lambda_+)b_+-X_-^T\diag(\lambda_-)b_-\|_2\leq 1.
\end{aligned}
\end{equation}
\end{lemma}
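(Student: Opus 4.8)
The plan is to split the two-sided constraint into two one-sided maxima and then combine a variational representation of the ReLU with a single minimax exchange. First I would record that
\[
\max_{\|u\|_2\le1}\bigl|\lambda^T(Xu)_+\bigr|\le1
\quad\Longleftrightarrow\quad
\max_{\|u\|_2\le1}\lambda^T(Xu)_+\le1 \ \text{ and }\ \max_{\|u\|_2\le1}\bigl(-\lambda^T(Xu)_+\bigr)\le1,
\]
so it suffices to treat each sign separately; the two cases will produce the two maximin conditions by a symmetric argument.

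For the first case I would apply the pointwise identity $(t)_+=\max_{b\in[0,1]}bt$ coordinatewise to $(Xu)_+$, and split $\lambda^T(Xu)_+$ into its positive-label and negative-label blocks. Using dual feasibility $\diag(y)\lambda\ge0$ (equivalently $\lambda_+\ge0$ and $\lambda_-\ge0$) so that all scalar weights are nonnegative, the positive-label block $\sum_{i:y_i=1}\lambda_{+,i}(x_i^Tu)_+$ becomes $\max_{b_+\in[0,1]^{n_+}}u^TX_+^T\diag(\lambda_+)b_+$, whereas the negative-label block $-\sum_{i:y_i=-1}\lambda_{-,i}(x_i^Tu)_+$ becomes $\min_{b_-\in[0,1]^{n_-}}\bigl(-u^TX_-^T\diag(\lambda_-)b_-\bigr)$, since pulling the minus sign through the maximum turns it into a minimum. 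Because the two optimizations are over independent variables, for every fixed $u$ this gives
\[
\lambda^T(Xu)_+=\max_{b_+\in[0,1]^{n_+}}\min_{b_-\in[0,1]^{n_-}}u^T\bigl(X_+^T\diag(\lambda_+)b_+-X_-^T\diag(\lambda_-)b_-\bigr).
\]
Taking $\max_{\|u\|_2\le1}$ and pulling the independent outer maximum over $b_+$ outside reduces the problem to analyzing the inner quantity $\max_{\|u\|_2\le1}\min_{b_-}u^Tw(b_-)$, where $w(b_-)=X_+^T\diag(\lambda_+)b_+-X_-^T\diag(\lambda_-)b_-$.

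The crux is to exchange the inner $\max_u$ and $\min_{b_-}$. The integrand $u^Tw(b_-)$ is affine (hence concave) in $u$ on the compact convex ball $\{\|u\|_2\le1\}$ and affine (hence convex) in $b_-$ on the compact convex box $[0,1]^{n_-}$, so Sion's minimax theorem justifies the swap to $\min_{b_-}\max_{\|u\|_2\le1}u^Tw(b_-)$. The innermost maximum is then the support function of the Euclidean ball, $\max_{\|u\|_2\le1}u^Tw=\|w\|_2$, which yields $\max_{\|u\|_2\le1}\lambda^T(Xu)_+=\max_{b_+}\min_{b_-}\|X_+^T\diag(\lambda_+)b_+-X_-^T\diag(\lambda_-)b_-\|_2$; the constraint being $\le1$ is exactly the first maximin inequality. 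Repeating the argument with $-\lambda^T(Xu)_+$ swaps the roles of the positive- and negative-label blocks (the positive-label block becomes the inner minimization and the negative-label block the outer maximization) and delivers the second maximin inequality.

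The main obstacle I anticipate is justifying the minimax exchange: I must verify the concave/convex structure together with compactness of both feasible sets, so that Sion's theorem gives genuine equality rather than only the weak-duality inequality. Alongside this, I must track the nonnegativity of $\lambda_+$ and $\lambda_-$ carefully, since it is precisely this sign information (available through dual feasibility) that turns one label block into an outer maximization and the other into an inner minimization; everything else is routine rewriting of the support function.
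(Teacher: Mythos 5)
Your proposal is correct and follows essentially the same route as the paper's proof: split the absolute value into two one-sided maxima, use the variational identity $(t)_+=\max_{b\in[0,1]}bt$ together with the sign information $\lambda_+\ge0$, $\lambda_-\ge0$ to write $\lambda^T(Xu)_+$ as a max--min over $(b_+,b_-)$, exchange the inner $\max_u$ and $\min_{b_-}$ by Sion's theorem (the paper's step (a)), and evaluate $\max_{\|u\|_2\le1}u^Tw=\|w\|_2$. Your explicit tracking of where dual feasibility is needed and of the compact/convex structure required for the minimax swap is, if anything, slightly more careful than the paper's write-up.
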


To under the maximin problems induced by the dual constraint, we introduce the definition of zonotopes. 

\begin{definition}
Let $A\in \mbR^{n\times m}$. Define the zonotope $\mcZ(A)\in \mbR^n$ as the convex set
\begin{equation}
    \mcZ(A)=\{Au|u\in[0,1]^m\},
\end{equation}
which is a linear embedding of the unit hypercube $[0,1]^n$. Figure \ref{fig:zonotope} depicts the zonotope generated by 8 random vectors.
\end{definition}

\begin{figure}
\centering
\begin{minipage}[t]{0.5\textwidth}
\centering
\includegraphics[width=\linewidth]{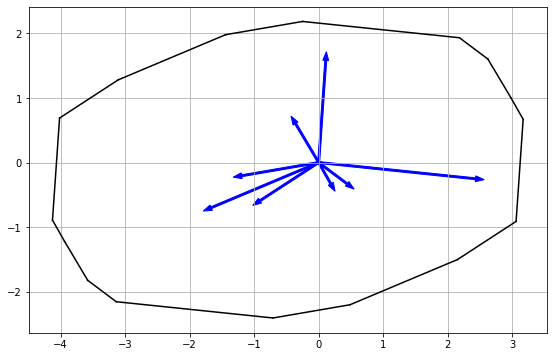}
\end{minipage}
\caption{An example of zonotope $\mcZ(A)$ where $A\in\mbR^{2\times 8}$. The blue arrows represent columns of $A$, which are randomly sampled from $\mcN(0,1)$. \label{fig:zonotope} }
\end{figure}

As a direct corollary of Lemma \ref{lem:maximin}, the following proposition characterizes the dual constraint 
in terms of the Haussdorf distance between two zonotopes.

\begin{proposition}\label{prop:hauss_dist}Consider the zonotopes 
\begin{equation}\label{equ:zono}
\begin{aligned}
    K_+&=Z(X_+^T\diag(\lambda_+))=\{X_+^T\diag(\lambda_+)b_+|b_+\in[0,1]^{n_+}\},\\
    K_-&=Z(X_-^T\diag(-\lambda_-))=\{X_-^T\diag(\lambda_-)b_-|b_-\in[0,1]^{n_-}\}.
\end{aligned}
\end{equation}
Then, the dual constraint $\max_{u:\|u\|_2\leq 1} |\lambda^T(Xu)_+|\leq 1$ is equivalent to
\begin{equation}
    H(K_+,K_-)\leq 1,
\end{equation}
where $H(K_+,K_-)$ is the Haussdorff distance between two convex sets $K_+,K_-\subseteq\mbR^n$ defined by
\begin{equation}
\begin{aligned}
H(K_+,K_-)=\max\big\{\max_{u_-\in K_-}\min_{u_+\in K_+}\|u_+-u_-\|_2,\; \max_{u_+\in K_+}\min_{u_-\in K_-}\|u_+-u_-\|_2\big\}.
\end{aligned}
\end{equation}
\end{proposition}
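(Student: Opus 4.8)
The plan is to recognize that the statement is a reparametrization of the two maximin problems supplied by Lemma~\ref{lem:maximin}, reading the optimization variables $b_+$ and $b_-$ as points ranging over the respective zonotopes $K_+$ and $K_-$. First I would recall from Lemma~\ref{lem:maximin} that the dual constraint $\max_{u:\|u\|_2\leq 1}|\lambda^T(Xu)_+|\leq 1$ is equivalent to the pair of inequalities
\begin{align*}
\max_{b_+\in[0,1]^{n_+}}\min_{b_-\in[0,1]^{n_-}}\|X_+^T\diag(\lambda_+)b_+-X_-^T\diag(\lambda_-)b_-\|_2 &\leq 1,\\
\max_{b_-\in[0,1]^{n_-}}\min_{b_+\in[0,1]^{n_+}}\|X_+^T\diag(\lambda_+)b_+-X_-^T\diag(\lambda_-)b_-\|_2 &\leq 1.
\end{align*}

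Next I would invoke the definition of the zonotope to identify the feasible sets of these problems with $K_+$ and $K_-$. By construction, the map $b_+\mapsto X_+^T\diag(\lambda_+)b_+$ sends $[0,1]^{n_+}$ onto $K_+$, and the map $b_-\mapsto X_-^T\diag(\lambda_-)b_-$ sends $[0,1]^{n_-}$ onto $K_-$, both surjectively onto their images by the very definition $\mcZ(A)=\{Au : u\in[0,1]^m\}$. Substituting $u_+=X_+^T\diag(\lambda_+)b_+$ and $u_-=X_-^T\diag(\lambda_-)b_-$ therefore converts the two displayed inequalities into
\[
\max_{u_+\in K_+}\min_{u_-\in K_-}\|u_+-u_-\|_2\leq 1,\qquad \max_{u_-\in K_-}\min_{u_+\in K_+}\|u_+-u_-\|_2\leq 1.
\]
Since each zonotope is the continuous image of the compact cube $[0,1]^m$, it is compact, so the inner minima and outer maxima are all attained and the substitution is valid with $\min/\max$ rather than $\inf/\sup$.

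Finally I would observe that these are precisely the two directed distances whose maximum defines $H(K_+,K_-)$, so requiring both to be at most $1$ is exactly the condition $H(K_+,K_-)\leq 1$. This closes the chain of equivalences: the dual constraint holds if and only if $H(K_+,K_-)\leq 1$. I do not anticipate a genuine obstacle here, since the claim is a direct corollary of Lemma~\ref{lem:maximin}; the only point requiring a word of care is the surjectivity and compactness of the zonotope parametrization, which is what guarantees that the reparametrized maximin values coincide exactly with the two directed Hausdorff distances.
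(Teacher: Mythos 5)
Your proposal is correct and coincides with the paper's treatment: the paper presents Proposition~\ref{prop:hauss_dist} as a direct corollary of Lemma~\ref{lem:maximin}, obtained exactly by reading the two maximin problems as the two directed Hausdorff distances between the zonotopes $K_+$ and $K_-$. Your added remarks on surjectivity and compactness of the zonotope parametrization are a reasonable bit of extra care but do not change the argument.
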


\section{Connections to maxcut}

The reformulation of the dual constraint in terms of Haussdorff distance between two zonotopes allows us to relate the dual problem with the well-known Max-Cut problem.  Consider a special case for the dual problem \eqref{maxmargin:dual} when $y=\bone$. In this case, we have $K_-=\{0\}$. Then, the constraint $\diag(y)\lambda\geq 0$ reduces to $\lambda\geq 0$. In this case, we have
\begin{equation}
\begin{aligned}
    &\max_{u:\|u\|_2\leq 1} |\lambda^T(Xu)_+|=H(K_+,0)=\max_{z\in[0,1]^n} \|X^T\diag(\lambda)z\|_2.
\end{aligned}
\end{equation}
Thus, the dual problem \eqref{maxmargin:dual} is also equivalent to
\begin{equation}
    \max \lambda^Ty \text{ s.t. } \diag(y)\lambda \geq 0, \max_{z\in[0,1]^n} \|X^T\diag(\lambda)z\|_2^2\leq 1.
\end{equation}
An interesting observation is that for fixed $\lambda$, we have
\begin{equation}\label{equ:connect_maxcut}
\begin{aligned}
   \max_{z\in[0,1]^n}z^T\diag(\lambda)XX^T\diag(\lambda)z
\stepa{=}&\max_{z\in\{0,1\}^n}z^T\diag(\lambda)XX^T\diag(\lambda)z\\
    =& \frac{1}{4}\max_{z\in\{-1,1\}^n}\bmbm{z\\1}^T\bmbm{I\\\bone^T}\diag( \lambda)XX^T\diag(  \lambda)\bmbm{I&\bone}\bmbm{z\\1}\\
    =&\frac{1}{4}\max_{\tilde z\in\{-1,1\}^{n+1}}\tilde z^T\bmbm{I\\\bone^T}\diag( \lambda)XX^T\diag( \lambda)\bmbm{I&\bone}\tilde z.
\end{aligned}
\end{equation}
Here, (a) follows since the maximizers of a convex function over a convex set is an extreme point. From \eqref{equ:connect_maxcut}, by taking $W=\bmbm{I\\\bone^T}\diag( \lambda)XX^T\diag( \lambda)\bmbm{I&\bone}$, we can rewrite the dual constraint via the Max-Cut value
$$
\max_{z\in\{-1,1\}^{n+1}} z^TWz\le 1.
$$
We note that the above Max-Cut problem is NP-hard in general. Therefore, we develop negative results to solve \texttt{Approx-Dual} and \texttt{Approx-Primal} with arbitrarily small relative error $\epsilon>0$ as follows.

\subsection{Negative result}\label{sec:neg_result}

\begin{theorem}\label{thm:dual_neg}
Suppose that $P\neq NP$. Let $y=\bone$ and set a relative error $\epsilon\leq 1-\sqrt{83/84}$. Then, there does not exist a polynomial-time algorithm to solve the \texttt{Approx-Dual} problem or the \texttt{Approx-Dual-Hinge} problem. For $g(\lambda)$ satisfies $g(a\lambda)\geq a g(\lambda),\forall a>1$, there does not exist a polynomial-time algorithm to solve \texttt{Approx-Dual-General}.
\end{theorem}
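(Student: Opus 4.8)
The plan is to reduce from the gap version of Max-Cut. As a consequence of the PCP theorem (made explicit through a gadget reduction), there are constants $0<s<c\le 1$ with $s/c=83/84$ such that, given a graph $G=(V,E)$, it is NP-hard to distinguish $\mathrm{MaxCut}(G)\ge c|E|$ from $\mathrm{MaxCut}(G)\le s|E|$. Identity \eqref{equ:connect_maxcut} already shows that, when $y=\bone$, the dual constraint of \eqref{maxmargin:dual} is exactly a Max-Cut constraint on the matrix $W=\bmbm{I\\\bone^T}\diag(\lambda)XX^T\diag(\lambda)\bmbm{I&\bone}$, so the task reduces to engineering a dataset whose dual optimum $D$ reads off $\mathrm{MaxCut}(G)$.

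The construction I would use takes the $n=|V|$ data points to be the rows of the signed vertex--edge incidence matrix of $G$, so that $X\in\mbR^{n\times |E|}$ with $M:=XX^T=L_G$ the graph Laplacian, and sets every label to $+1$. Two features of this choice drive the reduction. First, $L_G\bone=0$, so the augmenting column $\bone$ in $W$ decouples and $\frac14\max_{\tilde z\in\{-1,1\}^{n+1}}\tilde z^TW\tilde z=\max_{z\in\{0,1\}^n}\sum_{(i,j)\in E}(\lambda_iz_i-\lambda_jz_j)^2$. Second, specializing to the uniform multiplier $\lambda=t\bone$ turns this into $t^2\,\mathrm{MaxCut}(G)$, so the constraint forces $t\le 1/\sqrt{\mathrm{MaxCut}(G)}$ and gives the feasible objective $\bone^T\lambda=n/\sqrt{\mathrm{MaxCut}(G)}$. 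This yields the easy lower bound $D\ge n/\sqrt{\mathrm{MaxCut}(G)}$, and I would establish a matching upper bound so that $D$ is pinned to $n/\sqrt{\mathrm{MaxCut}(G)}$, or at least trapped between two values whose ratio stays below $\sqrt{84/83}$.

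With $D\asymp n/\sqrt{\mathrm{MaxCut}(G)}$, the two sides of the Max-Cut gap map to $D\le n/\sqrt{c|E|}$ and $D\ge n/\sqrt{s|E|}$, whose ratio is $\sqrt{c/s}=\sqrt{84/83}$; the square root is precisely the footprint of the squared-norm constraint. Any polynomial-time algorithm for \texttt{Approx-Dual} with relative error $\epsilon\le 1-\sqrt{83/84}$ returns a feasible $\lambda$ with $\bone^T\lambda\ge(1-\epsilon)D\ge\sqrt{83/84}\,D$, which exceeds the threshold $n/\sqrt{c|E|}$ exactly in the small-cut case, and would therefore decide the gap problem, contradicting $P\ne NP$. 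For \texttt{Approx-Dual-Hinge} I would exploit the box constraint $0\le\diag(y)\lambda\le 1$: choosing $\beta$ between $\sqrt{s|E|}$ and $\sqrt{c|E|}$ makes $\lambda=\bone$ feasible precisely in the small-cut case, so $D^{\text{hinge}}$ jumps between $n$ and a strictly smaller value across the gap. For \texttt{Approx-Dual-General}, the hypothesis $g(a\lambda)\ge a\,g(\lambda)$ for $a>1$ guarantees that the $\sqrt{84/83}$ rescaling of $\lambda$ available on the small-cut side inflates $g(\diag(y)\lambda)$ by at least the same factor, so the gap, and hence the hardness, survives for $g$.

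The step I expect to be the main obstacle is the upper bound on $D$, that is, controlling the maximization over the nonuniform multiplier $\lambda\ge 0$. Unlike the uniform choice, a skewed $\lambda$ can place more weight on low-degree vertices and push $\bone^T\lambda$ strictly above $n/\sqrt{\mathrm{MaxCut}(G)}$; already the path $P_4$ admits a feasible $\lambda=(0.7,0.505,0.505,0.7)$ with $\bone^T\lambda\approx 2.41>4/\sqrt{3}=n/\sqrt{\mathrm{MaxCut}}$, and such inflation, if uncontrolled, would swamp the tiny $\sqrt{84/83}$ window. To keep the reduction gap-preserving I would either restrict to a sufficiently symmetric (e.g.\ regular or vertex-transitive) family of hard Max-Cut instances, for which a symmetrization or KKT argument forces the optimal $\lambda$ to be uniform, or augment the dataset with gadget points that penalize non-uniformity and cap each $\lambda_i$. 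Verifying that this control is quantitatively compatible with the required $\sqrt{84/83}$ separation is the technical heart of the argument.
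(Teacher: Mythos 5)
Your reduction is incomplete at exactly the point you flag, and that point is not a technicality: without a matching upper bound on $D$ for the constructed instance, the reduction proves nothing, and your own $P_4$ example shows that the identity $D=n/\sqrt{\mathrm{MaxCut}(G)}$ is simply false for general graphs. The proposed repairs do not close the gap. Symmetrization does force a uniform optimizer when $G$ is vertex-transitive (the feasible set $\{\lambda\ge 0:\max_{z\in[0,1]^n}\|X^T\diag(\lambda)z\|_2\le 1\}$ is convex and automorphism-invariant and the objective is linear), but you would then need NP-hardness of gap Max-Cut at ratio $84/83$ restricted to vertex-transitive graphs, which is not what \cite{arora1998proof,goemans1995improved} provide; mere regularity, which is what standard APX-hardness reductions can be made to preserve, does not support the averaging argument. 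The gadget alternative is unspecified, and since the entire argument lives inside a multiplicative window of $\sqrt{84/83}\approx 1.006$, any loss incurred by a gadget must be controlled to that precision. There is also a soft spot in your hinge-loss step: deciding whether $\lambda=\bone$ is feasible for a given $\beta$ is itself the Max-Cut computation, so you cannot choose $\beta$ between $\sqrt{s|E|}$ and $\sqrt{c|E|}$ and then read off which side of the jump you are on without first resolving the very gap instance you are trying to decide.

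The paper takes a different route that avoids constructing a hard dataset altogether. It argues that any algorithm solving \texttt{Approx-Dual} outputs a feasible, near-optimal $\tilde\lambda$, and feasibility supplies a certified upper bound $q_1\le 1$ on the Max-Cut value $q_2=\max_{z\in[0,1]^n}\|X^T\diag(\tilde\lambda)z\|_2^2$; rescaling to $\tilde\lambda/\sqrt{q_2}$ and comparing its objective with that of the optimal $\lambda^*$ forces $\sqrt{q_2}\ge 1-\epsilon$, hence $q_1/q_2\le(1-\epsilon)^{-2}\le 84/83$, i.e., the algorithm implicitly solves \texttt{Approx-Max-Cut} with relative error $1/83$. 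The hinge case is then reduced to the max-margin case by taking $\beta<\|\lambda^*\|_\infty^{-1}$ so that the box constraint is inactive and $D^{\text{hinge}}=\beta D$, and the general loss reuses the same rescaling step via the homogeneity hypothesis on $g$. If you wish to salvage your direct reduction, the missing piece is precisely a family of hard instances on which the maximization over nonuniform $\lambda\ge 0$ can be bounded within a factor strictly smaller than $\sqrt{84/83}$ of the uniform value.
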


As there is no duality gap, we can also show that there does not exist a polynomial algorithm to solve \texttt{Approx-Primal}, \texttt{Approx-Primal-Hinge} or \texttt{Approx-Primal-General} with arbitrary small relative error.

\begin{theorem}\label{thm:primal_neg}
Suppose that $P\neq NP$.  Let $y=\bone$ and set a relative error $\epsilon\leq \sqrt{84/83}-1$. Then, there does not exist a polynomial-time algorithm to solve \texttt{Approx-Primal} or \texttt{Approx-Primal-Hinge}. For $g(\lambda)$ satisfies $g(a\lambda)\geq a g(\lambda),\forall a>1$, there does not exist a polynomial-time algorithm to solve \texttt{Approx-Dual-General}.
\end{theorem}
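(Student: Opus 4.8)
The plan is to transfer the hardness established on the dual side in Theorem~\ref{thm:dual_neg} to the primal side, exploiting the absence of a duality gap. Recall that for $m\ge m^*$ we have $P_\text{non-cvx}=D$, $P_\text{non-cvx}^\text{hinge}=D^\text{hinge}$, and $P_\text{non-cvx}^\text{gen}=D^\text{gen}$. Hence any instance $(X,\bone)$ for which the dual optimum is hard to approximate is simultaneously an instance for which the primal optimum is hard to approximate, since the two optimal values coincide. The only genuine work is to reconcile the direction of approximation — the dual is a maximization while the primal is a minimization — and to check that the admissible error threshold becomes $\sqrt{84/83}-1$ rather than $1-\sqrt{83/84}$.

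First I would extract from the proof of Theorem~\ref{thm:dual_neg} the underlying gap instance. That argument reduces a gap version of Max-Cut to the dual: through the identity \eqref{equ:connect_maxcut}, the dual constraint with $y=\bone$ forces the optimal scaling of $\lambda$ to be inversely proportional to the square root of the Max-Cut quadratic form $\max_z z^T\diag(\lambda)XX^T\diag(\lambda)z$. Consequently a multiplicative gap of $84/83$ in the Max-Cut value becomes a gap of $\sqrt{84/83}$ in the optimal value $D$. Concretely, the reduction produces thresholds $a>b$ with $a/b=\sqrt{84/83}$ for which deciding whether $D\ge a$ or $D\le b$ is NP-hard. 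Invoking $P_\text{non-cvx}=D$, the very same instances make it NP-hard to decide whether $P_\text{non-cvx}\ge a$ or $P_\text{non-cvx}\le b$.

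Next I would run a hypothetical polynomial-time \texttt{Approx-Primal} solver on these instances. For relative error $\epsilon\le\sqrt{84/83}-1$ it returns $p$ with $P_\text{non-cvx}\le p\le(1+\epsilon)P_\text{non-cvx}$. When $P_\text{non-cvx}\ge a$ this yields $p\ge a$; when $P_\text{non-cvx}\le b$ it yields $p\le(1+\epsilon)b\le\sqrt{84/83}\,b=a$, so with the gap taken strictly the value $p$ alone separates the two cases and decides the NP-hard problem in polynomial time, contradicting $P\ne NP$. The identical reasoning applies to \texttt{Approx-Primal-Hinge} via $P_\text{non-cvx}^\text{hinge}=D^\text{hinge}$. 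For the general loss, the hypothesis $g(a\lambda)\ge a\,g(\lambda)$ for $a>1$ is exactly the superlinearity that Theorem~\ref{thm:dual_neg} uses to preserve the gap under rescaling of $\lambda$, so the same reduction carries through to \texttt{Approx-Primal-General}.

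I expect the main obstacle to be the bookkeeping of constants rather than any new conceptual idea. One must confirm that the Max-Cut gap of $84/83$ maps precisely to a $\sqrt{84/83}$ gap in the optimal value — the square root arising because the constraint is quadratic in $\lambda$ while the objective is linear in $\lambda$ — and that the strict separation $(1+\epsilon)b<a$ needed to distinguish the two cases holds at (or just below) the boundary value $\epsilon=\sqrt{84/83}-1$, which pins down the threshold. A secondary point, easily handled, is verifying that the returned scalar $p$ suffices to decide the gap even without certifying the accompanying feasible $\Theta$, which holds since $p$ is sandwiched around $P_\text{non-cvx}=D$.
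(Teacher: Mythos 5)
There is a genuine gap in your argument, and it is located exactly at the step you flag as "easily handled": the claim that the scalar $p$ alone decides an NP-hard problem. Your proof requires that Theorem~\ref{thm:dual_neg} (or its underlying reduction) establishes a \emph{promise-gap hardness on the value} $D$, i.e., that there exist thresholds $a>b$ with $a/b=\sqrt{84/83}$ such that distinguishing $D\geq a$ from $D\leq b$ is NP-hard. Neither the statement nor the proof of Theorem~\ref{thm:dual_neg} gives this. What that proof shows is that producing a near-optimal \emph{feasible} dual point $\tilde\lambda$ is hard, because feasibility of a given $\lambda$ is itself a Max-Cut instance: a near-optimal feasible $\tilde\lambda$ must have its constraint value $q_2=\max_{z\in[0,1]^n}\|X^T\diag(\tilde\lambda)z\|_2^2$ squeezed into $[(1-\epsilon)^2,1]$, so the pair $(\tilde\lambda, 1)$ certifies an $84/83$-approximation of the Max-Cut value \emph{at that particular $\tilde\lambda$}. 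The identity \eqref{equ:connect_maxcut} ties Max-Cut to the constraint for a \emph{fixed} $\lambda$; the number $D$ is an outer maximum over all $\lambda\geq 0$, and knowing $D$ to within $\sqrt{84/83}$ does not hand you an approximate Max-Cut value for any instance. So strong duality $P_\text{non-cvx}=D$ transfers nothing useful unless you first supply a (missing) reduction showing the value $D$ itself has a hard gap.

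The paper's actual proof avoids this by exploiting the other output that \texttt{Approx-Primal} is required to produce: the network $(W_1^*,w_2^*)$ itself. It collects the activation patterns $M_i=\diag(\mbI(Xw_{1,i}^*\geq 0))$, forms the restricted convex program \eqref{maxmargin:primal_sub} supported on those patterns, and uses its dual \eqref{maxmargin:dual_sub} to show that $c=\max_{i}\max_{u:\|u\|_2\leq 1,(2M_i-I)Xu\geq 0}|(\lambda^*)^TM_iXu|$ satisfies $c\geq(1+\epsilon)^{-1}$ while trivially $c\leq \max_{u:\|u\|_2\leq1}|(\lambda^*)^T(Xu)_+|\le 1$. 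Thus the finitely many patterns extracted from the returned network yield a polynomial-time $(1+\epsilon)^2$-approximation of the Max-Cut subproblem \eqref{equ:sub_max_cut}, and $(1+\epsilon)^2-1\leq 1/83$ is what pins down $\epsilon\leq\sqrt{84/83}-1$. In short: the hardness lives in the certificate (the activation patterns / the feasibility check), not in the optimal value, and your proposal would need an entirely different gap reduction to make the value-only argument go through.
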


\section{Exact computation for orthogonal separable dataset}\label{sec:ortho}

We first consider a special class of input data pair $X,y$, which is orthogonal separable.  We say that the dataset $(X,y)$ is orthogonal separable if for all $i,j\in[n]$,
\begin{equation*}
\begin{aligned}
&x_i^Tx_j\geq 0, \text{ if }y_i=y_j,\quad x_i^Tx_j\leq 0,  \text{ if } y_i\neq y_j.
\end{aligned}
\end{equation*}
In short, the dataset $(X,y)$ is orthogonal separable implies that
$$
X_+X_+^T\geq 0, X_-X_+^T\leq 0, X_-X_-^T\geq 0.
$$
For the orthogonal separable dataset, even with relative error $\epsilon=0$, we can solve \texttt{Approx-Dual} in polynomial-time.

\begin{theorem}\label{thm:ortho_dual}
Suppose that $(X,y)$ is orthogonal separable. The dual problem \eqref{maxmargin:dual} is equivalent to
\begin{equation}\label{dual:ortho}
    \max \lambda_+^T\bone+\lambda_-\bone \text{ s.t. } \lambda_+\geq0, \lambda_-\geq0, \|X_+^T\lambda_+\|_2\leq 1, \|X_-^T\lambda_-\|_2\leq 1.
\end{equation}
The dual problem \eqref{maxmargin:dual_general} for the general loss, including hinge loss, is equivalent to 
\begin{equation}\label{dual:ortho_general}
    \max g(\lambda_+)+g(\lambda_-) \text{ s.t. } \lambda_+\geq0, \lambda_-\geq 0, \|X_+^T\lambda_+\|_2\leq \beta, \|X_-^T\lambda_-\|_2\leq \beta.
\end{equation}
Therefore, for arbitrary $\epsilon>0$, there exist polynomial-time algorithms to solve \texttt{Approx-Dual}, \texttt{Approx-Dual-Hinge} and \texttt{Approx-Dual-General}.
\end{theorem}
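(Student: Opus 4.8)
The plan is to begin from the two maximin reformulations of the dual norm constraint supplied by Lemma \ref{lem:maximin}, and to exploit orthogonal separability to decouple them into the two independent norm constraints of \eqref{dual:ortho}. Write $p = X_+^T\diag(\lambda_+)b_+$ and $q = X_-^T\diag(\lambda_-)b_-$, so that $p$ ranges over the zonotope $K_+$ and $q$ over $K_-$ as $b_+\in[0,1]^{n_+}$ and $b_-\in[0,1]^{n_-}$, with $\lambda_+,\lambda_-\geq 0$ coming from $\diag(y)\lambda\geq 0$. The first fact I would establish is the sign of the cross term: since $p^Tq = b_+^T\diag(\lambda_+)X_+X_-^T\diag(\lambda_-)b_-$ and orthogonal separability gives $X_+X_-^T\leq 0$ entrywise while $\lambda_+,\lambda_-,b_+,b_-$ are all nonnegative, every summand is nonpositive and hence $p^Tq\leq 0$. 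Consequently $\|p-q\|_2^2 = \|p\|_2^2 - 2p^Tq + \|q\|_2^2 \geq \|p\|_2^2 + \|q\|_2^2$, so $\|p-q\|_2\geq\max\{\|p\|_2,\|q\|_2\}$.

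Next I would use this bound to trivialize each inner minimization. In the first maximin, for fixed $b_+$ the quantity $\|p-q\|_2$ is at least $\|p\|_2$ for every $b_-$, while the choice $b_-=0$ attains $\|p\|_2$; hence the inner minimum equals $\|p\|_2$, and the first constraint collapses to $\max_{b_+\in[0,1]^{n_+}}\|X_+^T\diag(\lambda_+)b_+\|_2\leq 1$. Symmetrically, the second maximin collapses to $\max_{b_-\in[0,1]^{n_-}}\|X_-^T\diag(\lambda_-)b_-\|_2\leq 1$.

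Third, I would show each remaining hypercube maximization is attained at the all-ones vertex. For the positive block set $M = \diag(\lambda_+)X_+X_+^T\diag(\lambda_+)$; by $X_+X_+^T\geq 0$ and $\lambda_+\geq 0$ the matrix $M$ is entrywise nonnegative, so the gradient $\nabla_{b_+}(b_+^TMb_+) = 2Mb_+$ is nonnegative on the nonnegative orthant and $b_+^TMb_+$ is coordinatewise nondecreasing on $[0,1]^{n_+}$. Its maximum is therefore at $b_+=\bone$, giving $\|X_+^T\lambda_+\|_2$, and the negative block yields $\|X_-^T\lambda_-\|_2$ identically. Combining these with the rewriting $\lambda^Ty = \lambda_+^T\bone + \lambda_-^T\bone$ and of the sign constraint $\diag(y)\lambda\geq 0$ as $\lambda_+\geq 0,\lambda_-\geq 0$ yields exactly \eqref{dual:ortho}; the same computation with the separable objective $g(\diag(y)\lambda) = g(\lambda_+)+g(\lambda_-)$ and threshold $\beta$ yields \eqref{dual:ortho_general}.

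Finally, I would observe that \eqref{dual:ortho} is a second-order cone program, while \eqref{dual:ortho_general} maximizes a concave separable objective (as $g(z)=-\ell^*(-z)$ is concave) over second-order cone constraints, so both are convex programs solvable to any target accuracy in polynomial time by interior-point methods, settling \texttt{Approx-Dual}, \texttt{Approx-Dual-Hinge}, and \texttt{Approx-Dual-General} for every $\epsilon>0$. I expect the main obstacle to be the third step: the decoupling in the first two steps only reduces each maximin to a norm maximization over a hypercube, which is in general a nonconvex maximization of a convex function and hence combinatorial, and it is precisely the within-class nonnegativity $X_\pm X_\pm^T\geq 0$ that forces the optimum to the corner $\bone$ and removes this difficulty; the cross-class condition $X_+X_-^T\leq 0$ is what makes the inner minima vanish at the origin and keeps $\lambda_+$ and $\lambda_-$ uncoupled.
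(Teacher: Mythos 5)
Your proposal is correct and follows essentially the same route as the paper: the paper's proof of Proposition \ref{prop:ortho} uses exactly your three steps (nonpositive cross term from $X_+X_-^T\leq 0$ to collapse the inner minimum at $b_-=0$, then entrywise nonnegativity of $\diag(\lambda_\pm)X_\pm X_\pm^T\diag(\lambda_\pm)$ to push the hypercube maximum to the all-ones vertex), after which the resulting cone-constrained convex programs are solved in polynomial time.
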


For orthogonal separable datasets, we note that the equivalent form \eqref{dual:ortho} of the dual problem can be separated into two optimization problems using the data with positive labels and the data with negative labels respectively.

\begin{theorem}\label{thm:ortho_primal}
Suppose that $(X,y)$ is orthogonal separable. Consider the following optimization problem 
\begin{equation}\label{primal:ortho}
    \min_{u_+,u_-\in\mbR^d} \|u_+\|_2+\|u_-\|_2 \text{ s.t. } X_+u_+\geq \bone,X_-u_-\geq\bone. 
\end{equation}
For a general loss consider the optimization problem
\begin{equation}\label{primal:ortho_general}
    \min_{u_+,u_-\in\mbR^d} \ell(X_+u_+)+\ell(X_-u_-)+\beta(\|u_+\|_2+\|u_-\|_2). 
\end{equation}
Let $u_+,u_-$ be the optimal solution to \eqref{primal:ortho}/\eqref{primal:ortho_general}. Let $W_1=\bmbm{\frac{u_+}{\sqrt{\|u_+\|_2}}&\frac{u_-}{\sqrt{\|u_-\|_2}}}$ and $w_2=\bmbm{\sqrt{\|u_+\|_2}&-\sqrt{\|u_-\|_2}}$. Then, $(W_1,w_2)$ is the optimal solution to \eqref{maxmargin:primal}/\eqref{maxmargin:primal_hinge}/\eqref{maxmargin:primal_general} as long as $m\geq 2$. Hence, for arbitrary $\epsilon>0$, there exist polynomial-time algorithms to solve \texttt{Approx-Primal}, \texttt{Approx-Primal-Hinge} and \texttt{Approx-Primal-General}.
\end{theorem}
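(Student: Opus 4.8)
The plan is to certify that the constructed parameters $\Theta=(W_1,w_2)$ are feasible for \eqref{maxmargin:primal} and attain objective value exactly $P_\text{non-cvx}$, which proves global optimality. The backbone is a duality chain showing that the convex program \eqref{primal:ortho} has value $P_\text{non-cvx}$. First I would observe that \eqref{dual:ortho} is the Lagrangian dual of \eqref{primal:ortho}: dualizing $X_+u_+\geq\bone$ with a multiplier $\lambda_+\geq0$ and using $\inf_{u_+}\big(\|u_+\|_2-(X_+^T\lambda_+)^Tu_+\big)=0$ when $\|X_+^T\lambda_+\|_2\leq1$ (and $-\infty$ otherwise) recovers the $\lambda_+$-block of \eqref{dual:ortho}, and symmetrically for the $\lambda_-$-block. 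Since \eqref{primal:ortho} is convex with affine constraints, strong duality holds whenever it is feasible, so its value equals that of \eqref{dual:ortho}, which by Theorem \ref{thm:ortho_dual} equals $D$, and hence equals $P_\text{non-cvx}$ because there is no duality gap. It therefore suffices to check feasibility of $\Theta$ together with $R(\Theta)=\|u_+\|_2+\|u_-\|_2$.

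The objective identity is immediate: $\|W_1\|_F^2=\|u_+\|_2+\|u_-\|_2=\|w_2\|_2^2$, so $R(\Theta)=\|u_+\|_2+\|u_-\|_2$, matching the value of \eqref{primal:ortho}. For the network output, pulling the positive scalars $\sqrt{\|u_\pm\|_2}$ through the ReLU collapses the two neurons to
\[
f^\text{ReLU}(x;\Theta)=(x^Tu_+)_+-(x^Tu_-)_+ .
\]
Thus the margin conditions $y_if^\text{ReLU}(x_i;\Theta)\geq1$ read $(x_i^Tu_+)_+-(x_i^Tu_-)_+\geq1$ for $y_i=1$ and $(x_i^Tu_-)_+-(x_i^Tu_+)_+\geq1$ for $y_i=-1$.

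The crux is controlling both the cross terms and the signs of the same-class terms, which are governed by the conic location of the minimizer. By stationarity of \eqref{primal:ortho}, the optimal $u_+$ and $u_-$ lie in the conical hulls of the positive and negative samples, respectively: from $u_-/\|u_-\|_2=X_-^T\lambda_-$ with $\lambda_-\geq0$ we get $u_-=\sum_{i:y_i=-1}c_i'x_i$ with $c_i'\geq0$, and symmetrically $u_+=\sum_{i:y_i=1}c_ix_i$ with $c_i\geq0$ (both nonzero, since $X_\pm\cdot0\not\geq\bone$). Orthogonal separability then supplies two sign facts: for $y_j=1$ we have $x_j^Tu_+=\sum_i c_i\,x_j^Tx_i\geq0$ (same-class inner products nonnegative) while $x_j^Tu_-=\sum_i c_i'\,x_j^Tx_i\leq0$ (cross-class nonpositive), so $(x_j^Tu_+)_+=x_j^Tu_+$ and $(x_j^Tu_-)_+=0$; hence $y_jf^\text{ReLU}(x_j;\Theta)=x_j^Tu_+\geq1$ by $X_+u_+\geq\bone$. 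The case $y_j=-1$ is symmetric. This establishes feasibility, and since $R(\Theta)$ equals the value of \eqref{primal:ortho}, which in turn equals $P_\text{non-cvx}$, the constructed network is globally optimal.

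The general-loss statement follows from the same sign analysis. The representation $\ell(z)=\max_{\lambda\geq0}-\lambda^Tz+g(\lambda)$ expresses $\ell$ as a supremum of nonincreasing affine functions, so $\ell$ is nonincreasing and the stationarity condition for \eqref{primal:ortho_general} again places $u_\pm$ in the respective conical hulls; the two sign facts then give $y_if^\text{ReLU}(x_i;\Theta)=x_i^Tu_+$ for $y_i=1$ and $=x_i^Tu_-$ for $y_i=-1$, so the network's loss-plus-regularizer equals the objective of \eqref{primal:ortho_general} term by term, and the decoupled convex problem is solvable in polynomial time. I expect the sign analysis of the cross and same-class terms to be the main obstacle, since it is the only step that genuinely exploits orthogonal separability and the conic geometry of the minimizer, whereas the duality chain and the algebraic objective/output identities are routine. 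A minor technical point, arising only in the general-loss case, is the degenerate possibility $u_+=0$ or $u_-=0$, where the normalized column $u_\pm/\sqrt{\|u_\pm\|_2}$ is replaced by an inactive zero-weight neuron that contributes nothing to either the objective or the output.
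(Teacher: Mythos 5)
Your proposal is correct and follows essentially the same route as the paper: both establish strong duality between \eqref{primal:ortho} and \eqref{dual:ortho}, use complementary slackness/stationarity to place $u_+$ and $u_-$ in the conic hulls of $X_+$ and $X_-$ (so that orthogonal separability forces $X_+u_-\leq 0$ and $X_-u_+\leq 0$, which is exactly Lemma \ref{lem:ortho_sol}), and then verify that the two-neuron network is feasible with objective value $D\leq P_\text{non-cvx}$. Your explicit handling of the degenerate case $u_\pm=0$ in the general-loss setting is a minor point the paper leaves implicit, but the argument is otherwise the same.
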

From the constraint that $\diag(y)\lambda\geq 0$, we have $\lambda_+\geq 0$ and $\lambda_-\geq 0$. For orthogonal separable dataset, the maximin problems
$$
\max_{b_+\in[0,1]^{n_+}}\min_{b_-\in[0,1]^{n_-}}\|X_+^T\diag(\lambda_+)b_+-X_-^T\diag(\lambda_-)b_-\|_2
$$
and
$$
\max_{b_-\in[0,1]^{n_-}}\min_{b_+\in[0,1]^{n_+}}\|X_+^T\diag(\lambda_+)b_+-X_-^T\diag(\lambda_-)b_-\|_2
$$
can be solved in closed-form. 
\begin{proposition}\label{prop:ortho}
    Suppose that $(X,y)$ is orthogonal separable. Then, we have
    \begin{equation}
    \begin{aligned}
            &\max_{b_+\in[0,1]^{n_+}}\min_{b_-\in[0,1]^{n_-}}\|X_+^T\diag(\lambda_+)b_++X_-^T\diag(\lambda_-)b_-\|_2 = \|X_+\lambda_+\|_2,\\
        &\max_{b_-\in[0,1]^{n_-}}\min_{b_+\in[0,1]^{n_+}}\|X_+^T\diag(\lambda_+)b_++X_-^T\diag(\lambda_-)b_-\|_2 = \|X_-\lambda_-\|_2.
    \end{aligned}
    \end{equation}
\end{proposition}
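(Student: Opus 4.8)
The plan is to solve each maximin problem by optimizing the inner and outer variables separately, exploiting the three sign conditions $X_+X_+^T\ge 0$, $X_-X_+^T\le 0$, $X_-X_-^T\ge 0$ that characterize orthogonal separability. I will treat the first identity in detail; the second follows by exchanging the roles of the two classes. The key structural fact I want to establish is that the two optimizations decouple: the inner minimizer is $b_-=0$ regardless of $b_+$, and the resulting outer maximizer is $b_+=\bone$, so the maximin value is just the composition of two closed-form answers and no saddle-point theorem is needed.

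First I would solve the inner minimization for an \emph{arbitrary} fixed $b_+$. Writing $u=X_+^T\diag(\lambda_+)b_+=\sum_{i:y_i=1}\lambda_{+,i}b_{+,i}x_i$ and $v=X_-^T\diag(\lambda_-)b_-=\sum_{j:y_j=-1}\lambda_{-,j}b_{-,j}x_j$, note that since $\lambda_+,\lambda_-\ge 0$ and $b_+,b_-\ge 0$ both $u$ and $v$ are nonnegative combinations of their respective sample vectors. Orthogonal separability then forces the cross term $u^Tv=\sum_{i,j}\lambda_{+,i}b_{+,i}\lambda_{-,j}b_{-,j}\,x_i^Tx_j\le 0$, because $x_i^Tx_j\le 0$ whenever $y_i\ne y_j$. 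Consequently, with the sign convention of Lemma \ref{lem:maximin}, $\|u-v\|_2^2=\|u\|_2^2-2u^Tv+\|v\|_2^2\ge\|u\|_2^2$, with equality attained at $b_-=0$. Hence the inner minimum equals $\|X_+^T\diag(\lambda_+)b_+\|_2$ for every $b_+$.

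Next I would solve the residual outer problem $\max_{b_+\in[0,1]^{n_+}}\|X_+^T\diag(\lambda_+)b_+\|_2$. Expanding the square gives the quadratic form $b_+^TMb_+$ with $M=\diag(\lambda_+)X_+X_+^T\diag(\lambda_+)$. Because $X_+X_+^T\ge 0$ entrywise and $\diag(\lambda_+)\ge 0$, the symmetric matrix $M$ is entrywise nonnegative, so for $b_+\ge 0$ we have $\partial_{b_{+,i}}(b_+^TMb_+)=2(Mb_+)_i\ge 0$. The objective is therefore coordinatewise nondecreasing on the box $[0,1]^{n_+}$ and is maximized at $b_+=\bone$, yielding $\max_{b_+}\|X_+^T\diag(\lambda_+)b_+\|_2=\|X_+^T\diag(\lambda_+)\bone\|_2=\|X_+^T\lambda_+\|_2$, which is the claimed right-hand side. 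Chaining the two steps proves the first identity, and the symmetric argument using $X_-X_-^T\ge 0$ and $X_-X_+^T\le 0$ gives the second.

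The main obstacle is bookkeeping rather than depth. The one point requiring genuine care is the sign: the cross-term argument needs the negative-class term to enter so that $-2u^Tv\ge 0$, i.e. with the minus sign of Lemma \ref{lem:maximin}; once $\lambda_-$ is read with its stated convention this is automatic. Beyond that, I must verify that the inner optimum $b_-=0$ is truly attained uniformly in $b_+$ (so the $\max$ and $\min$ genuinely separate) and that the monotonicity in Step two pins the maximizer to the single vertex $\bone$ of the cube; both reduce to the elementary nonnegativity facts above, so I do not anticipate any analytic difficulty.
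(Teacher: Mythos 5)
Your proof is correct and follows essentially the same route as the paper's: the inner minimum is killed at $b_-=0$ because orthogonal separability makes the cross term nonpositive, and the outer maximum is pinned to $b_+=\bone$ because $\diag(\lambda_+)X_+X_+^T\diag(\lambda_+)$ is entrywise nonnegative (the paper bounds $b_+^TMb_+\le \bone^TM\bone$ directly where you argue via coordinatewise monotonicity, an equivalent step). Your explicit attention to the sign convention from Lemma \ref{lem:maximin} is a point the paper itself glosses over, but it does not change the argument.
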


Therefore, for orthogonal separable dataset, the dual problem reduces to \eqref{dual:ortho}. This is a convex optimization problem with cone constraints, which can be approximated in polynomial-time. 

\begin{lemma}\label{lem:ortho_sol}
The dual problem of \eqref{dual:ortho} is given by \eqref{primal:ortho}. The dual problem of \eqref{dual:ortho_general} is \eqref{primal:ortho_general}. Moreover, the optimal solution $(u_+,u_-)$ of \eqref{primal:ortho} or \eqref{primal:ortho_general} satisfies that $X_-u_+\leq 0$ and $X_+u_-\leq 0$. 
\end{lemma}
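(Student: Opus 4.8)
The plan is to prove the two duality statements by a direct Lagrangian/Fenchel computation and to obtain the sign conditions from the stationarity (representer) relation that holds at any optimum. The first observation is that both \eqref{primal:ortho} and \eqref{primal:ortho_general} decouple completely into a block involving only $u_+$ and a block involving only $u_-$, and likewise \eqref{dual:ortho} and \eqref{dual:ortho_general} decouple into a $\lambda_+$ block and a $\lambda_-$ block. Hence it suffices to argue the positive block and invoke symmetry for the negative block.

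For the duality, I would form the Lagrangian of the $u_+$ block of \eqref{primal:ortho}, namely $\|u_+\|_2 + \lambda_+^T(\bone - X_+ u_+)$ with multiplier $\lambda_+ \geq 0$, and minimize over $u_+$. The only nontrivial ingredient is the elementary conjugate identity $\inf_{u}(\|u\|_2 - v^T u) = 0$ when $\|v\|_2 \leq 1$ and $-\infty$ otherwise (by Cauchy--Schwarz, with the unbounded direction $u = tv$); applied with $v = X_+^T\lambda_+$ this collapses the dual function to $\lambda_+^T\bone$ subject to $\|X_+^T\lambda_+\|_2 \leq 1$, which is exactly the $+$ part of \eqref{dual:ortho}. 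For \eqref{primal:ortho_general} I would instead use the variational representation $\ell(X_+u_+) = \max_{\lambda_+\geq 0}(-\lambda_+^T X_+ u_+ + g(\lambda_+))$ supplied in the setup, interchange the resulting $\min_{u_+}\max_{\lambda_+}$ by a minimax theorem, and minimize over $u_+$ using the same conjugate identity with $\beta$ in place of $1$; this yields the $+$ part of \eqref{dual:ortho_general}. Since both primal problems are convex, strong duality then gives that the bidual coincides with the primal, i.e. the dual of \eqref{dual:ortho} (resp. \eqref{dual:ortho_general}) is \eqref{primal:ortho} (resp. \eqref{primal:ortho_general}).

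For the sign conditions, I would read off the stationarity condition at an optimizer $u_+$. When $u_+\neq 0$, differentiating gives $\beta\, u_+/\|u_+\|_2 = X_+^T\lambda_+$ for the optimal nonnegative multiplier $\lambda_+\geq 0$ (in \eqref{primal:ortho} one takes $\beta = 1$; in \eqref{primal:ortho_general} the multiplier is the active negative subgradient of $\ell$, which is $\geq 0$ because $\ell$ is nonincreasing, being a supremum of affine functions of nonpositive slope). Hence $u_+ = c\,X_+^T\lambda_+$ for a scalar $c\geq 0$, so $u_+$ lies in the conical hull of the positive-class samples. Multiplying by $X_-$ gives $X_- u_+ = c\,X_- X_+^T\lambda_+ \leq 0$, since orthogonal separability forces $X_- X_+^T \leq 0$ entrywise and $\lambda_+\geq 0$. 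The case $u_+ = 0$ is immediate. The symmetric argument with $X_+ X_-^T \leq 0$ gives $X_+ u_- \leq 0$.

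The routine parts are the conjugate identity and the entrywise sign bookkeeping; the main point requiring care is justifying strong duality and the minimax interchange. For \eqref{primal:ortho} this is clean: the constraints are affine and the problem is feasible, so strong duality holds without a Slater condition. For \eqref{primal:ortho_general} I would rely on convexity of $\ell$ together with the coercivity supplied by the $\beta(\|u_+\|_2+\|u_-\|_2)$ term to guarantee attainment, and invoke Sion's minimax theorem (convex in $u_+$, concave/linear in $\lambda_+$ over a convex set) to justify swapping the order; the only subtlety is nondifferentiability of $\|\cdot\|_2$ at the origin, which is handled by the separate $u_+ = 0$ case noted above.
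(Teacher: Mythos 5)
Your proposal is correct and follows essentially the same route as the paper: both rest on the conjugate identity for the $\ell_2$ norm (equivalently, the constraint $\|X_\pm^T\lambda_\pm\|_2\leq\beta$ arising from $\inf_u(\beta\|u\|_2-v^Tu)$) to establish the dual pairing, and both derive the sign conditions from the KKT relation $u_+\propto X_+^T\lambda_+$ with $\lambda_+\geq 0$ combined with $X_-X_+^T\leq 0$. The only cosmetic difference is that you dualize the primal problems and invoke biduality, whereas the paper forms the Lagrangian of the dual problems directly and reads the same representer relation off complementary slackness.
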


Therefore, by letting $W_1=\bmbm{\frac{u_+}{\sqrt{\|u_+\|_2}}&\frac{u_-}{\sqrt{\|u_-\|_2}}}$ and $w_2=\bmbm{\sqrt{\|u_+\|_2}&-\sqrt{\|u_-\|_2}}$, we have
\begin{equation}
    (X_+W_1)_+w_2=X_+u_+, (X_-W_1)_+w_2=-X_-u_-.
\end{equation}
We can verify that $(W_1,w_2)$ satisfies the constraints in \eqref{maxmargin:primal}. Moreover, we note that
\begin{equation}
    \frac{1}{2}\pp{\|W_1\|_F^2+\|w_2\|_2^2}=\|u_+\|_2+\|u_-\|_2=D.
\end{equation}
As $P_\text{non-cvx}\geq D$, this implies that $P_\text{non-cvx}=D$ and $(W_1,w_2)$ is the optimal solution to \eqref{maxmargin:primal}. The proofs for the hinge loss and the general loss are analogous.

\section{Approximation with relative error $\epsilon=(1-\sqrt{2/\pi})$ under negative correlation}\label{sec:neg}
We say a dataset $(X,y)$ has negative correlation if $x_ix_j\leq 0$ for $y_i\neq y_j$. 
\begin{theorem}\label{thm:neg_cor_dual}
Suppose that $(X,y)$ has negative correlation. With relative error $\epsilon=1-\sqrt{2/\pi}$, there exists a polynomial-time algorithm to solve \texttt{Approx-Dual} and \texttt{Approx-Dual-Hinge}.  For $g(\lambda)$ satisfying $g(a\lambda)\geq aC g(\lambda),\forall a\in(0,1]$ with parameter $C\in(0,1]$, with $\epsilon=1-C\sqrt{2/\pi}$, there exist a polynomial-time algorithm to solve \texttt{Approx-Dual-General}.
\end{theorem}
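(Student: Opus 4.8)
The plan is to first exploit the negative-correlation hypothesis to collapse the two-zonotope Hausdorff constraint of Proposition~\ref{prop:hauss_dist} into two \emph{single}-zonotope constraints, and then to replace each resulting (NP-hard) Max-Cut-type quadratic by its semidefinite relaxation, whose value I can control to within the factor $2/\pi$ by Nesterov's theorem. The appearance of $\sqrt{2/\pi}$ will then come from taking a square root, since the dual constraint bounds a \emph{norm} rather than its square.

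First I would show that negative correlation forces the two zonotopes into mutually nonpositive inner-product position. Writing a generic point of the first zonotope as $a=X_+^T\diag(\lambda_+)b_+$ and of the second as $c=X_-^T\diag(\lambda_-)b_-$, the constraint $\diag(y)\lambda\ge0$ together with the sign convention $\lambda_-=\{-\lambda_i\}_{i:y_i=-1}$ gives $\lambda_+,\lambda_-\ge0$; since $b_+,b_-\in[0,1]$ and $x_i^Tx_j\le0$ for $y_i\ne y_j$, every summand of $a^Tc=\sum_{i,j}(\lambda_+)_i(b_+)_i(\lambda_-)_j(b_-)_j\,x_i^Tx_j$ is nonpositive, so $a^Tc\le0$. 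Hence, in the distance $\|a-c\|_2$ of Lemma~\ref{lem:maximin}, one has $\|a-c\|_2^2\ge\|a\|_2^2+\|c\|_2^2\ge\|a\|_2^2$, while $b_-=0$ shows $\min_{b_-}\|a-c\|_2\le\|a\|_2$; therefore the inner minimum is attained at the origin and equals $\|a\|_2$, and symmetrically for the reversed order. Combining with Lemma~\ref{lem:maximin}, the dual constraint reduces to the two single-zonotope conditions $\max_{b_+\in[0,1]^{n_+}}\|X_+^T\diag(\lambda_+)b_+\|_2\le1$ and $\max_{b_-\in[0,1]^{n_-}}\|X_-^T\diag(\lambda_-)b_-\|_2\le1$.

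Next, exactly as in~\eqref{equ:connect_maxcut}, each single-zonotope condition is a Max-Cut quadratic: for the positive class the squared max-norm equals $\mathrm{MC}_+(\lambda_+):=\max_{b\in\{0,1\}^{n_+}}b^TA_+(\lambda_+)b$ with $A_+(\lambda_+)=\diag(\lambda_+)X_+X_+^T\diag(\lambda_+)\succeq0$, and similarly for $\mathrm{MC}_-$. I would relax each by its Max-Cut SDP value $\mathrm{SDP}_\pm(\lambda_\pm)\ge\mathrm{MC}_\pm(\lambda_\pm)$ and solve $\max_\lambda \lambda^Ty$ subject to $\diag(y)\lambda\ge0$, $\mathrm{SDP}_+(\lambda_+)\le1$, and $\mathrm{SDP}_-(\lambda_-)\le1$. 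The step I expect to be the main obstacle is checking that this is a genuine polynomial-size SDP in $\lambda$: after the homogenization of~\eqref{equ:connect_maxcut}, $A_\pm$ factors as $\tilde G_\pm(\lambda_\pm)^T\tilde G_\pm(\lambda_\pm)$ with $\tilde G_\pm$ \emph{linear} in $\lambda_\pm$, so dualizing the inner Max-Cut SDP and applying a Schur complement turns $\mathrm{SDP}_\pm(\lambda_\pm)\le1$ into a linear matrix inequality in the auxiliary variables $(\lambda_\pm,\nu_\pm)$ together with $\bone^T\nu_\pm\le1$. Since $\mathrm{SDP}_\pm\ge\mathrm{MC}_\pm$, any $\lambda$ feasible for this SDP satisfies the true dual constraint, so its optimizer, computed in polynomial time by an interior-point method, is an admissible dual point.

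Finally I would bound the objective loss by homogeneity. Let $\lambda^*$ be optimal for the true dual, so $\mathrm{MC}_\pm(\lambda^*_\pm)\le1$; Nesterov's $2/\pi$ bound for PSD forms gives $\mathrm{SDP}_\pm\le\tfrac{\pi}{2}\mathrm{MC}_\pm\le\tfrac{\pi}{2}$, and since $\mathrm{SDP}_\pm$ is homogeneous of degree two, $\lambda=\sqrt{2/\pi}\,\lambda^*$ is SDP-feasible, preserves $\diag(y)\lambda\ge0$, and attains objective $\sqrt{2/\pi}\,D$. Thus the SDP optimum is at least $(1-\epsilon)D$ with $\epsilon=1-\sqrt{2/\pi}$, and its optimizer is feasible and within the claimed relative error, solving \texttt{Approx-Dual}. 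For \texttt{Approx-Dual-Hinge} the same scaled point also obeys the extra bound $\diag(y)\lambda\le1$ because $\sqrt{2/\pi}<1$; for \texttt{Approx-Dual-General} I would maximize the concave objective $g(\diag(y)\lambda)$ over the same feasible set and use $g(a\lambda)\ge aC\,g(\lambda)$ with $a=\sqrt{2/\pi}$ to obtain objective at least $\sqrt{2/\pi}\,C\,D^{\text{gen}}$, i.e. $\epsilon=1-C\sqrt{2/\pi}$.
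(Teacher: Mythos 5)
Your proposal is correct and follows the same overall route as the paper: reduce the two-zonotope Hausdorff constraint to two single-zonotope norm constraints via the nonpositive cross term forced by negative correlation (this is exactly Proposition~\ref{prop:neg}, proved as in Proposition~\ref{prop:ortho}), relax each resulting Max-Cut quadratic to its SDP value, certify feasibility of the relaxed optimizer for the true dual from $\mathrm{MC}\le\mathrm{SDP}$, and recover the $\sqrt{2/\pi}$ factor by scaling $\lambda^*$ using Nesterov's bound together with the degree-two homogeneity of the SDP value; your handling of the hinge box constraint and of the general loss via $g(a\lambda)\ge aC\,g(\lambda)$ also matches the paper. The one genuine point of divergence is how you establish polynomial-time solvability of the SDP-constrained problem: the paper treats $S=\{\lambda: c_2(\lambda)\le1\}$ as a convex body, builds a $1$-approximate separation oracle from a subgradient of $c_2$ (Propositions~\ref{prop:oracle} and~\ref{prop:poly}), and runs the ellipsoid method, whereas you dualize the inner Max-Cut SDP (strong duality holds since $Z=I$ is strictly feasible) and convert $\mathrm{SDP}_\pm(\lambda_\pm)\le1$ into the lifted LMI $\bone^T\nu_\pm\le1$, $\bigl(\begin{smallmatrix}\diag(\nu_\pm)&\tilde G_\pm(\lambda_\pm)^T\\ \tilde G_\pm(\lambda_\pm)&I\end{smallmatrix}\bigr)\succeq0$, which is affine in $(\nu_\pm,\lambda_\pm)$ because $\tilde G_\pm$ is linear in $\lambda_\pm$. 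Your reformulation yields a single explicit polynomial-size SDP solvable by interior-point methods and avoids the oracle machinery of \cite{bhattiprolu2021framework}; the paper's oracle approach is more generic and extends verbatim to the variants \eqref{dual:sdp_hinge} and \eqref{dual:sdp_gen}. Both are valid, and the approximation guarantees you derive coincide with the stated ones.
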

According to Lemma \ref{lem:maximin}, we can reduce the dual constraints into constraints on two maximin problems. By utilizing the property of the dataset, we can further simply the constraints by the following proposition. 

\begin{proposition}\label{prop:neg}
    Suppose that $(X,y)$ has negative correlation. Then, we have
    \begin{equation}
    \begin{aligned}
            &\max_{b_+\in[0,1]^{n_+}}\min_{b_-\in[0,1]^{n_-}}\|X_+^T\diag(\lambda_+)b_++X_-^T\diag(\lambda_-)b_-\|_2 
            = \max_{b_+\in[0,1]^{n_+}}\|X_+^T\diag(\lambda_+)b_+\|_2,\\
        &\max_{b_-\in[0,1]^{n_-}}\min_{b_+\in[0,1]^{n_+}}\|X_+^T\diag(\lambda_+)b_++X_-^T\diag(\lambda_-)b_-\|_2 
        = \max_{b_-\in[0,1]^{n_-}}\|X_-^T\diag(\lambda_-)b_-\|_2.
    \end{aligned}
    \end{equation}
\end{proposition}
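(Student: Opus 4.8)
The plan is to reduce each maximin identity to a pointwise statement: for every fixed $b_+\in[0,1]^{n_+}$, I would show that the inner minimization over $b_-$ is attained at $b_-=0$, which collapses the combined norm to $\|X_+^T\diag(\lambda_+)b_+\|_2$; taking the maximum over $b_+$ on both sides then yields the claimed equality immediately. The second identity follows from the first by exchanging the roles of the positive and negative classes, so I would only carry out the argument for the first line and invoke symmetry for the second.

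I would write $v:=X_+^T\diag(\lambda_+)b_+$ and $w:=X_-^T\diag(\lambda_-)b_-$ and expand the squared objective as $\|v-w\|_2^2=\|v\|_2^2-2v^Tw+\|w\|_2^2$ (using the sign convention of Lemma \ref{lem:maximin} for the cross term). The key step will be to show that this cross term is nonpositive, i.e. $v^Tw\leq 0$, for all feasible $b_+,b_-$. Writing the bilinear form out gives $v^Tw=b_+^T\diag(\lambda_+)(X_+X_-^T)\diag(\lambda_-)b_-$. Here $b_+,b_-$ are entrywise nonnegative since they lie in $[0,1]^{n_+}$ and $[0,1]^{n_-}$, and $\diag(\lambda_+),\diag(\lambda_-)$ are entrywise nonnegative because the dual constraint $\diag(y)\lambda\geq 0$ forces $\lambda_+\geq 0$ and $\lambda_-\geq 0$. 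The negative correlation hypothesis $x_i^Tx_j\leq 0$ for $y_i\neq y_j$ is precisely the statement $X_+X_-^T\leq 0$ entrywise, so every term of the resulting sum is a product of nonnegative factors with a single nonpositive factor, whence $v^Tw\leq 0$.

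Given $v^Tw\leq 0$, the expansion will yield $\|v-w\|_2^2=\|v\|_2^2-2v^Tw+\|w\|_2^2\geq\|v\|_2^2$, since both $-2v^Tw$ and $\|w\|_2^2$ are nonnegative; hence $\min_{b_-}\|v-w\|_2\geq\|v\|_2$. The matching upper bound comes from the feasible choice $b_-=0$, which gives $w=0$ and objective exactly $\|v\|_2$. Therefore the inner minimum equals $\|v\|_2=\|X_+^T\diag(\lambda_+)b_+\|_2$ for every $b_+$, and maximizing over $b_+$ completes the first identity.

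I expect the only genuine content to be the sign computation $v^Tw\leq 0$; the rest is a one-line expansion together with the feasibility of $b_-=0$. The main point to handle carefully is the bookkeeping of the sign conventions on $\lambda_-$ and on the zonotope generators, so that the cross term emerges with the favorable sign — geometrically, this is the statement that adding the negative-class contribution can only push a positive-class point farther from the negative-class zonotope, never closer. Once $\lambda_+,\lambda_-\geq 0$ and the generator signs are aligned with Lemma \ref{lem:maximin}, there is no real obstacle.
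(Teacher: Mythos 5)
Your proof is correct and is essentially the paper's own argument: the paper proves Proposition \ref{prop:neg} by pointing to the proof of Proposition \ref{prop:ortho}, whose first half is exactly your computation — expand the squared norm, use $\lambda_+,\lambda_-\geq 0$, $b_+,b_-\geq 0$ and the entrywise sign of $X_+X_-^T$ to show the cross term has the favorable sign, conclude the inner minimum is attained at $b_-=0$, and finish by symmetry. You also correctly identify why the result stops at $\max_{b_+}\|X_+^T\diag(\lambda_+)b_+\|_2$ here (no closed form for the outer max without the additional $X_+X_+^T\geq 0$ assumption of the orthogonal separable case), and your care with the sign conventions on $\lambda_-$ matches what the statement requires.
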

\begin{proof}
The proof follows from the proof of Proposition \ref{prop:ortho}.
\end{proof}
Therefore, we can rewrite the dual problem as 
\begin{equation}
\begin{aligned}
    \max \;&\lambda_+^T\bone+\lambda_-\bone 
    \text{ s.t. } &\lambda_+\geq0, \lambda_-\geq0, 
    &\max_{b_+\in[0,1]^{n_+}}\|X_+^T\diag(\lambda_+)b_+\|_2\leq 1,
    &\max_{b_-\in[0,1]^{n_-}}\|X_-^T\diag(\lambda_-)b_-\|_2\leq 1.
\end{aligned}
\end{equation}
We can solve the dual problem by solving the following two problems separately:
\begin{equation}\label{dual:p1}
\begin{aligned}
    \max \;&\lambda_+^T\bone   \text{ s.t. } &\lambda_+\geq0, \max_{b_+\in[0,1]^{n_+}}\|X_+^T\diag(\lambda_+)b_+\|_2\leq 1.
\end{aligned}
\end{equation}
\begin{equation}\label{dual:p2}
\begin{aligned}
    \max \;&\lambda_-^T\bone    \text{ s.t. } &\lambda_-\geq0, \max_{b_-\in[0,1]^{n_+}}\|X_-^T\diag(\lambda_-)b_-\|_2\leq 1.
\end{aligned}
\end{equation}
To show that \texttt{Approx-Dual} can be solved in polynomial-time with relative error $\epsilon=\pi/2-1$, it is sufficient to show that the following problem can be approximated polynomial-time with relative error $\epsilon=\pi/2-1$:
\begin{equation}\label{dual:reduce}
\begin{aligned}
    \max \;&\lambda^T\bone,   \text{ s.t. } &\lambda\geq0, \max_{b\in[0,1]^{n}}\|X^T\diag(\lambda)b\|_2\leq 1.
\end{aligned}
\end{equation}
For fixed $\lambda\in\mbR^n$, let $c_1(\lambda)$ be the optimal value of 
$$
\frac{1}{4}\max_{z\in\{-1,1\}^{n+1}} z^T\bmbm{I\\\bone^T}\diag( \lambda)XX^T\diag(  \lambda)\bmbm{I&\bone} z.
$$
According to \eqref{equ:connect_maxcut}, we can rewrite the dual constraint as $c_1(\lambda)\leq 1$. Consider the convex program with optimal value $c_2(\lambda)$:
\begin{equation}
    c_2(\lambda)=\frac{1}{4}\max_{Z\in\mbS^{n+1}} \tr\pp{Z\bmbm{I\\\bone^T}\diag( \lambda)XX^T\diag(\lambda)\bmbm{I&\bone}}, \text{ s.t. } \diag(Z)=1, Z\succeq 0.
\end{equation}
\begin{lemma}\label{lem:sdp_bnd}
Suppose that $ Q\succeq 0$. Consider the following optimization problem
\begin{equation}
    \textbf{OPT}=\max_{z\in\{-1,1\}^{n+1}} z^T Qz.
\end{equation}
The SDP relaxation is given by
\begin{equation}
     \textbf{SDP}=\max_{Z\in\mbS^n} \tr(Z Q) \text{ s.t. }\diag(Z)=\bone, Z\succeq 0.
\end{equation}
Let $Z_0\succeq 0$ be an arbitrary covariance matrix with $\diag(Z_0)=1$. Then, we have
\begin{equation}
    \mbE_{z=\text{sign}(r),r\sim\mcN(0,Z_0)}[z^TQz]\geq \frac{2}{\pi} \textbf{SDP}\geq \frac{2}{\pi} \textbf{OPT}.
\end{equation}
\end{lemma}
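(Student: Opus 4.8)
The plan is to run Nesterov's $\tfrac{2}{\pi}$ analysis of the semidefinite relaxation, which applies precisely because $Q\succeq 0$ (as opposed to the edge-by-edge Goemans--Williamson constant, which is needed only for indefinite objectives). The argument splits into the relaxation bound $\textbf{SDP}\geq\textbf{OPT}$ and the rounding bound. For the former, I would note that any $z\in\{-1,1\}^{n+1}$ produces a feasible SDP point $Z=zz^T$, since $Z\succeq 0$ and $\diag(zz^T)=\bone$, with objective $\tr(zz^TQ)=z^TQz$; maximizing over $z$ yields $\textbf{SDP}\geq\textbf{OPT}$. I then fix $Z_0$ to be an optimizer of the SDP, so that $\tr(QZ_0)=\textbf{SDP}$ supplies the middle inequality of the chain.

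For the rounding bound, the central computation is the second moment of the signs. First I would invoke the Grothendieck/Sheppard identity: for $r\sim\mcN(0,Z_0)$ with $\diag(Z_0)=\bone$, the rounded vector $z_i=\text{sign}(r_i)$ satisfies $\mbE[z_iz_j]=\tfrac{2}{\pi}\arcsin\pp{(Z_0)_{ij}}$, where the unit-variance normalization makes the correlation coincide with $(Z_0)_{ij}$. Summing against $Q$ gives $\mbE[z^TQz]=\tfrac{2}{\pi}\tr\pp{Q\,\arcsin[Z_0]}$, with $\arcsin[Z_0]$ denoting the entrywise application of $\arcsin$ to $Z_0$. It then suffices to prove $\tr\pp{Q\,\arcsin[Z_0]}\geq\tr(QZ_0)$, i.e. $\tr\pp{Q(\arcsin[Z_0]-Z_0)}\geq 0$.

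The key structural step, and the one I expect to carry the weight of the proof, is the matrix inequality $\arcsin[Z_0]-Z_0\succeq 0$. I would expand $\arcsin(x)-x=\sum_{k\geq 1}c_k\,x^{2k+1}$ with all Taylor coefficients $c_k\geq 0$, valid entrywise since $|(Z_0)_{ij}|\leq 1$ by Cauchy--Schwarz on the unit-diagonal PSD matrix $Z_0$. This represents $\arcsin[Z_0]-Z_0=\sum_{k\geq 1}c_k\,Z_0^{\circ(2k+1)}$ as a nonnegative combination of Hadamard powers of $Z_0$. By the Schur product theorem each $Z_0^{\circ m}\succeq 0$, so the whole sum is PSD; the technical care needed here is to justify termwise convergence of the series so that positive semidefiniteness passes to the limit. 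Combining $\arcsin[Z_0]-Z_0\succeq 0$ with $Q\succeq 0$ and the fact that $\tr(AB)\geq 0$ for PSD $A,B$ gives the desired trace inequality. Chaining everything yields $\mbE[z^TQz]=\tfrac{2}{\pi}\tr\pp{Q\,\arcsin[Z_0]}\geq\tfrac{2}{\pi}\tr(QZ_0)=\tfrac{2}{\pi}\textbf{SDP}\geq\tfrac{2}{\pi}\textbf{OPT}$, completing the proof.
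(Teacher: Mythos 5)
Your proof is correct and follows essentially the same route as the paper's: the Grothendieck identity $\mbE[zz^T]=\tfrac{2}{\pi}\arcsin[Z_0]$, the nonnegative Taylor coefficients of $\arcsin$ combined with the Schur product theorem to conclude $\arcsin[Z_0]\succeq Z_0$, and $\tr(AB)\geq 0$ for PSD $A,B$. The only differences are that you explicitly take $Z_0$ to be an SDP optimizer so that $\tr(QZ_0)=\textbf{SDP}$ --- which is in fact needed for the stated conclusion, since for a truly arbitrary unit-diagonal $Z_0$ the argument only yields $\tfrac{2}{\pi}\tr(QZ_0)$ rather than $\tfrac{2}{\pi}\textbf{SDP}$ --- and that you supply the standard verification of $\textbf{SDP}\geq\textbf{OPT}$ via $Z=zz^T$, which the paper leaves implicit.
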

From Lemma \ref{lem:sdp_bnd}, we note that
\begin{equation}\label{equ:bnd}
   \frac{2}{\pi} c_2(\lambda)\leq c_1(\lambda)\leq c_2(\lambda).
\end{equation}
Therefore, consider the following optimization problem. 
\begin{equation}\label{dual:sdp}
\begin{aligned}
    \max\;&\lambda^T\bone
    \text{ s.t. }&\max_{Z\succeq 0, \diag(Z)=\bone} \frac{1}{4}\tr(Z\bmbm{I\\\bone^T}\diag(\lambda)XX^T\diag(\lambda)\bmbm{I&\bone})\leq 1,
    &\lambda\geq 0.
\end{aligned}
\end{equation}
Suppose that $\tilde \lambda$ is the optimal solution to \eqref{dual:sdp}. We can show that $\tilde \lambda$ is an approximate solution to the dual problem, i.e., 
$$
 \tilde \lambda^Ty\geq D/\sqrt{\pi/2},;\max_{u:\|u\|_2\leq 1}| \tilde \lambda^T(Xu)_+|\leq 1.
$$
Let $\lambda^*$ be the optimal solution to \eqref{maxmargin:dual}. We note that
$$
c_2(\sqrt{2/\pi} \lambda^*)=\frac{2}{\pi} c_2(\lambda^*)\leq c_1(\lambda^*)\leq 1.
$$
This implies that $\sqrt{2/\pi} \lambda^*$ is feasible to \eqref{dual:sdp}. Thus, we have
\begin{equation}
    D/\sqrt{\pi/2}=(\sqrt{2/\pi} \lambda^*)^Ty\leq  \tilde \lambda^Ty.
\end{equation}
On the other hand, we have $c_1(\tilde \lambda)\leq c_2(\tilde \lambda)\leq 1$, which implies that $\max_{u:\|u\|_2\leq 1}| \tilde \lambda^T(Xu)_+|\leq 1$. 

For the hinge loss, we consider the following problem:
\begin{equation}\label{dual:sdp_hinge}
\begin{aligned}
    \max\;&\lambda^T\bone
    \text{ s.t. }&\max_{Z\succeq 0, \diag(Z)=\bone} \frac{1}{4}\tr(Z\bmbm{I\\\bone^T}\diag(\lambda)XX^T\diag(\lambda)\bmbm{I&\bone})\leq \beta^2,
    &0\leq \lambda\leq 1.
\end{aligned}
\end{equation}
Let $\tilde \lambda$ be the optimal solution to the above problem. It is sufficient to show that $\tilde \lambda$ is an approximate solution to the dual problem, i.e., 
$$
 \tilde \lambda^Ty\geq D^\text{hinge}/\sqrt{\pi/2},;\max_{u:\|u\|_2\leq 1}| \tilde \lambda^T(Xu)_+|\leq \beta, 0\leq \tilde \lambda\leq1.
$$
Let $\lambda^*$ be the optimal solution to \eqref{maxmargin:dual_general}. Similarly, we can show that $\sqrt{2/\pi} \lambda^*$ is feasible to \eqref{dual:sdp_hinge}. Thus, we have
\begin{equation}
    D^\text{hinge}/\sqrt{\pi/2}=(\sqrt{2/\pi} \lambda^*)^Ty\leq  \tilde \lambda^Ty.
\end{equation}
On the other hand, we have $c_1(\tilde \lambda)\leq c_2(\tilde \lambda)\leq 1$, which implies that $\max_{u:\|u\|_2\leq 1}| \tilde \lambda^T(Xu)_+|\leq 1$. 

For the general loss, we consider the following problem:
\begin{equation}\label{dual:sdp_gen}
\begin{aligned}
    \max\;&g(\lambda)
    \text{ s.t. }&\max_{Z\succeq 0, \diag(Z)=\bone} \frac{1}{4}\tr(Z\bmbm{I\\\bone^T}\diag(\lambda)XX^T\diag(\lambda)\bmbm{I&\bone})\leq \beta^2,
    &\lambda\geq 1.
\end{aligned}
\end{equation}
Let $\tilde \lambda$ be the optimal solution to the above problem. It is sufficient to show that $\tilde \lambda$ is an approximate solution to the dual problem, i.e., 
$$
 g(\tilde \lambda) \geq C D^\text{gen}/\sqrt{\pi/2}, \max_{u:\|u\|_2\leq 1}| \tilde \lambda^T(Xu)_+|\leq \beta, \tilde \lambda\geq0.
$$
Let $\lambda^*$ be the optimal solution to \eqref{maxmargin:dual_general}. Similarly, we can show that $\sqrt{2/\pi} \lambda^*$ is feasible to \eqref{dual:sdp_hinge}. Thus, we have
\begin{equation}
    C D^\text{gen}/\sqrt{\pi/2}\leq g(\sqrt{2/\pi} \lambda^*)\leq g(\tilde \lambda).
\end{equation}

For \eqref{dual:sdp} or \eqref{dual:sdp_hinge} or \eqref{dual:sdp_gen}, we can use the ellipsoid method in \cite{boyd1991linear} to solve it. Denote $S=\{\lambda:c_2(\lambda)\leq 1\}\subseteq \mbR^n$, which is a convex set. We describe the definition of Approximate Separation Oracle introduced in \cite{bhattiprolu2021framework} as follows.
\begin{definition}
    For $\alpha\geq 1$, an $\alpha$-approximate separation oracle for an $\alpha$-separable body $B\subseteq R^n$ is an oracle that on any input point $x\in\mbR^n$, either correctly outputs `Inside' when $x\in \alpha\cdot B$ or outputs a hyperplane separating $x$ from $B$.
\end{definition}
In the following proposition, we can construct an $1$-approximate separation oracle for $S$. 
\begin{proposition}\label{prop:oracle}
For any $\lambda_0\in\mbR^n$, we compute $c_2(\lambda_0)$ by solving the SDP. We generate the $1$-approximate separation oracle for $S$ as follows. If $c_2(\lambda_0)\leq 1$, outputs `Inside'. Otherwise, outputs $H=\{\lambda|c_2'(\lambda_0)(\lambda-\lambda_0)=0\}$. 
\end{proposition}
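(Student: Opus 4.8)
The plan is to show that the stated procedure is a genuine (exact) separation oracle for $S$, which is precisely what a $1$-approximate separation oracle requires once $\alpha=1$ collapses the condition ``$x\in\alpha\cdot B$'' to ``$x\in S$''. The whole argument rests on one structural fact: the function $c_2(\lambda)$ is convex in $\lambda$, so that $S=\{\lambda:c_2(\lambda)\leq 1\}$ is a convex sublevel set and a subgradient at any exterior point furnishes a valid separating hyperplane.

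First I would establish convexity of $c_2$. Fix a feasible $Z\succeq 0$ with $\diag(Z)=\bone$ and abbreviate $A=\bmbm{I\\\bone^T}$, $B=XX^T\succeq 0$, and $C=A^TZA$; note $C\succeq 0$ since $v^TCv=(Av)^TZ(Av)\geq 0$. Using the cyclic property of the trace and the fact that $\diag(\lambda)$ is diagonal, a direct computation rewrites the inner objective as a quadratic form, $\tfrac14\tr\big(ZA\diag(\lambda)B\diag(\lambda)A^T\big)=\tfrac14\,\lambda^T(C\odot B)\lambda$, where $\odot$ denotes the Hadamard product. By the Schur product theorem the Hadamard product of two positive semidefinite matrices is positive semidefinite, so $C\odot B\succeq 0$ and this quadratic form is convex in $\lambda$. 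Since $c_2$ is the pointwise maximum over the (compact) feasible set of $Z$ of these convex functions, $c_2$ is convex, and hence $S$ is convex.

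Next I would produce the subgradient used to define $H$. Writing $c_2(\lambda)=\max_Z f(\lambda,Z)$ with $f(\cdot,Z)$ convex (indeed quadratic) for each fixed $Z$, Danskin's theorem gives that $\nabla_\lambda f(\lambda_0,Z^\star)$ is a subgradient of $c_2$ at $\lambda_0$, where $Z^\star$ is any maximizer of the SDP defining $c_2(\lambda_0)$. Explicitly, $c_2'(\lambda_0)=\tfrac12(C^\star\odot B)\lambda_0$ with $C^\star=A^TZ^\star A$. Thus solving the SDP, as the oracle does, returns both the value $c_2(\lambda_0)$ and a maximizer $Z^\star$, and hence the subgradient $c_2'(\lambda_0)$ in closed form.

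Finally I would verify correctness in the two cases. If $c_2(\lambda_0)\leq 1$ then $\lambda_0\in S$ and the output ``Inside'' is correct since $\alpha\cdot S=S$. If instead $c_2(\lambda_0)>1$, then for every $\lambda\in S$ the subgradient inequality yields $c_2(\lambda_0)+c_2'(\lambda_0)^T(\lambda-\lambda_0)\leq c_2(\lambda)\leq 1<c_2(\lambda_0)$, so $c_2'(\lambda_0)^T(\lambda-\lambda_0)<0$ for all $\lambda\in S$ while the expression vanishes at $\lambda_0$; hence $H=\{\lambda:c_2'(\lambda_0)^T(\lambda-\lambda_0)=0\}$ strictly separates $\lambda_0$ from $S$. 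I expect the only genuine obstacle to be the convexity step, namely recognizing the Hadamard structure of the inner quadratic and invoking the Schur product theorem, together with the appeal to Danskin's theorem for a valid subgradient from the SDP solution; the separation argument itself is then routine.
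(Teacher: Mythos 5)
Your proof is correct and follows essentially the same route as the paper: both establish that $c_2$ is convex as a pointwise maximum over $Z$ of convex quadratics in $\lambda$, take the gradient at the maximizing $Z$ as a subgradient, and apply the subgradient inequality to show the hyperplane $H$ separates $\lambda_0$ from $S$ when $c_2(\lambda_0)>1$. You supply somewhat more detail than the paper (the Hadamard-product rewriting with the Schur product theorem for convexity of the inner quadratic, and Danskin's theorem for the subgradient), but the underlying argument is the same.
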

\begin{proposition}\label{prop:poly}
There exists a polynomial-time algorithm in time $\text{poly}(n,\log(1/\epsilon))$ returns $\lambda\in S$ such that
\begin{equation}
    \lambda^Ty\geq d^*-\epsilon,
\end{equation}
where $d^*$ is the optimal value of \eqref{dual:sdp}. 
\end{proposition}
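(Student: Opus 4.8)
The plan is to solve the linear maximization $\max \lambda^T y$ over the convex feasible region $F=S\cap\{\lambda\geq 0\}$ by running the ellipsoid method equipped with the separation oracle of Proposition~\ref{prop:oracle}. Three ingredients are needed: a polynomial-time separation oracle for $F$, a polynomial-size outer bounding ball together with a full-dimensional interior, and the standard reduction from linear optimization to separation (as in \cite{boyd1991linear}, in the approximate-separation-oracle form of \cite{bhattiprolu2021framework} specialized to the exact $1$-approximate oracle at hand).

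First I would argue that the oracle runs in polynomial time. Evaluating $c_2(\lambda_0)$ amounts to solving a single semidefinite program, which can be solved to any prescribed additive accuracy in polynomial time. Writing $M(\lambda)=\bmbm{I\\\bone^T}\diag(\lambda)XX^T\diag(\lambda)\bmbm{I&\bone}$, when $c_2(\lambda_0)>1$ the optimal matrix $Z^*$ returned by the SDP yields, by Danskin's theorem, a subgradient $g=\tfrac14\nabla_\lambda\tr(Z^*M(\lambda))\big|_{\lambda_0}$ of the convex function $c_2$ at $\lambda_0$. Convexity of $c_2$ then gives, for every $\lambda\in S$, the inequality $\langle g,\lambda-\lambda_0\rangle\leq c_2(\lambda)-c_2(\lambda_0)\leq 1-c_2(\lambda_0)<0$, so the hyperplane $H$ of Proposition~\ref{prop:oracle} indeed separates $\lambda_0$ from $S$; for the polyhedral constraint $\lambda\geq 0$ a separating hyperplane is read off from any violated coordinate. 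Hence $F$ has a polynomial-time separation oracle, each call costing one SDP solve.

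Next I would establish the well-boundedness the ellipsoid method requires. Taking the feasible choice $Z=I$ in the definition of $c_2$ gives $c_2(\lambda)\geq\tfrac14\tr(M(\lambda))\geq\tfrac14\sum_i\lambda_i^2\|x_i\|_2^2$, so every $\lambda\in F$ obeys $0\leq\lambda_i\leq 2/\|x_i\|_2$ (dropping the zero rows of $X$, which contribute nothing). Thus $F$ sits inside a ball of radius $R$ with $\log R$ polynomial in the input bit-length. Conversely, since $c_2(0)=0$ and $c_2$ is continuous, a full-dimensional ball around a point such as $\lambda=\eta\bone$ with small $\eta>0$ lies in $F$, giving an inscribed radius $r$ with $\log(1/r)$ polynomially bounded. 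Plugging these bounds into the reduction from the linear program $\max_{\lambda\in F}\lambda^Ty$ to feasibility yields, after $\mathrm{poly}(n,\log(R/r),\log(1/\epsilon))$ iterations, a point $\lambda\in F\subseteq S$ with $\lambda^Ty\geq d^*-\epsilon$; since $\log(R/r)$ is polynomial in the data encoding, the total time is $\mathrm{poly}(n,\log(1/\epsilon))$.

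I expect the main obstacle to be the bookkeeping around well-boundedness and the approximate nature of the SDP solve. As the objective value approaches $d^*$ the sliced feasible set $F\cap\{\lambda^Ty\geq t\}$ becomes thin, and the oracle's verdict near the boundary of $S$ carries the additive error of the SDP evaluation; I must check that both effects can be folded into the target accuracy $\epsilon$ without inflating the iteration count beyond a polynomial. This is precisely what the weak-optimization guarantees of the ellipsoid method deliver, so the remaining work is quantitative verification of the radius bounds above rather than any new idea.
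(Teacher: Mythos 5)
Your argument is correct and follows the same route as the paper, which simply invokes Proposition 3.6 of \cite{bhattiprolu2021framework} (the ellipsoid-method reduction from optimization to separation) applied to the $1$-approximate separation oracle of Proposition \ref{prop:oracle}; you have merely unpacked the well-boundedness and subgradient details that the citation black-boxes. No gap.
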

\begin{proof}
This is a direct corollary of Proposition 3.6 in \cite{bhattiprolu2021framework}. 
\end{proof}
This implies that the ellipsoid method produces an $1-\sqrt{2/\pi}$ approximation of the dual problem in polynomial-time. Similarly, we can solve \eqref{dual:sdp_hinge} or \eqref{dual:sdp_gen} in polynomial-time. This completes the proof for Theorem \ref{thm:neg_cor_dual}.

\subsection{From Approx-dual to Approx-primal}
\begin{theorem}\label{thm:neg_cor_primal}
Let $\epsilon_0>0$ and $\delta>0$. Suppose that $(X,y)$ has negative correlation. With relative error $\epsilon=\sqrt{\pi/2}(1+\epsilon_0)-1$, there exists a polynomial-time algorithm (polynomial in $\epsilon_0^{-1}$ and $\log(\delta^{-1})$) to solve \texttt{Approx-Primal} and \texttt{Approx-Primal-Hinge} with probability at least $1-\delta$. For $g(\lambda)$ satisfying $g(a\lambda)\geq a C g(\lambda),\forall a\in(0,1]$ with parameter $C\in(0,1]$, with $\epsilon=C^{-1}\sqrt{\pi/2}-1$, there exists a polynomial-time algorithm to solve the \texttt{Approx-Primal-General} problem with probability at least $1-\delta$.
\end{theorem}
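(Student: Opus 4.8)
The plan is to produce, from the approximate dual solution $\tilde\lambda$ guaranteed by Theorem \ref{thm:neg_cor_dual}, an explicit primal-feasible network whose regularized cost is at most $\sqrt{\pi/2}(1+\epsilon_0)$ times $P_\text{non-cvx}$. Since there is no duality gap we have $P_\text{non-cvx}=D$, so it suffices to exhibit a $\Theta$ with $y_if^\text{ReLU}(x_i;\Theta)\ge 1$ for all $i$ and $R(\Theta)\le \sqrt{\pi/2}(1+\epsilon_0)D$. Because $(X,y)$ has negative correlation, Proposition \ref{prop:neg} decouples the construction into an independent positive part (built from $X_+,\tilde\lambda_+$) and negative part (built from $X_-,\tilde\lambda_-$): a neuron whose direction lies in the cone of $X_+$ produces zero activation on $X_-$ and vice versa, so the two groups of neurons do not interfere and can be designed separately.

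First I would solve the SDP \eqref{dual:sdp} (resp. \eqref{dual:sdp_hinge}, \eqref{dual:sdp_gen}) to obtain $\tilde\lambda$ together with the optimizing PSD matrices $Z_+^\star,Z_-^\star$ of the inner maximizations at $\tilde\lambda$, which are tight at the boundary of the feasible set. Next, mirroring Goemans--Williamson, I would apply the randomized rounding of Lemma \ref{lem:sdp_bnd} to $Z_+^\star$: sample $r\sim\mcN(0,Z_+^\star)$, set $z=\text{sign}(r)\in\{-1,1\}^{n_++1}$, read off a pattern $b_+\in\{0,1\}^{n_+}$ from the first $n_+$ signs relative to the last coordinate, and form the candidate neuron direction $\hat u_+ = X_+^T\diag(\tilde\lambda_+)b_+/\|X_+^T\diag(\tilde\lambda_+)b_+\|_2$; similarly round $Z_-^\star$ to directions $\hat u_-$. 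Repeating this $T=\text{poly}(\epsilon_0^{-1},\log\delta^{-1})$ times yields a finite dictionary of directions $\{\hat u^{(k)}\}$. Finally, over this dictionary I would solve the finite-dimensional convex master program
\[ \min_{\alpha}\ \sum_k|\alpha_k|\quad\text{s.t.}\quad y_j\sum_k (x_j^T\hat u^{(k)})_+\,\alpha_k\ge 1,\ \forall j\in[n], \]
an LP for the max-margin problem and a convex program for the hinge and general losses, and reconstruct the network by the norm-splitting $w_{1,k}=\sqrt{|\alpha_k|}\,\hat u^{(k)}$, $w_{2,k}=\text{sign}(\alpha_k)\sqrt{|\alpha_k|}$, so that $R(\Theta)=\sum_k|\alpha_k|$ and the margins coincide exactly with the master constraints.

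The crux, and the step I expect to be hardest, is to show that with probability at least $1-\delta$ the master program is feasible and its value is at most $\sqrt{\pi/2}(1+\epsilon_0)D$. The factor $\sqrt{\pi/2}$ should come from tracking the rounding guarantee through the Euclidean norm: Lemma \ref{lem:sdp_bnd} gives $\mbE\big[\,\|X_+^T\diag(\tilde\lambda_+)b_+\|_2^2\,\big]\ge \tfrac{2}{\pi}c_2(\tilde\lambda_+)$, so each rounded neuron realizes a $\sqrt{2/\pi}$ fraction of the optimal activation, whence the total mass needed to meet the unit margins is inflated by at most $\sqrt{\pi/2}$ relative to the optimal primal mass $D$. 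The additional $(1+\epsilon_0)$ factor and the $1-\delta$ confidence absorb the gap between this expectation and a finite sample: drawing $T=\text{poly}(\epsilon_0^{-1},\log\delta^{-1})$ roundings, a concentration and union-bound argument guarantees the dictionary contains directions whose aggregate activation is within $(1+\epsilon_0)^{-1}$ of the expectation on every constraint, which is precisely what is needed for feasibility and for the value bound. The two obstacles I anticipate are (i) certifying feasibility, i.e.\ that the randomly rounded directions can simultaneously satisfy all $n$ margin constraints, which I would address by relating the rounded patterns to the support of the dual-active zonotope vertices via Lemma \ref{lem:maximin} and Proposition \ref{prop:neg}; and (ii) propagating the exact constant $\sqrt{\pi/2}$ without extra slack, using that $\tilde\lambda$ is already dual-feasible ($c_1(\tilde\lambda)\le c_2(\tilde\lambda)\le 1$). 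For the general loss the same construction applies after rescaling $\tilde\lambda$, and the homogeneity condition $g(a\lambda)\ge aC g(\lambda)$ for $a\in(0,1]$ is exactly what converts the $\sqrt{2/\pi}$ dual-side guarantee into the $C^{-1}\sqrt{\pi/2}$ primal-side relative error claimed in the statement.
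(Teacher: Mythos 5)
Your overall strategy (SDP relaxation, Goemans--Williamson rounding of the inner SDP, a convex master program over the rounded objects, and decoupling the positive and negative classes via negative correlation) matches the paper's, but you deviate at the decisive step, and the deviation is where the argument breaks. You fix the neuron \emph{directions} to be the normalized rounded zonotope vertices $\hat u^{(k)}=X_+^T\diag(\tilde\lambda_+)b_+^{(k)}/\|X_+^T\diag(\tilde\lambda_+)b_+^{(k)}\|_2$ and optimize only the output weights $\alpha_k$. The paper instead fixes only the \emph{activation patterns} $b^i$ (as gates, after certifying via Lemma \ref{lem:valid} that each $b^i$ is realizable as $\mbI(Xw\geq 0)$) and leaves the full weight vectors $u_i$ free in the convex program \eqref{primal:sub}. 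This difference is not cosmetic: the paper's upper bound $p\le\sqrt{(1+\epsilon_0)\pi/2}\,P_\text{non-cvx}$ is obtained by dualizing \eqref{primal:sub} to get the constraints $\max_{i\in[k]}\|X^T\diag(\lambda)b^i\|_2\le 1$, relaxing the max to the average $\frac1k\sum_i\|X^T\diag(\lambda)b^i\|_2^2\le 1$, and then invoking the concentration bound of Lemma \ref{lem:prob_bnd}, which shows that $Z_k=\frac{2}{\pi}\frac1k\sum_i z^i(z^i)^T$ satisfies the trace inequality \eqref{inequ:1} \emph{uniformly over all} $\lambda\ge 0$; that uniform statement is what converts the $2/\pi$ rounding constant into the $\sqrt{\pi/2}(1+\epsilon_0)$ primal factor. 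Your master program has a different dual, with constraints $\sum_j\lambda_j(x_j^T\hat u^{(k)})_+\le 1$ evaluated at fixed directions, and none of this machinery applies to it.

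The step you yourself flag as hardest --- feasibility of the master program at cost at most $\sqrt{\pi/2}(1+\epsilon_0)D$ --- is a genuine gap, and your proposed mechanism does not close it. The bound $\mbE\|X_+^T\diag(\tilde\lambda_+)b_+\|_2^2\ge\frac{2}{\pi}c_2(\tilde\lambda_+)$ controls one aggregate quantity evaluated at the single weighting $\tilde\lambda$; it does not imply that the dictionary covers every data point, and it is entirely consistent with some sample $x_j$ (e.g., one with $\tilde\lambda_j$ tiny) having $(x_j^T\hat u^{(k)})_+$ equal to zero for every $k$, in which case your master program is infeasible or has unbounded value. The inference ``each neuron realizes a $\sqrt{2/\pi}$ fraction of the optimal activation, so the total mass inflates by at most $\sqrt{\pi/2}$'' does not follow from the norm bound. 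A further wrinkle: your $\hat u^{(k)}$ need not induce the pattern $b^{(k)}$ --- the vector realizing $b=\mbI(Xw\ge0)$ in Lemma \ref{lem:valid} is built from $\tilde Z$ and the Gaussian sample and is not $X^T\diag(\tilde\lambda)b$ --- so even the identification $(x_j^T\hat u^{(k)})_+=b_j^{(k)}\,x_j^T\hat u^{(k)}$ that your heuristic implicitly relies on can fail. The repair is to adopt the paper's formulation: round only to obtain the gates, solve \eqref{primal:sub} with free $u_i$ (reconstructing a gated ReLU network from its solution), and run the duality-plus-concentration sandwich; the $(1+\epsilon_0)$ slack and the $1-\delta$ confidence then come from Lemma \ref{lem:prob_bnd} with $k=O(\log(n)C_1^2(C_2\epsilon_0)^{-2}\log(1/\delta))$, not from a per-constraint union bound over rounded directions.
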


To prove Theorem \ref{thm:neg_cor_primal}, we first prove for the special case when $y=\bone$. Suppose that $\tilde \lambda$ is the optimal solution to \eqref{dual:sdp}. We also denote $\tilde Z$ as an optimal solution to the SDP
\begin{equation}\label{equ:sdp}
    \max_{Z\in\mbS^n} \tr(Z\bmbm{I\\\bone^T}\diag(\tilde \lambda)XX^T\diag( \tilde \lambda)\bmbm{I&\bone}) \text{ s.t. }\diag(Z)=\bone, Z\succeq 0.
\end{equation}
Let $k$ be an integer number. We randomly draw i.i.d. samples $r^1,r^2,\dots, r^k\sim \mcN(0,\tilde Z)$ and we let $z^i=\text{sign}(r^i)$. For $i\in[k]$, we define $b^i\in\{0,1\}^n$ by
$$
b^i_j = (z^i_jz^i_{n+1}+1)/2, \forall j \in [n].
$$
Then, we have
\begin{equation}
\begin{aligned}
    &\frac{1}{4}(z^i)^T\bmbm{I\\\bone^T}\diag(\tilde\lambda)XX^T\diag(\tilde \lambda)\bmbm{I&\bone}z^i 
    = &\frac{1}{4}\norm{X^T\diag(\lambda)(z^i_{1:n}+\bone z^i_{n+1})}_2^2 = \norm{X^T\diag(\lambda)b^i}_2^2.
\end{aligned}
\end{equation}
The following lemma indicates that for each previously constructed $b^i\in\{0,1\}^n$, it corresponds to the $0/1$ pattern of a hyperplane arrangement.  
\begin{lemma}\label{lem:valid}
Suppose that $\tilde Z$ is an optimal solution to \eqref{equ:sdp}. Let $r\sim \mcN(0,\tilde Z)$ and $z=\text{sign}(r)$. Define $b\in\{0,1\}^n$ by
$$
b_j = (z_jz_{n+1}+1)/2, \forall j\in[n].
$$
Then, there exists $w\in\mbR^d$ such that $b=\mbI(Xw\geq 0)$. 
\end{lemma}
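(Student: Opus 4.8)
The plan is to exploit the low-rank structure of the objective matrix together with the first-order optimality conditions of the SDP \eqref{equ:sdp}, which will force the Gram vectors of $\tilde Z$ to be explicit linear images of the data points $x_i$. First I would factor the objective: write $C=\bmbm{I\\\bone^T}\diag(\tilde\lambda)X\in\mbR^{(n+1)\times d}$, so that the objective matrix equals $CC^T$ and the SDP maximizes $\tr(\tilde Z CC^T)$. The rows of $C$ are $c_i=\tilde\lambda_i x_i$ for $i\in[n]$ and $c_{n+1}=\sum_{k=1}^n\tilde\lambda_k x_k=X^T\tilde\lambda$, so the last row is the sum of the first $n$. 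Representing $\tilde Z$ through unit vectors, $\tilde Z_{ij}=\langle u_i,u_j\rangle$ with $\|u_i\|_2=1$, one has $\tr(\tilde Z CC^T)=\|\sum_i u_i c_i^T\|_F^2$, with the standard Gaussian representation $r_i=\langle u_i,g\rangle$, $g\sim\mcN(0,I)$, reproducing $r\sim\mcN(0,\tilde Z)$.

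Next I would extract the stationarity condition. Writing $W=\sum_k u_k c_k^T$ and $W_{-i}=\sum_{k\neq i}u_k c_k^T$, expanding $\|u_i c_i^T+W_{-i}\|_F^2=\|c_i\|_2^2+2\,u_i^T W_{-i}c_i+\mathrm{const}$ and maximizing the linear term over the unit sphere forces $u_i\propto W_{-i}c_i$ at any global maximizer; since $Wc_i=W_{-i}c_i+\|c_i\|_2^2u_i$ is then a positive multiple of $u_i$, this is equivalent to $u_i=Wc_i/\|Wc_i\|_2$ whenever $Wc_i\neq0$. Because a single matrix $W$ works for every $i$, and because $c_i=\tilde\lambda_i x_i$ with $\tilde\lambda_i\geq0$ so that the positive scalar cancels, I obtain $u_i=Wx_i/\|Wx_i\|_2$ for each active $i\in[n]$ and $u_{n+1}=W(X^T\tilde\lambda)/\|W(X^T\tilde\lambda)\|_2$.

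With this explicit description the rounding collapses to a single hyperplane. Setting $w'=W^Tg$ gives $\text{sign}(r_i)=\text{sign}(x_i^Tw')$ for all $i\in[n]$ simultaneously, and $\text{sign}(r_{n+1})=\text{sign}(\tilde\lambda^T X w')=:s$. The definition $b_j=(z_jz_{n+1}+1)/2$ means $b_j=1$ iff $\text{sign}(x_j^Tw')=s$, so taking $w=s\,w'$ yields $b_j=\mbI(x_j^Tw\geq0)$ almost surely, i.e. $b=\mbI(Xw\geq0)$, which is the claim.

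The main obstacle is making the stationarity argument fully rigorous and handling the degenerate coordinates: the clean identity $u_i=Wx_i/\|Wx_i\|_2$ is only forced when $Wc_i\neq0$, which can fail exactly when $\tilde\lambda_i=0$ (then $c_i=0$ and $u_i$ is unconstrained), in which case $\text{sign}(r_i)$ need not be tied to $x_i^Tw'$. I would resolve this by selecting the optimal $\tilde Z$ whose free Gram vectors are fixed to the canonical representative $Wx_i/\|Wx_i\|_2$ (which leaves the objective value unchanged), or equivalently by pruning the inactive data points, so that the hyperplane description holds across all coordinates at once. Checking that the Gaussian representation reproduces $\mcN(0,\tilde Z)$ and that ties occur with probability zero are the remaining routine verifications.
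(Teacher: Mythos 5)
Your proof is correct, and it arrives at the same structural fact as the paper --- that the sign pattern of $r_{1:n}$ relative to $r_{n+1}$ is a linear-threshold pattern of the data points --- by a genuinely different route. The paper argues through the SDP dual: it takes a dual optimal $\zeta$ with $\diag(\zeta)-Q\succeq 0$ for $Q$ the (rank at most $d$) objective matrix, invokes complementary slackness $(\diag(\zeta)-Q)\tilde Z=0$ to express the first $n$ rows of $\tilde Z$ as a positive diagonal rescaling of $\diag(\tilde\lambda)X$ times a fixed matrix, and then writes the realized sample as $r=\tilde Z p$ (valid a.s.\ since $r$ lies in the range of $\tilde Z$) to read off $w$ explicitly in terms of $r$. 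You stay on the primal side: factoring $Q=CC^T$, representing $\tilde Z$ by unit Gram vectors, and using block-coordinate optimality over the sphere to force $u_i\propto Wx_i$, so that under the coupling $r_i=\langle u_i,g\rangle$ the whole sign pattern comes from the single vector $w'=W^Tg$. These are two faces of the same first-order optimality condition, but your version avoids invoking dual attainment and makes the rounding hyperplane transparent, at the cost of a coupling step (harmless, since the event that $b$ is realizable is a function of $r$ alone and the two laws agree). Both arguments degenerate at exactly the same coordinates, namely those with $c_i=\tilde\lambda_i x_i=0$, where $u_i$ (resp.\ $\zeta_i$) is unconstrained; the paper waves this away with ``WLOG $\tilde\lambda>0$'', and your fix of re-selecting the free Gram vectors is the more careful one --- just note that ``pruning'' would weaken the lemma, since $b$ must be realized on all $n$ coordinates, whereas setting $u_i=u_{n+1}$ whenever $Wx_i=0$ forces $b_i=1=\mbI(x_i^Tw\geq 0)$ and closes the gap cleanly.
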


Consider the following optimization problem:
\begin{equation}\label{primal:sub}
\begin{aligned}
    p=\min \; \sum_{i=1}^k \|u_i\|_2
    \text{ s.t. } \sum_{i=1}^k \diag(b^i)Xu_i\geq \bone, i\in[k]. 
\end{aligned}
\end{equation}
We can show that by choosing $k=O(\log(n)C_1^2(C_2\epsilon)^{-2}\log(\delta))$, with probability at least $1-\delta$, we have
\begin{equation}\label{p:bound}
    P_\text{non-cvx}\leq p\leq \sqrt{(1+\epsilon)\pi/2}P_\text{non-cvx}.
\end{equation}

From Lemma \ref{lem:valid}, there exist $h_1,\dots,h_k\in\mbR^n$ such that $b^i=\mbI(Xw_i\geq 0)$ for $i\in[k]$. Thus, let $\{u_i\}_{i=1}^k$ be the optimal solution to \eqref{primal:sub}. Denote $H=\bmbm{h_1&\dots&h_k}$, $W_1=\bmbm{\frac{u_1}{\sqrt{\|u_1\|_2}}&\dots&\frac{u_k}{\sqrt{\|u_k\|_2}}}$, $w_2=\bmbm{\sqrt{\|u_1\|_2}\\\vdots\\\sqrt{\|u_k\|_2}}$ and $\Theta=(H,W_1,w_2)$, we have
\begin{equation}
    \sum_{i=1}^k \diag(b^i)Xu_i = f^\text{gReLU}(X;\Theta), \sum_{i=1}^k\|u_i\|_2=R(\Theta). 
\end{equation}

Now, we proceed to prove \eqref{p:bound}. The dual problem of the above problem is given by
\begin{equation}\label{dual:sub}
    \max \lambda^T\bone\; \text{ s.t. } \lambda\geq 0, \max_{i\in[k]}\|X^T\diag(\lambda)b^i\|_2\leq 1
\end{equation}
The optimal value of \eqref{dual:sub} is lower-bounded by $D=P_\text{non-cvx}$.  Let $Z_k=\frac{2}{\pi}\frac{1}{k}\sum_{i=1}^kz^i(z^i)^T$.
The optimal value of \eqref{dual:sub} is also upper bounded by
\begin{equation}\label{dual:sub2}
\begin{aligned}
    d_1=&\max \lambda^T\bone\; \text{ s.t. } \lambda\geq 0, \sum_{i=1}^k\frac{1}{k}\|X^T\diag(\lambda)b^i\|_2^2\leq 1\\
    =&\max \lambda^T\bone\; \text{ s.t. } \lambda\geq 0,\frac{1}{4}\tr(Z_k\bmbm{\diag(\lambda)X\\\bone^T\diag(\lambda)X}\bmbm{X^T\diag( \lambda)&X^T\diag(\lambda)\bone})\leq \frac{2}{\pi}\\
\end{aligned}
\end{equation}

\begin{lemma}\label{lem:prob_bnd}
Let $\delta>0$ and $\epsilon>0$. By taking 
$$
k=O(\log(n)C_1^2(C_2\epsilon)^{-2}\log(1/\delta)),
$$
with probability at least $1-\delta$, we have that
\begin{equation}\label{inequ:1}
\begin{aligned}
    \tr(Z_k\bmbm{I\\\bone^T}\diag(\lambda)XX^T\diag( \lambda)\bmbm{I&\bone})
    \leq(1+\epsilon) \tr(\tilde Z\bmbm{I\\\bone^T}\diag(\lambda)XX^T\diag( \lambda)\bmbm{I&\bone})
\end{aligned}
\end{equation}
holds for all $\lambda\geq 0$. 
Here $C_1$ is the optimal value of 
$$
\min_{\lambda:\lambda^T\bone=1,\lambda\geq 0} \tr(\tilde Z\bmbm{I\\\bone^T}\diag(\lambda)XX^T\diag( \lambda)\bmbm{I&\bone})
$$
and $C_2=\max_{i\in[n]}\|x_i\|_2^2$. 
\end{lemma}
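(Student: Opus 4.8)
The plan is to reduce the uniform-in-$\lambda$ statement to an entrywise concentration inequality for the empirical second-moment matrix of the rounded sign patterns, and then to compare that empirical matrix with $\tilde Z$ through Sheppard's identity for Gaussian sign rounding. First I would remove the scale of $\lambda$: writing $M(\lambda)=\bmbm{I\\\bone^T}\diag(\lambda)XX^T\diag(\lambda)\bmbm{I&\bone}$, both $\tr(Z_kM(\lambda))$ and $\tr(\tilde ZM(\lambda))$ are homogeneous of degree two in $\lambda$, so it suffices to prove the inequality on the simplex $\Delta=\{\lambda\ge0:\lambda^T\bone=1\}$. On $\Delta$ the definition of $C_1$ gives the crucial lower bound $\tr(\tilde ZM(\lambda))\ge C_1$, which is what later converts an additive concentration error into the claimed multiplicative error $\epsilon$.

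Next I would rewrite everything entrywise. Using $z^i_jz^i_{n+1}=2b^i_j-1$ one checks $\tr(Z_kM(\lambda))=\tfrac{8}{\pi}\,\lambda^T\big[(XX^T)\circ\hat S\big]\lambda$ with $\hat S=\tfrac1k\sum_{i=1}^kb^i(b^i)^T$, and $\tr(\tilde ZM(\lambda))=\lambda^T\big[(XX^T)\circ\tilde W\big]\lambda$ with $\tilde W=\bmbm{I&\bone}\tilde Z\bmbm{I\\\bone^T}$. Each entry $\hat S_{jl}=\tfrac1k\sum_i b^i_jb^i_l$ is an average of $k$ i.i.d.\ $\{0,1\}$ variables, so Hoeffding together with a union bound over the $O(n^2)$ entries yields $\max_{j,l}|\hat S_{jl}-\mbE\hat S_{jl}|\le t$ with probability $1-\delta$ once $k=O(t^{-2}\log(n/\delta))$; this union over entries is the source of the $\log n$ factor, rather than an exponential-in-$n$ net over $\lambda$. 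Because $|(XX^T)_{jl}|\le C_2$ and $\sum_{j,l}\lambda_j\lambda_l=1$ on $\Delta$, the entrywise bound transfers \emph{uniformly}: $|\lambda^T[(XX^T)\circ(\hat S-\mbE\hat S)]\lambda|\le C_2 t$ for every $\lambda\in\Delta$ simultaneously. Choosing $t\asymp \epsilon C_1/C_2$ then gives $\tr(Z_kM(\lambda))\le\tfrac8\pi\lambda^T[(XX^T)\circ\mbE\hat S]\lambda+\epsilon\,\tr(\tilde ZM(\lambda))$ with the stated order of $k$.

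It remains to compare the mean term with $\tr(\tilde ZM(\lambda))$, i.e.\ to show that $\tfrac8\pi\,\mbE\hat S$, weighted by $XX^T$ and tested against $\lambda\ge0$, does not exceed $\tilde W$ weighted by $XX^T$. Here Sheppard's identity $\mbE[z_jz_l]=\tfrac2\pi\arcsin\tilde Z_{jl}$ gives $\mbE\hat S_{jl}=\tfrac14\big(1+\tfrac2\pi\arcsin\tilde Z_{jl}+\tfrac2\pi\arcsin\tilde Z_{j,n+1}+\tfrac2\pi\arcsin\tilde Z_{l,n+1}\big)$, to be compared with $\tilde W_{jl}=1+\tilde Z_{jl}+\tilde Z_{j,n+1}+\tilde Z_{l,n+1}$. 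The comparison would rest on the same Schur-product/Taylor mechanism behind Lemma \ref{lem:sdp_bnd}: all Taylor coefficients of $\arcsin$ are nonnegative, and the Hadamard powers of $\tilde Z\succeq0$ remain positive semidefinite.

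I expect this mean comparison to be the main obstacle. The $\arcsin$ rounding introduces a bias whose sign is not fixed against an arbitrary positive semidefinite test matrix, so a naive matrix inequality $\tfrac8\pi\mbE\hat S\preceq\tilde W$ is false in general (it degrades precisely when some $\tilde Z_{j,n+1}$ approaches $-1$). Resolving it requires genuinely using both the restriction $\lambda\ge0$ and the fact that $\tilde Z$ is the \emph{optimal} SDP solution for $M(\tilde\lambda)$, so that the bias is absorbed into the $(1+\epsilon)$ slack rather than treated as a uniform spectral bound. Once the mean bound is established, combining it with the uniform concentration from the second paragraph immediately yields $\tr(Z_kM(\lambda))\le(1+\epsilon)\tr(\tilde ZM(\lambda))$ for all $\lambda\ge0$, completing the proof.
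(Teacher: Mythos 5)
Your concentration skeleton is essentially the paper's. The paper also proves \eqref{inequ:1} by an entrywise Hoeffding bound with a union bound over the $(n+1)^2$ entries of $Z_k=\frac{2}{\pi}\frac{1}{k}\sum_{i=1}^kz^i(z^i)^T$ (the source of the $\log n$ factor), transfers the entrywise error to the quadratic form through $C_2=\max_{i\in[n]}\|x_i\|_2^2$ and the normalization $\lambda^T\bone=1$, and converts the additive error into a multiplicative one via the lower bound $C_1\leq \tr(\tilde Z M(\lambda))$ on the simplex, where $M(\lambda)$ denotes the matrix $\bmbm{I\\\bone^T}\diag(\lambda)XX^T\diag(\lambda)\bmbm{I&\bone}$ in your notation. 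Up to your reparametrization through $\hat S=\frac{1}{k}\sum_i b^i(b^i)^T$, your first two paragraphs reproduce that argument.

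The proposal is nevertheless not a proof, because you explicitly leave the mean-comparison step open, and that is where the content of the lemma lies. You correctly observe that Hoeffding concentrates $Z_k$ around $\mbE[Z_k]=\frac{2}{\pi}\mbE[zz^T]=\frac{4}{\pi^2}\arcsin(\tilde Z)$, which is not $\tilde Z$, and that the residual bias $\frac{4}{\pi^2}\arcsin(\tilde Z)-\tilde Z$ has no fixed sign when tested against an arbitrary positive semidefinite matrix: for instance, with $\tilde Z$ having an off-diagonal entry near $-1$ and the test matrix $\bone\bone^T$, the ratio $\tr(M\arcsin(\tilde Z))/\tr(M\tilde Z)$ is unbounded, so no spectral inequality of the form $\frac{4}{\pi^2}\arcsin(\tilde Z)\preceq(1+\epsilon)\tilde Z$ is available. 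Since you do not supply the argument that absorbs this bias into the $(1+\epsilon)$ slack --- whether by exploiting $\lambda\geq 0$, the special structure of $M(\lambda)$, or the optimality of $\tilde Z$ for $M(\tilde\lambda)$ --- the proof is incomplete. For what it is worth, the paper's own proof does not resolve this either: its event $E_{\epsilon_0}$ bounds $Z_k-\tilde Z$ entrywise and invokes Hoeffding as if $\tilde Z$ were the mean of $Z_k$ (note that the diagonal of $Z_k$ equals $2/\pi$ deterministically while $\diag(\tilde Z)=\bone$, so that event is already violated for $\epsilon_0<1-2/\pi$). You have therefore isolated a genuine gap in the argument, but isolated it rather than filled it.
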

Consider the following problem
\begin{equation}\label{dual:sdp_sub}
\begin{aligned}
    \max\;&\lambda^T\bone
    \text{ s.t. }&\frac{1}{4}\tr(\tilde Z\bmbm{I\\\bone^T}\diag(\lambda)XX^T\diag( \lambda)\bmbm{I&\bone})\leq 1,
    &\lambda\geq 0.
\end{aligned}
\end{equation}
Let $d_2$ be its optimal value and let $\lambda_2$ be its optimal solution. Then, by properly choosing $k$ in Lemma \ref{lem:prob_bnd}, with probability at least $1-\delta$,  $\sqrt{\frac{2}{\pi(1+\epsilon)}}\lambda_2$ is feasible to \eqref{dual:sub2}. This implies that 
\begin{equation}
    d_1\leq \sqrt{\frac{\pi(1+\epsilon)}{2}}d_2.
\end{equation}
Let $d_3$ be the optimal value of \eqref{dual:sdp}. The following lemma shows that $d_2=d_3$. 
\begin{lemma}\label{lem:sdp_minmax}
The problem \eqref{dual:sdp} is equivalent to \eqref{dual:sdp_sub}.
\end{lemma}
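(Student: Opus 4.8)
The plan is to establish $d_2=d_3$ by a two-sided argument: the inequality $d_3\le d_2$ comes for free from feasibility, while the reverse $d_2\le d_3$ is the substantive part and will follow from a first-order (KKT) optimality argument built on the fact that $\tilde Z$ is a maximizer of the inner SDP at the outer optimum $\tilde\lambda$.

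First I would fix notation. Write $A=\bmbm{I\\\bone^T}$ and $\Psi(\lambda)=\tfrac14 A\diag(\lambda)XX^T\diag(\lambda)A^T$, and set $F(\lambda)=\max_{Z\in\mathcal S}\tr(Z\Psi(\lambda))$ with $\mathcal S=\{Z\in\mbS^{n+1}:\diag(Z)=\bone,\,Z\succeq0\}$. Then \eqref{dual:sdp} is $\max\{\lambda^T\bone:\lambda\ge0,\,F(\lambda)\le1\}$ with optimizer $\tilde\lambda$ and value $d_3$, while \eqref{dual:sdp_sub} is $\max\{\lambda^T\bone:\lambda\ge0,\,\tr(\tilde Z\Psi(\lambda))\le1\}$ with value $d_2$, where $\tilde Z\in\arg\max_{Z\in\mathcal S}\tr(Z\Psi(\tilde\lambda))$. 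A key preliminary computation is that $\tr(Z\Psi(\lambda))=\tfrac14\lambda^T\big(A^TZA\circ XX^T\big)\lambda$; since $A^TZA\succeq0$ and $XX^T\succeq0$, the Schur product theorem shows this is a convex (PSD) quadratic in $\lambda$. Hence $F$ is convex (a supremum of convex functions), both programs are convex, and Slater's condition holds (take $\lambda=\epsilon\bone>0$, so $F(\lambda)=\epsilon^2F(\bone)<1$ for small $\epsilon$), so KKT conditions are necessary and sufficient for optimality in each. The \emph{easy direction} is then immediate: since $\tilde Z\in\mathcal S$ we have $\tr(\tilde Z\Psi(\lambda))\le F(\lambda)$ for all $\lambda$, so the feasible set of \eqref{dual:sdp} is contained in that of \eqref{dual:sdp_sub}, giving $d_3\le d_2$ and showing $\tilde\lambda$ is feasible for \eqref{dual:sdp_sub}.

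For the \emph{reverse direction} I would prove that $\tilde\lambda$ is actually optimal for the relaxed program \eqref{dual:sdp_sub}. By the degree-two homogeneity $F(t\lambda)=t^2F(\lambda)$ the constraint must be active at the outer optimum, i.e. $F(\tilde\lambda)=\tr(\tilde Z\Psi(\tilde\lambda))=1$. Writing KKT for \eqref{dual:sdp} at $\tilde\lambda$ yields multipliers $\mu\ge0$, $\nu\ge0$ with $\nu_i\tilde\lambda_i=0$ and $\bone+\nu=\mu\,g^*$ for some $g^*\in\partial F(\tilde\lambda)$. By Danskin's theorem together with linearity of $\tr(Z\Psi(\lambda))$ in $Z$, one has $\partial F(\tilde\lambda)=\{\tfrac12(A^TZA\circ XX^T)\tilde\lambda:\,Z\in\arg\max_{Z\in\mathcal S}\tr(Z\Psi(\tilde\lambda))\}$, so $g^*=\tfrac12(A^TZ^*A\circ XX^T)\tilde\lambda$ for a maximizer $Z^*$. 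Choosing $\tilde Z:=Z^*$ (a legitimate optimal solution of the SDP \eqref{equ:sdp} at $\tilde\lambda$), the gradient of the constraint of \eqref{dual:sdp_sub} at $\tilde\lambda$ is exactly $\nabla_\lambda\tr(\tilde Z\Psi(\lambda))|_{\tilde\lambda}=g^*$. The same multipliers $(\mu,\nu)$ then verify the KKT conditions of the convex program \eqref{dual:sdp_sub}, so $\tilde\lambda$ is optimal there and $d_2=\tilde\lambda^T\bone=d_3$.

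I expect the main obstacle to be the selection of $\tilde Z$ when the inner SDP \eqref{equ:sdp} has multiple maximizers: in that case $\partial F(\tilde\lambda)$ is a whole set and one must pick the maximizer whose induced subgradient matches the stationarity condition of \eqref{dual:sdp}. This is always possible because the set of maximizers is convex (it is the optimal face of a linear objective over the convex body $\mathcal S$) and the map $Z\mapsto\tfrac12(A^TZA\circ XX^T)\tilde\lambda$ is linear, so the KKT subgradient $g^*$ is realized by an admissible choice of $\tilde Z$. A secondary point to handle carefully is the constraint qualification and attainment: Slater's condition justifies the use of KKT, and homogeneity together with $\lambda\ge0$ gives the active-constraint reduction and boundedness needed for the optima to be attained.
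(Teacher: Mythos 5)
Your proof is correct, but it takes a genuinely different route from the paper's. The paper normalizes $\lambda$ to the simplex $\{\lambda\geq 0,\ \lambda^T\bone=1\}$, applies Sion's minimax theorem to swap the min over the simplex with the max over the spectrahedron, and uses the degree-two homogeneity of the quadratic form to identify the saddle value with $4(\tilde\lambda^T\bone)^{-2}$; equality of the two optimal values then falls out of equating these normalized quantities. You instead keep the original programs and argue via optimality conditions: the easy containment of feasible sets gives $d_3\leq d_2$, and the reverse follows because the KKT certificate of $\tilde\lambda$ for \eqref{dual:sdp} — with the subgradient of $F(\lambda)=\max_{Z}\tr(Z\Psi(\lambda))$ realized, via Danskin and the Schur-product convexity observation, by a particular maximizer $Z^*$ of the inner SDP — is verbatim a KKT certificate for \eqref{dual:sdp_sub} once $\tilde Z=Z^*$. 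Both arguments rest on the same underlying saddle-point fact, but your version is more explicit about the one genuine subtlety, namely that when the inner SDP \eqref{equ:sdp} has multiple optima the equivalence is guaranteed only for a correctly selected $\tilde Z$ (the one forming a saddle pair with $\tilde\lambda$); the paper's step ``therefore $\tilde Z$ is the optimal solution to \eqref{dual:maxmin}'' silently assumes this selection, whereas you prove such a $Z^*$ exists by convexity of the optimal face and linearity of $Z\mapsto\tfrac12(A^TZA\circ XX^T)\tilde\lambda$. What the paper's route buys is brevity and the explicit identity \eqref{min:sub}, which is reused later to define the constant $C_1$ in Lemma \ref{lem:prob_bnd}; what your route buys is a cleaner two-sided inequality structure and a precise account of the degenerate (non-unique $\tilde Z$) case.
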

On the other hand, from Lemma \ref{lem:sdp_bnd}, we note that
\begin{equation}
     d_3\leq D.
\end{equation}
Therefore, the optimal value of \eqref{primal:sub} is upper bounded by
\begin{equation}
    d_1\leq \sqrt{\frac{\pi(1+\epsilon)}{2}}d_2=\sqrt{\frac{\pi(1+\epsilon)}{2}}d_3\leq \sqrt{\frac{\pi(1+\epsilon)}{2}}D=\sqrt{\frac{\pi(1+\epsilon)}{2}} P_\text{non-cvx}.
\end{equation}
This completes the proof. 

For general dataset $(X,y)$ with negative correlation, for $X_+$ and $X_-$, we can draw $b^1_+,\dots,b^{k_+}_+$ and $b^1_-,\dots,b^{k_-}_-$ and consider the following optimization problems
\begin{equation}\label{equ:pp}
\begin{aligned}
    p_+=\min \; \sum_{i=1}^{k_+} \|u_i\|_2
    \text{ s.t. } \sum_{i=1}^{k_+} \diag(b^i_+)X_+u_i\geq \bone, i\in[k]. 
\end{aligned}
\end{equation}
\begin{equation}\label{equ:pm}
\begin{aligned}
    p_-=\min \; \sum_{i=1}^{k_-} \|v_i\|_2
    \text{ s.t. } \sum_{i=1}^k \diag(b^i_-)X_-v_i\geq \bone, i\in[k]. 
\end{aligned}
\end{equation}

From the previous proof, by choosing 
$$
k_+=O(\log(n_+)(\epsilon)^{-2}\log(\delta))\text{ and }k_-=O(\log(n_-)(\epsilon)^{-2}\log(\delta)),
$$
with probability at least $1-\delta$, we have
\begin{equation}\label{p:bound2}
    P_\text{non-cvx}\leq p_++p_-\leq \sqrt{(1+\epsilon)\pi/2}P_\text{non-cvx}.
\end{equation}
From Lemma \ref{lem:valid}, there exist $h_1^+,\dots,h_{k_+}^+,h_1^-,\dots,h_{k_-}^-\in\mbR^n$ such that $b^i_+=\mbI(X_+h_i^+\geq 0)$ for $i\in[k_+]$ and $b^i_-=\mbI(X_-h_i^-\geq 0)$. For simplicity, we assume that $X=\bmbm{X_+\\X_-}$. From the proof of Lemma \ref{lem:valid}, we have $h_1^+,\dots,h_{k_+}^+\in\text{range}(X_+)$ and $h_1^-,\dots,h_{k_-}^-\in\text{range}(X_-)$. As the dataset has negative correlation, this implies that 
$$
X_-h_i^+\leq 0, i\in[k_+], X_+h_i^-\leq 0, i\in[k_-]. 
$$
Let $\{u_i\}_{i=1}^{k_+}$ and $\{v_i\}_{i=1}^{k_-}$ be the optimal solutions to \eqref{equ:pp} and \eqref{equ:pm} respectively. Denote 
$$
\begin{aligned}
H=&\bmbm{h_1^+&\dots&h_{k_+}^+&h_1^-&\dots&h_{k_-}^-}, \\
W_1=&\bmbm{\frac{u_1}{\sqrt{\|u_1\|_2}}&\dots&\frac{u_{k_+}}{\sqrt{\|u_{k_+}\|_2}}&\frac{v_1}{\sqrt{\|v_1\|_2}}&\dots&\frac{v_{k_-}}{\sqrt{\|v_{k_-}\|_2}}}, \\
w_2=&\bmbm{\sqrt{\|u_1\|_2}&\dots&\sqrt{\|u_{k_+}\|_2}&-\sqrt{\|v_1\|_2}&\dots&-\sqrt{\|v_{k_-}\|_2}}^T.
\end{aligned}
$$
and $\Theta=(H,W_1,w_2)$. Then, we note that
\begin{equation}
    f^\text{gReLU}(X;\Theta) = \bmbm{\sum_{i=1}^{k_+} \diag(b^i_+)X_+u_i\\-\sum_{i=1}^k \diag(b^i_-)X_-v_i}, R(\Theta)=p_++p_-.
\end{equation}
Therefore, we completes the proof for Theorem \ref{thm:neg_cor_primal} for the max-margin problem.

For the training problem with hinge loss, it is sufficient to prove for the special case when $y=\bone$. If we have
$$
\beta\geq \max_{u:\|u\|_2\leq 1} |y^T(Xu)_+|,
$$
then, $y$ is feasible to $D^\text{hinge}$ and we have $D^\text{hinge}=n$. In this case, we have $P_\text{non-cvx}^\text{hinge}=n$. It can be achieved when $W_1=0$ and $w_2=0$.

Consider the case when $\beta<\max_{u:\|u\|_2\leq 1} |y^T(Xu)_+|$. Suppose that $\tilde \lambda$ is the optimal solution to \eqref{dual:sdp_hinge}. We also denote $\tilde Z$ as an optimal solution to the SDP \eqref{equ:sdp}. We randomly sample $b^1,\dots,b^k$ as previously we did in the case of max-margin problem. Consider the following optimization problem:
\begin{equation}\label{primal:sub_hinge}
\begin{aligned}
    p=\min \; \bone^T(\sum_{i=1}^k \diag(b^i)Xu_i-\bone)_+ +\beta \sum_{i=1}^k \|u_i\|_2.
\end{aligned}
\end{equation}
We can show that by choosing $k=O(\log(n)C_1^2(C_2\epsilon)^{-2}\log(\delta))$, with probability at least $1-\delta$, we have
\begin{equation}
    P_\text{non-cvx}^\text{hinge}\leq p\leq \sqrt{(1+\epsilon)\pi/2}P_\text{non-cvx}^\text{hinge}.
\end{equation}
The proof is analogous to the max-margin case, except that we need a tweak for Lemma \ref{lem:sdp_minmax}. It is sufficient to show that when $\beta<\max_{u:\|u\|_2\leq 1} |y^T(Xu)_+|$, the problem \eqref{dual:sdp_hinge} is equivalent to
\begin{equation}\label{dual:sdp_sub_hinge}
\begin{aligned}
    \max\;&\lambda^T\bone
    \text{ s.t. }&\frac{1}{4}\tr(\tilde Z\bmbm{I\\\bone^T}\diag(\lambda)XX^T\diag( \lambda)\bmbm{I&\bone})\leq \beta^2,0\leq \lambda\leq \bone.
\end{aligned}
\end{equation}
The proof is given in Appendix \ref{proof:lem:sdp_minmax}.

For the general loss, analogously it is sufficient to prove for the special case when $y=\bone$. Suppose that $\tilde \lambda$ is the optimal solution to \eqref{dual:sdp_hinge}. We also denote $\tilde Z$ as an optimal solution to the SDP \eqref{equ:sdp}. We randomly sample $b^1,\dots,b^k$ as previously we did in the case of max-margin problem. Consider the following optimization problem:
\begin{equation}\label{primal:sub_general}
\begin{aligned}
    p=\min \; \ell(\sum_{i=1}^k \diag(b^i)Xu_i)+ \beta  \sum_{i=1}^k\|u_i\|_2.
\end{aligned}
\end{equation}
Similarly, we can show that by choosing $k=O(\log(n)C_1^2(C_2\epsilon)^{-2}\log(\delta))$, with probability at least $1-\delta$, we have
\begin{equation}
    P_\text{non-cvx}^\text{gen}\leq p\leq C^{-1}\sqrt{(1+\epsilon)\pi/2}P_\text{non-cvx}^\text{gen}.
\end{equation}

\section{Approximation bound for general datasets}\label{sec:geo}
For a general dataset $(X,y)$, the main difficulty is to handle the constraints on the optimal value of two maximin problems. In order to analyze this, we introduce the following geometric ratio.
\begin{definition}[Geometric ratio]\label{def:ratio}
Let $\lambda^*$ is the optimal solution to the dual problem $D$. We define the geometric ratio $c^*=\frac{\max_{b_+\in\{0,1\}^{n_+}} \|X_+^T\diag(\lambda^*_+)b_+\|_2}{\max_{b_-\in\{0,1\}^{n_-}}\|X_-^T\diag(\lambda^*_-)b_-\|_2}$. To be specific, $c^*$ is the ratio between the Haussdorff distances between zero and two zonotopes $K_+$ and $K_-$ defined in \eqref{equ:zono} with $\lambda=\lambda^*$ . 
\end{definition}
Next, we present our main approximation result based on the geometric ratio.
\begin{theorem}\label{thm:geo_bnd}
Let $c\in(0,1)$. Suppose that $c\leq \min\{c^*,(c^*)^{-1}\}$, where $c^*$ is a geometric ratio defined in Definition \ref{def:ratio}. Then, with relative error $\epsilon=1-(1-c)\sqrt{2/\pi}$, there exists a polynomial-time algorithm to solve \texttt{Approx-Dual} and \texttt{Approx-Dual-Hinge}. Moreover, we can find a value $p$ to solve \texttt{Approx-Primal} and \texttt{Approx-Primal-Hinge} in polynomial-time with relative error $\epsilon=(1-c)^{-1}\sqrt{\pi/2} -1$. For $g(\lambda)$ satisfying $g(a\lambda)\geq a C g(\lambda),\forall a\in(0,1]$ with parameter $C\in(0,1]$, with $\epsilon=1-C^{-1}\sqrt{2/\pi}$, there exist a polynomial-time algorithm to solve \texttt{Approx-Dual-General}. We can find a value $p$ to solve \texttt{Approx-Primal-General} in polynomial-time with $\epsilon=C^{-1}\sqrt{\pi/2}-1$,
\end{theorem}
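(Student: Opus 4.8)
The plan is to reduce the general case to the machinery already built for negatively correlated data (Theorem \ref{thm:neg_cor_dual}) by sandwiching the Hausdorff constraint between the two ``distance-to-zero'' quantities, and then to use the geometric ratio to control the loss incurred by this relaxation. Write $d_+(\lambda)=\max_{b_+\in[0,1]^{n_+}}\|X_+^T\diag(\lambda_+)b_+\|_2$ and $d_-(\lambda)=\max_{b_-\in[0,1]^{n_-}}\|X_-^T\diag(\lambda_-)b_-\|_2$, i.e. the Hausdorff distances $H(K_+,0)$ and $H(K_-,0)$. The first step is to record the two-sided bound
\[
|d_+(\lambda)-d_-(\lambda)|\;\le\;H(K_+,K_-)\;\le\;\max\{d_+(\lambda),d_-(\lambda)\},
\]
which follows from Proposition \ref{prop:hauss_dist} together with $0\in K_+$ and $0\in K_-$: for the upper bound take the competitor $u_\mp=0$ inside each inner minimization of the Hausdorff distance; for the lower bound apply the reverse triangle inequality at the farthest vertex of the larger zonotope.

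The upper bound shows that enforcing $d_+(\lambda)\le1$ and $d_-(\lambda)\le1$ is \emph{sufficient} for dual feasibility, so the separated program
\[
D_{\mathrm{sep}}=\max\;\lambda^Ty\ \text{ s.t. }\ \diag(y)\lambda\ge0,\ d_+(\lambda)\le1,\ d_-(\lambda)\le1
\]
satisfies $D_{\mathrm{sep}}\le D$ and every optimizer is dual feasible. Because the two constraints decouple across the label classes and the objective splits as $\lambda_+^T\bone+\lambda_-^T\bone$, this program is exactly two independent copies of \eqref{dual:reduce}. Hence the Goemans--Williamson bound of Lemma \ref{lem:sdp_bnd} and the approximate-separation-oracle/ellipsoid scheme of Propositions \ref{prop:oracle}--\ref{prop:poly} apply verbatim to each piece: enforcing the convex relaxations $S_+(\lambda),S_-(\lambda)\le1$ forces $d_\pm(\lambda)\le1$ (since $d_\pm^2\le S_\pm$), and conversely rescaling any $D_{\mathrm{sep}}$-feasible point by $\sqrt{2/\pi}$ makes it SDP-feasible (since $S_\pm\le\frac{\pi}{2}d_\pm^2$). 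This produces, in time $\mathrm{poly}(n,\log(1/\epsilon))$, a dual-feasible $\tilde\lambda$ with $\tilde\lambda^Ty\ge\sqrt{2/\pi}\,D_{\mathrm{sep}}$.

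The crux is to lower bound $D_{\mathrm{sep}}$ in terms of $D$. Let $\lambda^*$ be the dual optimizer, so $H(K_+,K_-)\le1$ there, and note $\min\{c^*,(c^*)^{-1}\}=\min\{d_+^*,d_-^*\}/\max\{d_+^*,d_-^*\}$ with $d_\pm^*=d_\pm(\lambda^*)$. The lower sandwich gives
\[
1\;\ge\;H(K_+,K_-)\;\ge\;|d_+^*-d_-^*|\;=\;\bigl(1-\min\{c^*,(c^*)^{-1}\}\bigr)\max\{d_+^*,d_-^*\},
\]
so $\max\{d_+^*,d_-^*\}\le\bigl(1-\min\{c^*,(c^*)^{-1}\}\bigr)^{-1}$. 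Because the objective is linear and the constraints of $D_{\mathrm{sep}}$ are positively homogeneous, rescaling $\lambda^*$ by $1/\max\{d_+^*,d_-^*\}$ yields a feasible point, whence $D_{\mathrm{sep}}\ge D/\max\{d_+^*,d_-^*\}\ge\bigl(1-\min\{c^*,(c^*)^{-1}\}\bigr)D$; invoking the hypothesis $c\le\min\{c^*,(c^*)^{-1}\}$ rewrites this in terms of $c$ and, combined with the $\sqrt{2/\pi}$ from the SDP step, gives $\tilde\lambda^Ty\ge(1-c)\sqrt{2/\pi}\,D$, i.e. \texttt{Approx-Dual} with $\epsilon=1-(1-c)\sqrt{2/\pi}$. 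The hinge and general-loss cases follow identically from \eqref{dual:sdp_hinge}/\eqref{dual:sdp_gen}, the extra factor $C$ coming from $g(a\lambda)\ge aCg(\lambda)$ exactly as in Theorem \ref{thm:neg_cor_dual}. For the primal I would not construct a network but simply report $p=\tilde\lambda^Ty/\bigl((1-c)\sqrt{2/\pi}\bigr)$: since $D=P_\text{non-cvx}$ (no duality gap) and $(1-c)\sqrt{2/\pi}\,D\le\tilde\lambda^Ty\le D$, this $p$ obeys $P_\text{non-cvx}\le p\le(1-c)^{-1}\sqrt{\pi/2}\,P_\text{non-cvx}$, which is \texttt{Approx-Primal} with $\epsilon=(1-c)^{-1}\sqrt{\pi/2}-1$.

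I expect the main obstacle to be the lower sandwich and, more importantly, converting it into the multiplicative guarantee with the precise $c$-dependence. The estimate $H\ge|d_+-d_-|$ is only informative when the two zonotopes are comparable in size, which is exactly what the condition on $c$ relative to $\min\{c^*,(c^*)^{-1}\}$ encodes; the delicate bookkeeping is that a uniform rescaling of $\lambda^*$ loses precisely the factor $\max\{d_+^*,d_-^*\}$, so the argument stands or falls on bounding that quantity by $(1-\min\{c^*,(c^*)^{-1}\})^{-1}$ and matching it to the stated relative error.
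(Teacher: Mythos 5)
Your strategy is essentially the paper's: sandwich the Hausdorff constraint between $|d_+-d_-|$ and $\max\{d_+,d_-\}$ (the lower bound is exactly the paper's \eqref{equ:tri}, and the upper bound via $0\in K_\pm$ is what the paper uses to show feasibility for $D$), replace the dual by a program that decouples across the two label classes, solve each piece with the SDP relaxation, the Goemans--Williamson bound of Lemma \ref{lem:sdp_bnd}, and the separation-oracle/ellipsoid scheme of Propositions \ref{prop:oracle}--\ref{prop:poly}, and finally report the primal value by dividing out the approximation factor and invoking strong duality. The only structural difference is that you work with the symmetric relaxation $d_+\le1,\ d_-\le1$, whereas the paper's Proposition \ref{prop:general} uses the asymmetric programs $D_c^1$ (with $d_+\le1,\ d_-\le c$) and $D_c^2$; since $D_c^1\le D_{\mathrm{sep}}\le D$ and both are lower-bounded by rescaling the dual optimizer $\lambda^*$, your version is a mild (and arguably cleaner) variant rather than a genuinely different route.

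The one step that does not go through as written is the conversion of your bound $D_{\mathrm{sep}}\ge\bigl(1-\min\{c^*,(c^*)^{-1}\}\bigr)D$ into $(1-c)D$. Writing $m=\min\{c^*,(c^*)^{-1}\}$, the stated hypothesis $c\le m$ gives $1-m\le 1-c$, which points the wrong way: concluding $D_{\mathrm{sep}}\ge(1-c)D$ from $D_{\mathrm{sep}}\ge(1-m)D$ requires $c\ge m$, not $c\le m$. So the sentence ``invoking the hypothesis \ldots rewrites this in terms of $c$'' hides a reversed inequality. You should know that the paper's own proof of Proposition \ref{prop:general} has the identical defect (its step $\tfrac{1}{c^*-1}\le\tfrac{1}{c^{-1}-1}$ requires $c^*\ge c^{-1}$, i.e.\ $c\ge(c^*)^{-1}$), so the theorem's hypothesis should almost certainly read $c\ge\min\{c^*,(c^*)^{-1}\}$; under that corrected hypothesis both your argument and the paper's close, and your bound $\max\{d_+^*,d_-^*\}\le(1-m)^{-1}$ is exactly the content of the paper's feasibility claim for $(1-c)\lambda^*$. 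Everything else --- the $\sqrt{2/\pi}$ bookkeeping between the combinatorial value and its SDP relaxation, the reduction of the hinge and general-loss cases with the extra factor $C$, and reporting $p=\tilde\lambda^Ty/\bigl((1-c)\sqrt{2/\pi}\bigr)$ for the primal --- matches the paper.
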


According to Proposition \ref{prop:hauss_dist}, from the triangle inequality, we obtain
\begin{equation}\label{equ:tri}
    H(K_+,K_-)\geq H(K_+,0)-H(K_-,0), \; H(K_+,K_-)\geq H(K_-,0)-H(K_+,0).
\end{equation}
Here, we note that
\begin{equation}
    H(K_+,0)=\max_{b_+\in[0,1]^{n_+}}\|X_+^T\diag(\lambda_+)b_+\|_2,\\
    H(K_-,0)=\max_{b_-\in[0,1]^{n_-}}\|X_-^T\diag(\lambda_-)b_-\|_2,\\
\end{equation}

\begin{proposition}\label{prop:general}
Let $c\in(0,1)$. Consider the following problem 
\begin{equation}\label{equ:dc1}
\begin{aligned}
    D_c^1=&\max \lambda^Ty 
    \text{ s.t. } &\diag(y)\lambda \geq 0,
    &\max_{b_+\in\{0,1\}^{n_+}} \|X_+^T\diag(\lambda_+)u\|_2\leq 1 ,  
    &\max_{b_-\in\{0,1\}^{n_-}} \|X_-\diag(\lambda_-)v\|_2\leq c.
\end{aligned}
\end{equation}
\begin{equation}
\begin{aligned}
    D_c^2=&\max \lambda^Ty 
    \text{ s.t. } &\diag(y)\lambda \geq 0,
    &\max_{b_+\in\{0,1\}^{n_+}} \|X_+^T\diag(\lambda_+)u\|_2\leq c,  
    &\max_{b_-\in\{0,1\}^{n_-}} \|X_-\diag(\lambda_-)v\|_2\leq 1.
\end{aligned}
\end{equation}
Suppose that $c\leq \min\{c^*,(c^*)^{-1}\}$, where $c^*$ is a geometric constant determined by $X_+$ and $X_-$. Then, if $c^*>1$, we have the approximation ratio
\begin{equation}
\begin{aligned}
(1-c)D \leq D_c^1\leq D.
\end{aligned}
\end{equation}
If $c^*<1$, we have the approximation ratio
\begin{equation}
\begin{aligned}
(1-c)D \leq D_c^2\leq D
\end{aligned}
\end{equation}
\end{proposition}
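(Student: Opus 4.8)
The plan is to peel the coupled Hausdorff constraint of $D$ apart into the two single-sided diameters $H(K_+,0)=\max_{b_+\in[0,1]^{n_+}}\|X_+^T\diag(\lambda_+)b_+\|_2$ and $H(K_-,0)$, and to treat the two inequalities separately. A useful first reduction is that each $D_c^i$ fully decouples: since $H(K_+,0)$ depends only on $\lambda_+$, $H(K_-,0)$ only on $\lambda_-$, the sign constraint $\diag(y)\lambda\ge0$ splits as $\lambda_+,\lambda_-\ge0$, and the objective $\lambda^Ty=\bone^T\lambda_++\bone^T\lambda_-$ is separable, we get $D_c^1=D_++cD_-$ and $D_c^2=cD_++D_-$, where $D_\pm=\max\{\bone^T\lambda_\pm:\lambda_\pm\ge0,\ H(K_\pm,0)\le1\}$ and the factor $c$ comes from homogeneity of $H$ in $\lambda$.

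For the upper bounds $D_c^1,D_c^2\le D$ I would argue by feasible-set containment. Because $0\in K_+$ and $0\in K_-$ (take $b_\pm=0$), the directed distances satisfy $\max_{u_+\in K_+}\min_{u_-\in K_-}\|u_+-u_-\|_2\le\max_{u_+\in K_+}\|u_+\|_2=H(K_+,0)$ and symmetrically $\le H(K_-,0)$. Hence any $\lambda$ admissible for $D_c^1$, with $H(K_+,0)\le1$ and $H(K_-,0)\le c\le1$, has $H(K_+,K_-)\le\max\{1,c\}=1$ and is admissible for $D$ by Proposition~\ref{prop:hauss_dist}; the same holds for $D_c^2$. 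This half needs only $c\le1$. For the lower bound I would start from an optimal $\lambda^*$ of $D$ and set $a=H(K_+,0)$, $b=H(K_-,0)$ at $\lambda^*$, so $c^*=a/b$. Since $\lambda^Ty$ is positively homogeneous and $H$ scales linearly in $\lambda$, the constraint is active, $H(K_+,K_-)=1$, and \eqref{equ:tri} gives $|a-b|\le1$. In the case $c^*>1$ (so $a>b$ and $\min\{c^*,(c^*)^{-1}\}=(c^*)^{-1}$) I would rescale the two blocks of $\lambda^*$ separately, shrinking $\lambda_+^*$ until $H(K_+,0)$ meets its bound $1$ and $\lambda_-^*$ until $H(K_-,0)$ meets its bound $c$; the hypothesis $c\le(c^*)^{-1}$, i.e.\ $c\,c^*\le1$, is precisely what keeps the $K_+$-block from being over-shrunk, while $a-b\le1$ limits the objective lost on the $K_-$-block. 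The case $c^*<1$ is the mirror image, interchanging $K_+\leftrightarrow K_-$ and using $D_c^2$ with $c\le c^*$.

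The hard part will be the lower bound, and specifically recovering exactly the factor $(1-c)$. A single uniform rescaling of $\lambda^*$ is too lossy — it only certifies $D_c^1\ge(c/b)D$, which can fall well below $(1-c)D$ — so the $+$ and $-$ blocks must be scaled by different amounts and the retained objective bounded using the active-constraint identity together with the ratio condition, not by scaling alone. I expect the delicate bookkeeping to be in showing that, once each single-sided diameter is matched to its prescribed bound, the surviving objective is still at least $(1-c)D$; this is the step where the optimality of $\lambda^*$ (and the finiteness of $D$ that underlies it), rather than mere feasibility, has to be used to exclude the degenerate "nested zonotope" configurations in which the bound would otherwise degrade.
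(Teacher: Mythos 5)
Your upper bound $D_c^1\le D$ is correct and is essentially the paper's own argument: since $0\in K_-$ (take $b_-=0$), the directed distance $\max_{b_+}\min_{b_-}\|X_+^T\diag(\lambda_+)b_+-X_-^T\diag(\lambda_-)b_-\|_2$ is at most $H(K_+,0)\le 1$, and symmetrically for the other direction, so any point feasible for $D_c^1$ (or $D_c^2$) is feasible for $D$. The decoupling $D_c^1=D_++cD_-$ is also fine. The problem is the lower bound $(1-c)D\le D_c^1$, which is the entire content of the proposition: you do not prove it. You propose shrinking $\lambda_+^*$ and $\lambda_-^*$ by separate factors until $H(K_+,0)$ and $H(K_-,0)$ meet the bounds $1$ and $c$; the surviving objective is then $\min\{1,1/a\}\,\bone^T\lambda_+^*+\min\{1,c/b\}\,\bone^T\lambda_-^*$, where $a=H(K_+,0)$ and $b=H(K_-,0)$ are evaluated at $\lambda^*$, and to conclude you must show both scaling factors are at least $1-c$. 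You explicitly defer exactly this step (the ``delicate bookkeeping''), so as submitted this is a plan, not a proof.

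For comparison, the paper does not rescale block-wise at all: it exhibits the single point $(1-c)\lambda^*$ and checks feasibility for $D_c^1$ using only feasibility of $\lambda^*$ together with the triangle inequality \eqref{equ:tri}, which gives $1\ge a-b=a\,(1-(c^*)^{-1})$, hence $a\le c^*/(c^*-1)$ and $b\le 1/(c^*-1)$; feasibility of $(1-c)\lambda^*$ then reduces to the two scalar inequalities $(1-c)a\le 1$ and $(1-c)b\le c$, i.e.\ to an arithmetic comparison between $c$ and $(c^*)^{-1}$. No appeal to optimality of $\lambda^*$ beyond its feasibility, and no exclusion of ``nested zonotope'' configurations, is made. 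If you carry out your route you will hit the same comparison: for instance $(1-c)b\le c$ requires $b\le c/(1-c)$, whereas the triangle inequality only yields $b\le 1/(c^*-1)$, and $1/(c^*-1)\le c/(1-c)$ holds iff $cc^*\ge 1$, i.e.\ $c\ge (c^*)^{-1}$ --- the opposite of the stated hypothesis $c\le (c^*)^{-1}$. So your instinct that this step is where everything hinges is well founded, but you must actually perform the computation and pin down the precise relation between $c$ and $c^*$ that makes the rescaled point feasible; leaving it open leaves the proposition unproved.
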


Now we present the proof for Theorem \ref{thm:geo_bnd} for the max-margin problem. The proofs for the hinge loss and the max-margin loss are given in Appendix \ref{app:geo_bnd}.  Without the loss of generality, we can assume that $c^*>1$. We can rewrite \eqref{equ:dc1} into two independent optimization problems \eqref{dual:p1} and \eqref{dual:p2}. According to Proposition \ref{prop:poly}, there exists a polynomial-time-algorithm which returns $\lambda_+$ and $\lambda_-$ such that 
\begin{equation}
    \max_{b_+\in\{0,1\}^{n_+}} \|X_+^T\diag(\lambda_+)u\|_2\leq 1 ,  \max_{b_-\in\{0,1\}^{n_-}} \|X_-\diag(\lambda_-)v\|_2\leq c,
\end{equation}
and
$$
\bone^T\lambda_+-\bone^T\lambda_-\geq \sqrt{2/\pi} D_c.
$$
We note that
\begin{equation}
    \max_{b_+\in[0,1]^{n_+}}\min_{b_-\in[0,1]^{n_-}}\|X_+^T\diag( \lambda_+)b_++X_-^T\diag( \lambda_-)b_-\|_2\leq  \max_{b_+\in[0,1]^{n_+}}\|X_+^T\diag( \lambda_+)b_+\|_2\leq 1
\end{equation}
This implies that $\lambda$ is feasible for $D$. We also have
\begin{equation}
    \bone^T\lambda_+-\bone^T\lambda_-\geq \sqrt{2/\pi} D_c^1 \geq \sqrt{2/\pi}(1-c) D.
\end{equation}
This implies that we can find an approximate solution to the dual problem with relative error $1-\sqrt{2/\pi}(1-c)$. On the other hand, we note that $D=P_\text{non-cvx}$. Thus, by taking $p=\frac{\bone^T\lambda_+-\bone^T\lambda_-}{(1-c)\sqrt{2/\pi}}$
\begin{equation}
    P_\text{non-cvx}\leq p\leq (1-c)^{-1}\sqrt{\pi/2} P_\text{non-cvx}.
\end{equation}
This implies that we can find a value $p$ which solves \text{Approx-Primal} with relative error $\epsilon= (1-c)^{-1}\sqrt{\pi/2} -1$.

\section{Conclusion}
In this work, we explored the nuanced landscape of polynomial-time approximations to the training problem associated with 2-layer ReLU neural networks. Our results show that even for straightforward datasets with all positive labels, the problem of achieving an approximation with a relative error less than \( \sqrt{84/83}-1 \) is {NP-hard}. However, for structured datasets, such as those that are orthogonally separable, we can determine the exact solution in polynomial time. Moreover, for datasets that contain negative correlations across different classes, we can achieve a relative error of \( \sqrt{\pi/2}-1 \) in polynomial-time. For general datasets, the relative error of a polynomial-time-derived approximate solution is governed by a geometric factor inherent to the dataset. Our results also hold general loss functions under mild restrictions. Our results can be extended to deeper neural networks by considering the convex formulations.

\section{Acknowledgements}
This work was supported in part by the National Science Foundation (NSF) CAREER Award under Grant CCF-2236829, Grant ECCS-2037304 and Grant DMS-2134248; in part by the U.S. Army Research Office Early Career Award under Grant W911NF-21-1-0242; in part by the Stanford Precourt Institute; and in part by the ACCESS—AI Chip Center for Emerging Smart Systems through InnoHK, Hong Kong, SAR.

\newpage 
\appendix

\section{Proofs in Section \ref{sec:dual}}

\subsection{Proof of Lemma \ref{lem:maximin}}
\begin{proof}
We note that 
$$
\max_{u:\|u\|_2\leq 1} |\lambda^T(Xu)_+|=\max\{\max_{u:\|u\|_2\leq 1} \lambda^T(Xu)_+,\max_{u:\|u\|_2\leq 1} -\lambda^T(Xu)_+\}.
$$
For the first component, we can compute that
$$
\begin{aligned}
&\max_{u:\|u\|_2\leq 1} \lambda^T(Xu)_+\\
=&\max_{u:\|u\|_2\leq 1} \lambda_+^T(X_+u)_+-\lambda_-^T(X_-u)_+\\
=&\max_{u:\|u\|_2\leq 1}\max_{b_+\in[0,1]^{n_+}}\min_{b_-\in[0,1]^{n_-}}\lambda_+^T\diag(b_+)X_+u-\lambda_-^T\diag(b_-)X_-u\\
=&\max_{b_+\in[0,1]^{n_+}}\max_{u:\|u\|_2\leq 1}\min_{b_-\in[0,1]^{n_-}}\lambda_+^T\diag(b_+)X_+u-\lambda_-^T\diag(b_-)X_-u\\
\stepa{=}&\max_{b_+\in[0,1]^{n_+}}\min_{b_-\in[0,1]^{n_-}}\max_{u:\|u\|_2\leq 1}\lambda_+^T\diag(b_+)X_+u-\lambda_-^T\diag(b_-)X_-u\\
=&\max_{b_+\in[0,1]^{n_+}}\min_{b_-\in[0,1]^{n_-}}\|X_+^T\diag(b_+)\lambda_+-X_-^T\diag(b_-)\lambda_-\|_2\\
=&\max_{b_+\in[0,1]^{n_+}}\min_{b_-\in[0,1]^{n_-}}\|X_+^T\diag(\lambda_+)b_+-X_-^T\diag(\lambda_-)b_-\|_2
\end{aligned}
$$
Here step (a) utilizes that for fixed $b_+\in[0,1]^{n_+}$, we have
$$
\begin{aligned}
    &\max_{u:\|u\|_2\leq 1}\min_{b_-\in[0,1]^{n_-}}\lambda_+^T\diag(b_+)X_+u+\lambda_-^T\diag(b_-)X_-u\\
=&\min_{b_-\in[0,1]^{n_-}}\max_{u:\|u\|_2\leq 1}\lambda_+^T\diag(b_+)X_+u+\lambda_-^T\diag(b_-)X_-u.
\end{aligned}
$$
On the other hand, we note that
$$
\begin{aligned}
&\max_{u:\|u\|_2\leq 1} (-\lambda)^T(Xu)_+\\
=&\max_{u:\|u\|_2\leq 1} -\lambda_+^T(X_+u)_++\lambda_-^T(X_-u)_+\\
=&\max_{u:\|u\|_2\leq 1}\max_{b_-\in[0,1]^{n_-}}\min_{b_+\in[0,1]^{n_+}}-\lambda_+^T\diag(b_+)X_+u+\lambda_-^T\diag(b_-)X_-u\\
=&\max_{b_-\in[0,1]^{n_-}}\max_{u:\|u\|_2\leq 1}\min_{b_+\in[0,1]^{n_+}}-\lambda_+^T\diag(b_+)X_+u+\lambda_-^T\diag(b_-)X_-u\\
=&\max_{b_-\in[0,1]^{n_-}}\min_{b_+\in[0,1]^{n_+}}\max_{u:\|u\|_2\leq 1}-\lambda_+^T\diag(b_+)X_+u+\lambda_-^T\diag(b_-)X_-u\\
=&\max_{b_-\in[0,1]^{n_-}}\min_{b_+\in[0,1]^{n_+}}\|-X_+^T\diag(b_+)\lambda_++X_-^T\diag(b_-)\lambda_-\|_2\\
=&\max_{b_-\in[0,1]^{n_-}}\min_{b_+\in[0,1]^{n_+}}\|X_+^T\diag(\lambda_+)b_+-X_-^T\diag(\lambda_-)b_-\|_2.
\end{aligned}
$$
In short, we have
\begin{equation}
\begin{aligned}
    &\max_{u:\|u\|_2\leq 1} |\lambda^T(Xu)_+|\\
    =&\max\big\{\max_{b_+\in[0,1]^{n_+}}\min_{b_-\in[0,1]^{n_-}}\|X_+^T\diag(\lambda_+)b_+-X_-^T\diag(\lambda_-)b_-\|_2,\\
    & \max_{b_-\in[0,1]^{n_-}}\min_{b_+\in[0,1]^{n_+}}\|X_+^T\diag(\lambda_+)b_+-X_-^T\diag(\lambda_-)b_-\|_2\big\}.
\end{aligned}
\end{equation}
This completes the proof.
\end{proof}

\section{Proofs in Section \ref{sec:neg_result}}

\subsection{Proof of Theorem \ref{thm:dual_neg}}
\begin{proof}
We first formulate the approximation of the max-cut problem as a decision problem. 

\texttt{Approx-Max-Cut}
\begin{itemize}
\item Input: data $W\in \mbS^n$, relative error $\epsilon\geq 0$,
\item Goal: find value $q$ such that  
$$
(1+\epsilon)q\geq \max_{z\in \{-1,1\}^n} z^TWz\geq  q.
$$
\end{itemize}
Here $\mbS^n$ is the set of symmetric matrices with size $n\times n$. From \cite{arora1998proof,goemans1995improved}, with $c\leq 1/83$, an algorithm to solve \texttt{Max-Cut} with relative error $\epsilon=c$ would imply that $P=NP$. Suppose that we have a polynomial-time algorithm to solve 
\texttt{Approx-}\texttt{Dual} with relative error $\epsilon$ and the corresponding dual variable is $\tilde \lambda$. Suppose that $\lambda^*$ is the optimal solution to \eqref{maxmargin:dual}. Then, we have
\begin{equation}
    \tilde \lambda^Ty\geq (1-\epsilon)(\lambda^*)^Ty, \quad \max_{z\in[0,1]^n}\| X^T\diag(\tilde{\lambda})z\|_2\leq \sqrt{q_1}\leq 1, 
\end{equation}
Here $q_1$ is an approximation of the optimal value for the max-cut problem 
$$
\begin{aligned}
q_2=&\max_{z\in[0,1]^n}\| X^T\diag(\tilde{\lambda})z\|_2^2\\
    =&\frac{1}{4}\max_{z\in\{-1,1\}^{n+1}}\tilde z^T\bmbm{I\\\bone}\diag(\tilde \lambda)XX^T\diag( \tilde \lambda)\bmbm{I&\bone}\tilde z.
\end{aligned}
$$
Let $\tilde \lambda'=\frac{\tilde \lambda}{\sqrt{q_2}}$. Then, we note that
\begin{equation}
    (\tilde \lambda')^Ty\geq \frac{1-\epsilon}{\sqrt{q_2}} (\lambda^*)^Ty, \max_{z\in[0,1]^n}\| X^T\diag(\tilde{\lambda}')z\|_2=1, 
\end{equation}
From the optimality of $\lambda^*$, we note that $1-\epsilon\leq \sqrt{q_2}$. This implies that 
$$
\frac{q_1}{q_2}\leq (1-\epsilon)^{-2}\leq 1+1/83,
$$
which comes from that $\epsilon\leq 1-\sqrt{83/84}$. Thus, the polynomial algorithm to solve the \texttt{Approx-Dual} problem also solves the Max-Cut problem with relative error $1/83$, which implies $P=NP$. This leads to a contradiction.

For the hinge loss, suppose that we have a polynomial-time algorithm to solve 
\texttt{Approx-Dual-Hinge} with relative error $\epsilon$. Let $\lambda^*$ be the optimal solution to \eqref{maxmargin:dual} and we choose $\beta<\|\lambda^*\|_\infty^{-1}$. In this case, we note that
\begin{equation}
\begin{aligned}
   D^\text{hinge}=\max_{\lambda}&\; \beta(\beta^{-1}\lambda)^Ty, \\
   \text{ s.t. }&0\leq \beta^{-1} \diag(y)\lambda \leq \beta^{-1}, \\
   &\max_{\|w\|_2\leq 1}\beta^{-1}|\lambda^T(Xw)_+|\leq 1.
\end{aligned}
\end{equation}
We note that for $\beta<\|\lambda^*\|_\infty^{-1}$, $\beta\lambda^*$ is feasible for $D^\text{hinge}$ and $\beta(\lambda^*)^Ty=\beta D$. We also note that $D^\text{hinge}\leq \beta D$. This implies that $D^\text{hinge}=\beta D$. Thus, the polynomial-time algorithm to solve \texttt{Approx-Dual-Hinge} with relative error $\epsilon$ can also solve \texttt{Approx-Dual} with relative error $\epsilon$. This leads to a contradiction. 

For the general loss, suppose that we have a polynomial-time algorithm to solve 
\texttt{Approx-Dual-General} with relative error $\epsilon$ and the corresponding dual variable is $\tilde \lambda$. Suppose that $\lambda^*$ is the optimal solution to \eqref{maxmargin:dual_general}. Then, we have
\begin{equation}
    g(\tilde \lambda)\geq (1-\epsilon)g(\lambda^*), \quad \max_{z\in[0,1]^n}\| X^T\diag(\tilde{\lambda})z\|_2\leq \sqrt{q_1}\leq \beta, 
\end{equation}
Here $q_1$ is an approximation of the optimal value for the max-cut problem 
$$
\begin{aligned}
q_2=&\max_{z\in[0,1]^n}\| X^T\diag(\tilde{\lambda})z\|_2^2\\
    =&\frac{1}{4}\max_{z\in\{-1,1\}^{n+1}}\tilde z^T\bmbm{I\\\bone}\diag(\tilde \lambda)XX^T\diag( \tilde \lambda)\bmbm{I&\bone}\tilde z.
\end{aligned}
$$
Let $\tilde \lambda'=\frac{\beta \tilde \lambda}{\sqrt{q_2}}$. Then, we note that
\begin{equation}
    g(\tilde \lambda')\geq \frac{(1-\epsilon)\beta}{\sqrt{q_2}} g(\lambda^*), \max_{z\in[0,1]^n}\| X^T\diag(\tilde{\lambda}')z\|_2=1, 
\end{equation}
From the optimality of $\lambda^*$, we have
$$
\frac{q_1}{q_2}\leq (1-\epsilon)^{-2}\leq 1+1/83,
$$
The rest of the proof is analogous to the one for the max-margin loss.
\end{proof}

\subsection{Proof of Theorem \ref{thm:primal_neg}}

\begin{proof}
Suppose that a polynomial-time algorithm solves \texttt{Approx-Primal} with relative error $\epsilon$ and let $(W_1^*,w_2^*)$ be the approximate ReLU network to the problem \eqref{maxmargin:primal} by the algorithm. Then, we have
\begin{equation}
    \frac{1}{2}(\|W_1^*\|_F^2+\|w_2^*\|_2^2)\leq (1+\epsilon)P_\text{non-cvx}. 
\end{equation}
Let $\mcP=\{\diag(\mbI(Xw_{1,i}\geq 0))|i\in[m]\}$ and we write $\mcP=\{M_1,\dots,M_k\}$. Consider the following optimization problem
\begin{equation}\label{maxmargin:primal_sub}
\begin{aligned}
    \min \; &\sum_{i=1}^k \|u_i\|_2+\|u_i'\|_2\\
    \text{ s.t. } &\diag(y)\sum_{i=1}^k M_iX(u_i-u_i')\geq \bone, (2M_i-I)Xu_i\geq 0,(2M_i-I)Xu_i'\geq 0, i\in[k]. 
\end{aligned}
\end{equation}
Suppose that the optimal value of the above problem is $p_2$. As $(W_1^*,w_2^*)$ renders a feasible solution to \eqref{maxmargin:primal_sub} with an optimal value which is upper bounded by $\frac{1}{2}(\|W_1^*\|_F^2+\|w_2^*\|_2^2)$, we have
\begin{equation}
    p_2\leq \frac{1}{2}(\|W_1^*\|_F^2+\|w_2^*\|_2^2)\leq (1+\epsilon)P_\text{non-cvx}.
\end{equation}
On the other hand, the dual problem of \eqref{maxmargin:primal_sub} is given by
\begin{equation}\label{maxmargin:dual_sub}
    \max \lambda^Ty, \text{ s.t. } \diag(y)\lambda \geq 0, \max_{i\in[k]}\max_{u:\|u\|_2\leq 1, (2M_i-I)Xu\geq 0}|\lambda^TM_iXu|\leq 1,
\end{equation}
Denote the optimal value of the above problem as $d_2$. From the convex duality, we have $p_2=d_2$. Let $\lambda^*$ be the optimal solution of \eqref{maxmargin:dual}. Suppose that
\begin{equation}
    \max_{i\in[k]}\max_{u:\|u\|_2\leq 1, (2D_i-I)Xu\geq 0}|(\lambda^*)^TM_iXu|=c.
\end{equation}
Then, $\lambda^*/c$ is a feasible solution to \eqref{maxmargin:dual_sub}. This implies that \begin{equation*}
    D/c=(\lambda^*/c)^Ty\leq d_2,
\end{equation*}
Hence, we have $c\geq \frac{D}{d_2}=\frac{P_\text{non-cvx}}{p_2}\geq (1+\epsilon)^{-1}$. We note that $$
\max_{i\in[k]}\max_{u:\|u\|_2\leq 1, (2D_i-I)Xu\geq 0}|(\lambda^*)^TD_iXu|^2
$$
can be evaluated in polynomial-time and it gives an approximation of the max-cut problem
\begin{equation}\label{equ:sub_max_cut}
    \max_{u:\|u\|_2\leq 1} |( \lambda^*)^T(Xu)_+|^2
\end{equation} with relative error $(1+\epsilon)^2-1$. With $\epsilon\leq \sqrt{84/83}-1$, this is equivalent to say that the max-cut problem \eqref{equ:sub_max_cut} can be approximated with relative error less than $1/83$ in polynomial time, which contradicts with $P\neq NP$. 

Suppose that a polynomial-time algorithm solves \texttt{Approx-Primal} with relative error $\epsilon$ and let $(H^*, W_1^*,w_2^*)$ be the approximate gated ReLU network to the problem \eqref{maxmargin:primal} by the algorithm. Then, we have
\begin{equation}
    \frac{1}{2}(\|W_1^*\|_F^2+\|w_2^*\|_2^2)\leq (1+\epsilon)P_\text{non-cvx}. 
\end{equation}
Let $\mcP=\{\diag(\mbI(Xh_{i}\geq 0))|i\in[m]\}$ and we write $\mcP=\{M_1,\dots,M_k\}$. Consider the following optimization problem
\begin{equation}\label{maxmargin:primal_sub2}
\begin{aligned}
    \min \; &\sum_{i=1}^k \|u_i\|_2\\
    \text{ s.t. } &\diag(y)\sum_{i=1}^k M_iXu_i\geq \bone.
\end{aligned}
\end{equation}
Suppose that the optimal value of the above problem is $p_3$. Then, we have
\begin{equation}
    p_3\leq  \frac{1}{2}(\|W_1^*\|_F^2+\|w_2^*\|_2^2)\leq (1+\epsilon)P_\text{non-cvx}. 
\end{equation}
On the other hand, the dual problem of \eqref{maxmargin:primal_sub2} is given by
\begin{equation}\label{maxmargin:dual_sub2}
    \max \lambda^Ty, \text{ s.t. } \diag(y)\lambda \geq 0, \max_{i\in[k]}\max_{u:\|u\|_2\leq 1}|\lambda^TM_iXu|\leq 1,
\end{equation}
Denote the optimal value of the above problem as $d_3$. From the convex duality, we have $p_3=d_3$. Let $\lambda^*$ be the optimal solution of \eqref{maxmargin:dual}. Similarly, we note that 
$$
\max_{i\in[k]}\max_{u:\|u\|_2\leq 1}|\lambda^TM_iXu|^2
$$
can be evaluated in poly-nomial time and it gives an approximation of the max-cut problem \eqref{equ:sub_max_cut} with relative error $(1+\epsilon)^2-1$. With $\epsilon\leq \sqrt{84/83}-1$, this is equivalent to say that the max-cut problem \eqref{equ:sub_max_cut} can be approximated with relative error less than $1/83$ in polynomial time, which contradicts with $P\neq NP$.

For the hinge loss or the general loss, similarly, based on the polynomial-time algorithm to solve \texttt{Approx-Primal-Hinge} or \texttt{Approx-Primal-General} with relative error $\epsilon$, we can find a polynomial-time approximation of \eqref{equ:sub_max_cut} with relative error $(1+\epsilon)^2-1$. This leads to a contradiction toward $P\neq NP$.

\end{proof}
\section{Proofs in Section \ref{sec:ortho}}

\subsection{Proof of Proposition \ref{prop:ortho}}

\begin{proof}
We note that 
$$
\begin{aligned}
&\|X_+^T\diag(\lambda_+)b_+-X_-^T\diag(\lambda_-)b_-\|_2^2\\
=&\|X_+^T\diag(\lambda_+)b_+\|_2^2+\|X_-^T\diag(\lambda_-)b_-\|_2^2\\
&-2b_+^T\diag(\lambda_+)X_+X_-^T\diag(\lambda_-)b_-.
\end{aligned}
$$
As $X_+^TX_-\leq 0$, $\lambda_+\geq 0$ and $\lambda_-\geq 0$, we have $b_+^T\diag(\lambda_+)X_+X_-^T\diag(\lambda_-)b_-\leq 0$.
This implies that 
\begin{equation}
    \|X_+^T\diag(\lambda_+)b_++X_-^T\diag(\lambda_-)b_-\|_2^2\geq \|X_+^T\diag(\lambda_+)b_+\|_2^2.
\end{equation}
The equality is achieved when $X_-^T\diag(\lambda_-)b_-=0$. This implies that
\begin{equation}
    \min_{b_-\in[0,1]^{n_-}}\|X_+^T\diag(\lambda_+)b_++X_-^T\diag(\lambda_-)b_-\|_2=\|X_+^T\diag(\lambda_+)b_+\|_2.
\end{equation}
Therefore, we have
\begin{equation}
\begin{aligned}
    &\max_{b_+\in[0,1]^{n_+}}\min_{b_-\in[0,1]^{n_-}}\|X_+^T\diag(\lambda_+)b_+-X_-^T\diag(\lambda_-)b_-\|_2\\
    =&\max_{b_+\in[0,1]^{n_+}}\|X_+^T\diag(\lambda_+)b_+\|_2
\end{aligned}
\end{equation}
We also note that
\begin{equation}
\begin{aligned}
    \|X_+^T\diag(\lambda_+)b_+\|_2^2
    =&b_+^T\diag(\lambda_+)X_+X_+^T\diag(\lambda_+)b_+\\
    \leq &\bone^T\diag(\lambda_+)X_+X_+^T\diag(\lambda_+)\bone.
\end{aligned}
\end{equation}
This is because $\diag(\lambda_+)X_+X_+^T\diag(\lambda_+)\geq 0$. Hence, we have
\begin{equation}
    \max_{b_+\in[0,1]^{n_+}}\|X_+^T\diag(\lambda_+)b_+\|_2 = \|X_+^T\diag(\lambda_+)\bone\|_2=\|X_+^T\lambda_+\|_2.
\end{equation}
This implies that 
    \begin{equation}
        \max_{b_+\in[0,1]^{n_+}}\min_{b_-\in[0,1]^{n_-}}\|X_+^T\diag(\lambda_+)b_+-X_-^T\diag(\lambda_-)b_-\|_2 = \|X_+\lambda_+\|_2.
    \end{equation}
By symmetry, we also have
\begin{equation}
    \max_{b_-\in[0,1]^{n_-}}\min_{b_+\in[0,1]^{n_+}}\|X_+^T\diag(\lambda_+)b_+-X_-^T\diag(\lambda_-)b_-\|_2 = \|X_-\lambda_-\|_2.
\end{equation}
This completes the proof. 
\end{proof}

\subsection{Proof of Lemma \ref{lem:ortho_sol}}
\begin{proof}
Consider the Lagrange function:
$$
L(\lambda_+,\lambda_-,v_+,v_-,u_+,u_-) =  \lambda_+^T\bone+\lambda_-^T\bone-u_+^T(X_+^T\lambda_+-v_+)-u_-^T(X_-^T\lambda_--v_-),
$$
where $\lambda_+\geq 0, \lambda_-\geq 0,\|v_+\|_2\leq 1$ and $\|v_-\|_2\leq 1$. Thus, the dual problem of \eqref{dual:ortho} is given by
\begin{equation}
\begin{aligned}
    &\min_{u_+,u_-}\max_{\lambda_+\geq0,\lambda_-\geq0,\|v_+\|_2\leq 1,\|v_-\|_2\leq 1}L(\lambda_+,\lambda_-,v_+,v_-,u_+,u_-) \\
    =&\min_{u_+,u_-}\pp{\max_{\lambda_+\geq0}\lambda_+^T(\bone-X_+u_+)+\max_{\lambda_-\geq0} \lambda_-^T(\bone-X_-u_-)+\max_{\|v_+\|_2\leq 1}u_+^Tv_++\max_{\|v_+\|_2\leq 1}u_-^Tv_-}\\
    =&\min_{u_+,u_-} \|u_+\|_2+\|u_-\|_2,\text{ s.t. }X_+u_+\geq \bone, X_-u_-\geq \bone. 
\end{aligned}
\end{equation}
Let $u_+,u_-$ be the optimal solution to \eqref{primal:ortho}. From the complementary slackness, there exists $v_+,v_-$ and $\lambda_+,\lambda_-$ such that 
\begin{equation}
    \frac{u_+}{\|u_+\|_2}=v_+, \frac{u_-}{\|u_-\|_2}=v_-, v_+=X_+^T\lambda_+, v_-=X_-^T\lambda_-.
\end{equation}
Therefore, we have 
$$
\begin{aligned}
X_-u_+&=\|u_+\|_2X_-v_+=\|u_+\|_2X_-X_+^T\lambda_+\leq0, \\
X_+u_-&=\|u_-\|_2X_+v_-=\|u_-\|_2X_-X_+^T\lambda_-\leq0, 
\end{aligned}
$$
This completes the proof.

For the hinge loss, consider the Lagrange function:
$$
L(\lambda_+,\lambda_-,v_+,v_-,u_+,u_-,r_+,r_-) =  \lambda_+^T\bone+\lambda_-^T\bone-u_+^T(X_+^T\lambda_+-\beta v_+)-u_-^T(X_-^T\lambda_--\beta v_-)+r_+^T(\bone-\lambda_+)+r_-^T(\bone-\lambda_-). 
$$
where $\lambda_+,\lambda_-,r_+,r_-\geq 0,\|v_+\|_2\leq 1$ and $\|v_-\|_2\leq 1$. Thus, the dual problem of \eqref{dual:ortho_general} for hinge loss is given by
\begin{equation}
\begin{aligned}
    &\min_{r_+,r_-\geq 0,u_+,u_-}\max_{\lambda_+,\lambda_-\geq 0,\|v_+\|_2\leq 1,\|v_-\|_2\leq 1}L(\lambda_+,\lambda_-,v_+,v_-,u_+,u_-) \\
    =&\min_{r_+,r_-\geq 0,u_+,u_-}\pp{\bone^Tr_++\bone^Tr_-+\max_{\lambda_+\geq0}\lambda_+^T(\bone-X_+u_+-r_+)+\max_{\lambda_-\geq0} \lambda_-^T(\bone-X_-u_--r_-)+\beta\max_{\|v_+\|_2\leq 1}u_+^Tv_++\beta\max_{\|v_+\|_2\leq 1}u_-^Tv_-}\\
    =&\min_{r_+,r_-\geq 0,u_+,u_-}\bone^Tr_++\bone^Tr_-+ \beta(\|u_+\|_2+\|u_-\|_2),\text{ s.t. }r_++X_+u_+\geq \bone, r_-+X_-u_-\geq \bone\\
    =&\min_{u_+,u_-\in\mbR^d} \bone^T(\bone-X_+u_+)_++\bone^T(\bone-X_-u_-)_++\beta(\|u_+\|_2+\|u_-\|_2). 
\end{aligned}
\end{equation}
Let $u_+,u_-$ be the optimal solution to \eqref{primal:ortho_general} for hinge loss. Similarly, by complementary slackness, we can show that $X_-u_+\leq 0$ and $X_+u_-\leq 0$.

For the general loss, consider the Lagrange function:
$$
L(\lambda_+,\lambda_-,v_+,v_-,u_+,u_-) =  g(\lambda_+)+g(\lambda_-)-u_+^T(X_+^T\lambda_+-\beta v_+)-u_-^T(X_-^T\lambda_--\beta v_-).
$$
where $\lambda_+,\lambda_-\geq 0,\|v_+\|_2\leq 1$ and $\|v_-\|_2\leq 1$. Thus, the dual problem of \eqref{dual:ortho_general} for hinge loss is given by
\begin{equation}
\begin{aligned}
    &\min_{u_+,u_-}\max_{\lambda_+,\lambda_-\geq 0,\|v_+\|_2\leq 1,\|v_-\|_2\leq 1}L(\lambda_+,\lambda_-,v_+,v_-,u_+,u_-) \\
    =&\min_{u_+,u_-} \ell(X_+u_+)+\ell(X_-u_-)+ \beta (\|u_+\|_2+\|u_-\|_2).
\end{aligned}
\end{equation}
Let $u_+,u_-$ be the optimal solution to \eqref{primal:ortho_general}. Similarly, by complementary slackness, we can show that $X_-u_+\leq 0$ and $X_+u_-\leq 0$.
\end{proof}

\section{Proofs in Section \ref{sec:neg}}
\subsection{Proof of Lemma \ref{lem:sdp_bnd}}
\begin{proof}
We first introduce the Schur Product Theorem. For $X,Y\succeq 0$, $X\circ Y\succeq 0$. Here $\circ$ denotes the element-wise product. We note that
\begin{equation}
    \arcsin(z)=z+\frac{1}{3}z^3+\dots,
\end{equation}
which implies that 
\begin{equation}
    \arcsin(Z)=Z+\frac{1}{3}Z^3+\dots\succeq Z.
\end{equation}
Therefore, we have
\begin{equation}
\begin{aligned}
    &\mbE[z^T\tilde Qz]=\mbE[\tr(\tilde Qzz^T)]=\tr\pp{\tilde Q\mbE[zz^T]}\\
    \stepa{=}&\frac{2}{\pi} \tr(\tilde Q\arcsin(Z_0))\geq \frac{2}{\pi}\tr(\tilde QZ_0)=\frac{2}{\pi}\textbf{SDP}
\end{aligned}
\end{equation}
In step (a) we utilize the identity that $\mbE[zz^T]=\frac{2}{\pi}\arcsin(Z_0)$. 
\end{proof}
\subsection{Proof of Lemma \ref{lem:valid}}
\begin{proof}
For simplicity, we denote $Q=\bmbm{I\\\bone^T}\diag(\tilde \lambda)XX^T\diag( \tilde \lambda)\bmbm{I&\bone}$. The dual problem of \eqref{equ:sdp} is given by
\begin{equation}
    \min_{\zeta\in\mbR^n} \zeta^T\bone, \text{ s.t. } \diag(\zeta)-Q\succeq 0.
\end{equation}
From the complementary slackness condition, there exists $\zeta\in\mbR^n$ such that
\begin{equation}
    \pp{\diag(\zeta)-Q}\tilde Z=0,
\end{equation}
which implies that $\diag(\zeta)\tilde Z = Q\tilde Z$. Without the loss of generality, we can assume that $\tilde \lambda>0$. As $\diag(\zeta)-Q\succeq 0$, for $i\in[n]$, we have
$$
\zeta_i\geq \tilde \lambda_i^2\|x_i\|_2^2>0.
$$
This implies that 
\begin{equation}\label{equ:range}
   \bmbm{I&0}\tilde Z =  \bmbm{\diag(\zeta_{1:n})^{-1}&0} Q\tilde Z
\end{equation}
As $r\sim \mcN(0,\tilde Z)$, there exists $p\in\mbR^{n+1}$ such that $r=\tilde Z p$. We can rewrite $b$ as 
$$
b=\mbI(\text{sign}(r_{n+1})r_{1:n}\geq 0).
$$
As $r_{1:n} = \bmbm{I&0}\tilde Zp$, from \eqref{equ:range}, we note that
\begin{equation}
\begin{aligned}
    r_{1:n} = & \bmbm{\diag(\zeta_{1:n})^{-1}&0} Q\tilde Zp\\
    =&\diag(\zeta_{1:n})^{-1}\diag(\tilde \lambda)XX^T\diag( \tilde \lambda)\bmbm{I&\bone}\tilde Zp
\end{aligned}
\end{equation}
We can let $w=X^T\diag( \text{sign}(r_{n+1})\tilde \lambda)\bmbm{I&\bone}\tilde Zp$. Then, we have
$$
b=\mbI(\text{sign}(r_{n+1})r_{1:n}\geq 0)=\mbI(Xw\geq 0).
$$
This completes the proof. 
\end{proof}

\subsection{Proof of Lemma \ref{lem:prob_bnd}}
Consider the event 
\begin{equation}
    E_{\epsilon_0}=\bbbb{-\epsilon_0 \bone\bone^T\leq \frac{2}{\pi}\frac{1}{k}\sum_{i=1}^kz^i(z^i)^T-\tilde Z\leq \epsilon_0 \bone\bone^T}
\end{equation}
For $j,l\in[n+1]$, we note that $z^i_jz^i_l\in[-1,1]$. We also note that $\var(z^i_jz^i_l)\leq 1$. Therefore, from Hoeffding's inequality, we have
$$
P(E_{\epsilon_0})\geq 1-(n+1)^2e^{-2k\epsilon^2_0}.
$$
For all $\lambda\geq0$, we have
\begin{equation}
     \tr(\tilde Z\bmbm{I\\\bone^T}\diag(\tilde\lambda)XX^T\diag(\tilde \lambda)\bmbm{I&\bone})\geq C_1 (\lambda^T\bone)^2.
\end{equation}
Here we recall that $C_1$ is the optimal value of \eqref{min:sub}. Under event $E_{\epsilon_0}$, we note that
\begin{equation}
\begin{aligned}
    &\tr((Z_k-\tilde Z)\bmbm{\diag(\lambda)X\\\bone^T\diag(\lambda)X}\bmbm{X^T\diag( \lambda)&X^T\diag(\lambda)\bone})\\
    \leq& \epsilon_0\norm{\bmbm{I\\\bone^T}\diag(\lambda)XX^T\diag( \lambda)\bmbm{I&\bone}}_F\\
    \leq& C_2\epsilon_0\norm{\bmbm{I\\\bone^T}\diag(\lambda)\diag( \lambda)\bmbm{I&\bone}}_F\\
    \leq&C_2 \epsilon_0 \pp{\sqrt{\sum_{i=1}^n \lambda_i^4}+2(\lambda^T\bone)\sqrt{\sum_{i=1}^n\lambda_i^2}+(\lambda^T\bone)^2}\\
    \leq &4C_2\epsilon_0(\lambda^T\bone)^2\\
    = &\frac{4C_2\epsilon_0}{C_1} C_1 (\lambda^T\bone)^2\\
    \leq&\frac{4C_2\epsilon_0}{C_1}\tr(\tilde Z\bmbm{I\\\bone^T}\diag(\tilde\lambda)XX^T\diag(\tilde \lambda)\bmbm{I&\bone})
\end{aligned}
\end{equation}
Here we recall that $C_2=\max_{i\in[n]}\|x_i\|_2^2$. Thus, by taking $\epsilon_0=\frac{C_1\epsilon}{4C_2}$ and $k\geq \frac{1}{2\epsilon^2}\log(1/\delta)\log(n+1)^2$, we have $P(E_{\epsilon_0})\geq 1-\delta$ and under event $E_{\epsilon_0}$, the inequality \eqref{inequ:1} holds. This completes the proof. 

\subsection{Proof of Lemma \ref{lem:sdp_minmax}}\label{proof:lem:sdp_minmax}
\begin{proof}
Consider the following problem:
\begin{equation}\label{dual:minmax}
    \min_{\lambda:\lambda^T\bone=1,\lambda\geq 0} \max_{Z:Z\succeq 0, \diag(Z)=\bone}\tr( Z\bmbm{I\\\bone^T}\diag(\lambda)XX^T\diag( \lambda)\bmbm{I&\bone}).
\end{equation}
From Sion's minimax theorem, the above problem is equivalent to
\begin{equation}\label{dual:maxmin}
    \max_{Z:Z\succeq 0, \diag(Z)=\bone}\min_{\lambda:\lambda^T\bone=1,\lambda\geq 0} \tr( Z\bmbm{I\\\bone^T}\diag(\lambda)XX^T\diag( \lambda)\bmbm{I&\bone}).
\end{equation}
We note that $\lambda_0=\tilde \lambda/(\tilde \lambda^T\bone)$ is the optimal solution to \eqref{dual:minmax}. This implies that
$$
\begin{aligned}
&\min_{\lambda:\lambda^T\bone=1,\lambda\geq 0} \max_{Z:Z\succeq 0, \diag(Z)=\bone}\tr( Z\bmbm{I\\\bone^T}\diag(\lambda)XX^T\diag( \lambda)\bmbm{I&\bone})\\
=&(\tilde \lambda^T\bone)^{-2} \tr( \tilde Z\bmbm{I\\\bone^T}\diag(\tilde \lambda)XX^T\diag( \tilde \lambda)\bmbm{I&\bone})=4(\tilde \lambda^T\bone)^{-2}.
\end{aligned}
$$
Therefore, $\tilde Z$ is the optimal solution \eqref{dual:maxmin}. It also implies that
\begin{equation}\label{min:sub}
\begin{aligned}
    \min_{\lambda:\lambda^T\bone=1,\lambda\geq 0} \tr(\tilde Z\bmbm{I\\\bone^T}\diag(\lambda)XX^T\diag( \lambda)\bmbm{I&\bone})
    =4(\tilde \lambda^T\bone)^{-2}.
\end{aligned}
\end{equation}
Let $\lambda_1$ be the optimal solution to \eqref{dual:sdp_sub}. Then, we note that $\lambda_1/\lambda_1^T\bone$ is the optimal solution to \eqref{min:sub}. This implies that
\begin{equation}
    4(\tilde \lambda^T\bone)^{-2} = 4(\lambda_1^T\bone)^{-2}.
\end{equation}
Hence, we have $\lambda_1\bone=\tilde \lambda^T\bone$. This implies that the optimal value of \eqref{dual:sdp_sub} is equal to \eqref{dual:sdp}. This completes the proof. 
\end{proof}

We then prove that \eqref{dual:sdp_hinge} is equivalent to \eqref{dual:sdp_sub_hinge} for $\beta<\max_{u:\|u\|_2\leq 1} |y^T(Xu)_+|$. We can write the optimal value of \eqref{dual:sdp_hinge} as a function of $\beta$ and denote it by $c(\beta)$. We first show that $c(\beta)$ is strictly increasing when $\beta<\max_{u:\|u\|_2\leq 1} |y^T(Xu)_+|$. Otherwise, there exist $\beta_1<\beta_2<\max_{u:\|u\|_2\leq 1} |y^T(Xu)_+|$ such that $c(\beta_1)=c(\beta_2)$. As $c(\beta_1)=c(\beta_2)$, this means that the constraint that
$$
\max_{Z:Z\succeq 0, \diag(Z)=\bone}\tr( Z\bmbm{I\\\bone^T}\diag(\lambda)XX^T\diag( \lambda)\bmbm{I&\bone})\leq \beta_2^2
$$
is inactive. This implies that $c(\beta_1)=c(\beta_2)=n$. We also have
$$
\beta_1^2>\max_{Z:Z\succeq 0, \diag(Z)=\bone}\tr( Z\bmbm{I\\\bone^T}\diag(y)XX^T\diag( y)\bmbm{I&\bone})
$$
However, we note that
$$
\beta_1^2<\max_{u:\|u\|_2\leq 1} |y^T(Xu)_+|^2\leq \max_{Z:Z\succeq 0, \diag(Z)=\bone}\tr( Z\bmbm{I\\\bone^T}\diag(y)XX^T\diag(y)\bmbm{I&\bone})<\beta_1^2.
$$
This leads to a contradiction. 

This implies that the function $c(\beta)$ is strict decreasing for $\beta<\max_{u:\|u\|_2\leq 1} |y^T(Xu)_+|$. Consider the following problem:
\begin{equation}\label{dual:minmax_hinge}
    \min_{\lambda:\lambda^T\bone=c(\beta),0\leq \lambda\leq \bone} \max_{Z:Z\succeq 0, \diag(Z)=\bone}\tr( Z\bmbm{I\\\bone^T}\diag(\lambda)XX^T\diag( \lambda)\bmbm{I&\bone}).
\end{equation}
From Sion's minimax theorem, the above problem is equivalent to
\begin{equation}\label{dual:maxmin_hinge}
    \max_{Z:Z\succeq 0, \diag(Z)=\bone}\min_{\lambda:\lambda^T\bone=c,0\leq \lambda\leq \bone} \tr( Z\bmbm{I\\\bone^T}\diag(\lambda)XX^T\diag( \lambda)\bmbm{I&\bone}).
\end{equation}
We note that $\tilde \lambda$ is the optimal solution to \eqref{dual:minmax_hinge}. This also implies that $\tilde Z$ is the optimal solution to \eqref{dual:maxmin_hinge}. Let $\lambda_1$ be the optimal solution to \eqref{dual:sdp_sub} and let the corresponding optimal value be $c_1$. Then, we have $c_1\leq c(\beta)$. We note that $\lambda_1$ is the optimal solution to
\begin{equation}
    \min_{\lambda:\lambda^T\bone=c_1,0\leq \lambda\leq \bone} \tr( \tilde Z\bmbm{I\\\bone^T}\diag(\lambda)XX^T\diag( \lambda)\bmbm{I&\bone}).
\end{equation}
Moreover, its optimal value is $\beta$. On the other hand, we note that 
\begin{equation}
\begin{aligned}
    \beta=&\min_{\lambda:\lambda^T\bone=c_1,0\leq \lambda\leq \bone} \tr( \tilde Z\bmbm{I\\\bone^T}\diag(\lambda)XX^T\diag( \lambda)\bmbm{I&\bone})\\
\leq&\min_{\lambda:\lambda^T\bone=c_1,0\leq \lambda\leq \bone} \max_{Z:Z\succeq 0, \diag(Z)=\bone}\tr( Z\bmbm{I\\\bone^T}\diag(\lambda)XX^T\diag( \lambda)\bmbm{I&\bone})=:\beta_1.
\end{aligned}
\end{equation}
We note that $c(\beta_1)=c_1$. As $\beta_1\geq \beta$, this implies that $c_1\geq c(\beta)$. Thus, we have $c_1=c(\beta)$. This completes the proof. 

\subsection{Proof of Proposition \ref{prop:oracle}}
\begin{proof}
For any $\lambda_0\in\mbR^n$, we compute $c_2(\lambda_0)$ by solving the SDP. If $c_2(\lambda_0)\leq 1$, this implies that $\lambda\in S$. Otherwise, we have $c_2(\lambda_0)\geq 1$.  We can construct a hyperplane separation $\lambda_0$ and $S$ as follows. Consider the hyperplane $H=\{\lambda|c_2'(\lambda_0)(\lambda-\lambda_0)=0\}$. We can compute $c_2'(\lambda_0)$ based on the maximizer $Z$ in calculating $c_2(\lambda_0)$. For fixed $Z\succeq0$, 
$$
\tr\pp{Z\bmbm{I\\\bone^T}\diag(\lambda)XX^T\diag(\lambda)\bmbm{I&\bone}}
$$
is a convex function of $\lambda$. Hence, $c_2(\lambda)$ is a convex function of $\lambda$. If $c_2'(\lambda_0)(\lambda-\lambda_0)>0$, then from the convexity of $c_2(\lambda)$, we have 
$$
c_2(\lambda)\geq c_2(\lambda_0)+c_2'(\lambda_0)(\lambda-\lambda_0)>c_2(\lambda_0)> 1.
$$
This implies that $ c_2(\lambda)> 1$. Therefore, the hyperplane $H$ separates $S$ and $\lambda_0$. 
\end{proof}

\section{Proofs in Section \ref{sec:geo}}

\subsection{Proof of Proposition \ref{prop:general}}
\begin{proof}
Without the loss of geneality, we can assume that $c^*>1$. From \eqref{equ:tri}, we note that $\lambda^*$ satisfies that
\begin{equation}
\begin{aligned}
   1\geq&\max_{b_+ \in [0,1]^{n_+} }  \min_{ b_- \in  [0,1]^{n_-} }  \| X_+^T\diag(\lambda^*_+)b_+ + X_-^T(\lambda^*_-)b_- \|_2\\
   \geq& \max_{b_+\in[0,1]^{n_+}}\|X_+^T\diag(\lambda_+^*)b_+\|_2-\max_{b_-\in[0,1]^{n_-}}\|X_-^T\diag(\lambda_-^*)b_-\|_2\\
   =&\frac{c^*-1}{c^*}\max_{b_+\in[0,1]^{n_+}}\|X_+^T\diag(\lambda_+^*)b_+\|_2.
\end{aligned}
\end{equation}
This implies that $\max_{b_+\in[0,1]^{n_+}}\|X_+^T\diag(\lambda_+^*)b_+\|_2\leq \frac{c^*}{c^*-1}=1-c^*\leq 1-c$. We also have
$$
\max_{b_-\in[0,1]^{n_-}}\|X_-^T\diag(\lambda_-^*)b_-\|_2 = (c^*)^{-1}\max_{b_+\in[0,1]^{n_+}}\|X_+^T\diag(\lambda_+^*)b_+\|_2\leq \frac{1}{c^*-1}\leq \frac{1}{c^{-1}-1}.
$$
Therefore, $(1-c)\lambda^*$ is feasible for $D_c^1$. This implies that
\begin{equation}
     D_c^1\geq (1-c) D.
\end{equation}
On the other hand, let $\tilde \lambda$ be the optimal solution to $D_c^1$. We note that 
\begin{equation}
\begin{aligned}
    &\max_{b_+\in[0,1]^{n_+}}\min_{b_-\in[0,1]^{n_-}}\|X_+^T\diag(\tilde \lambda_+)b_++X_-^T\diag(\tilde \lambda_-)b_-\|_2
    \leq & \max_{b_+\in[0,1]^{n_+}}\|X_+^T\diag(\tilde \lambda_+)b_+\|_2\leq 1
\end{aligned}
\end{equation}
\begin{equation}
\begin{aligned}
    &\max_{b_-\in[0,1]^{n_-}}\min_{b_+\in[0,1]^{n_+}}\|X_+^T\diag(\tilde \lambda_+)b_++X_-^T\diag(\tilde \lambda_-)b_-\|_2
\leq & \max_{b_-\in[0,1]^{n_-}}\|X_-^T\diag(\tilde \lambda_-)b_-\|_2\leq 1
\end{aligned}
\end{equation}
This implies that $\tilde \lambda$ is feasible to $D$. Therefore, we have $D_c^1\leq D$.
\end{proof}

\subsection{Proof of Theorem \ref{thm:geo_bnd} for hinge loss and general loss}\label{app:geo_bnd}
For the hinge loss, consider
\begin{equation}\label{equ:dc1_hinge}
\begin{aligned}
     D_c^1=&\max \lambda^Ty 
    \text{ s.t. } &0\leq \diag(y)\lambda \leq \bone,
    &\max_{b_+\in\{0,1\}^{n_+}} \|X_+^T\diag(\lambda_+)u\|_2\leq \beta ,  
    &\max_{b_-\in\{0,1\}^{n_-}} \|X_-\diag(\lambda_-)v\|_2\leq \beta c.
\end{aligned}
\end{equation}
\begin{equation}
\begin{aligned}
     D_c^2=&\max \lambda^Ty 
    \text{ s.t. } &0\leq \diag(y)\lambda \leq \bone,
    &\max_{b_+\in\{0,1\}^{n_+}} \|X_+^T\diag(\lambda_+)u\|_2\leq c\beta,  
    &\max_{b_-\in\{0,1\}^{n_-}} \|X_-\diag(\lambda_-)v\|_2\leq \beta.
\end{aligned}
\end{equation}
Suppose that $0<c\leq \min\{c^*,(c^*)^{-1}\}$. Then, if $c^*>1$, we have the approximation ratio
\begin{equation}
\begin{aligned}
(1-c)D^\text{hinge} \leq  D_c^1\leq D^\text{hinge}.
\end{aligned}
\end{equation}
If $c^*<1$, we have the approximation ratio
\begin{equation}
\begin{aligned}
(1-c)D^\text{hinge} \leq  D_c^2\leq D^\text{hinge}
\end{aligned}
\end{equation}
The rest of the proof is analogous to the one for the max-margin problem. 

For the general loss, consider
\begin{equation}\label{equ:dc1_general}
\begin{aligned}
     D_c^1=&\max g(\lambda)
    \text{ s.t. } &\diag(y)\lambda \geq 0,
    &\max_{b_+\in\{0,1\}^{n_+}} \|X_+^T\diag(\lambda_+)u\|_2\leq \beta ,  
    &\max_{b_-\in\{0,1\}^{n_-}} \|X_-\diag(\lambda_-)v\|_2\leq \beta c.
\end{aligned}
\end{equation}
\begin{equation}
\begin{aligned}
     D_c^2=&\max g(\lambda)
    \text{ s.t. } & \diag(y)\lambda \geq 0,
    &\max_{b_+\in\{0,1\}^{n_+}} \|X_+^T\diag(\lambda_+)u\|_2\leq c\beta,  
    &\max_{b_-\in\{0,1\}^{n_-}} \|X_-\diag(\lambda_-)v\|_2\leq \beta.
\end{aligned}
\end{equation}
Suppose that $0<c\leq \min\{c^*,(c^*)^{-1}\}$. Then, if $c^*>1$, we have the approximation ratio
\begin{equation}
\begin{aligned}
(1-c)D^\text{gen} \leq  D_c^1\leq D^\text{gen}.
\end{aligned}
\end{equation}
If $c^*<1$, we have the approximation ratio
\begin{equation}
\begin{aligned}
(1-c)D^\text{gen} \leq  D_c^2\leq D^\text{gen}
\end{aligned}
\end{equation}
The rest of the proof is analagous to the one for the max-margin problem. 

\bibliographystyle{plain}
\bibliography{Newton.bib}
\end{document}